\documentclass[10pt,twocolumn,letterpaper]{article}

\usepackage{cvpr}              

\usepackage{amsmath,amsfonts,bm}

\def\eqref#1{equation~\ref{#1}}

\def\1{\bm{1}}

\def\rd{{\textnormal{d}}}

\def\rmI{{\mathbf{I}}}

\DeclareMathAlphabet{\mathsfit}{\encodingdefault}{\sfdefault}{m}{sl}
\SetMathAlphabet{\mathsfit}{bold}{\encodingdefault}{\sfdefault}{bx}{n}

\newcommand{\Nc}{{\mathcal N}}

\def\rmI{{\mathbf{I}}}

\newcommand{\innerprod}[2]{\left<{#1},{#2}\right>}

\newcommand{\x}{{\boldsymbol x}}

\newcommand{\cb}{{\boldsymbol c}}

\newcommand{\epsilonb}{{\boldsymbol \epsilon}}

\definecolor{cfg}{rgb}{0.906, 0.435, 0.318}
\definecolor{cfgpp}{rgb}{0.165, 0.616, 0.561}
\definecolor{cfgnull}{rgb}{0.208, 0.565, 0.953}

\newcommand{\epscond}{{\hat\epsilonb_\cb}}

\newcommand{\epsnullnc}{{\hat\epsilonb_\varnothing}}
\newcommand{\epscfgnc}{{\hat\epsilonb_\cb^\omega}}

\newcommand{\tweediecfgnc}{{\hat\x_\cb^\omega}}

\usepackage{stfloats}
\usepackage{amsthm}
\usepackage{amsmath}
\usepackage{caption}
\usepackage{wrapfig} 
\usepackage{makecell}

\newtheorem{theorem}{Theorem}
\newtheorem{lemma}{Lemma}
\newtheorem{definition}{Definition}
\usepackage[T1]{fontenc}
\usepackage{url}
\usepackage{algpseudocode}
\usepackage{algorithm}
\usepackage{mathrsfs}
\usepackage{amssymb}
\usepackage{graphicx}
\usepackage{subcaption}
\usepackage{float}
\usepackage{booktabs}
\usepackage{kotex}
\usepackage{bm}
\usepackage{multirow}
\usepackage{microtype}      
\usepackage{xcolor}   

\usepackage[utf8]{inputenc} 
\usepackage[T1]{fontenc}    
\usepackage{url}            
\usepackage{amsfonts}       
\usepackage{nicefrac}       
\usepackage{microtype}      
\usepackage{xcolor}         

\usepackage{graphicx}
\usepackage{booktabs}
\usepackage{kotex}
\usepackage{bm}

\usepackage{wrapfig}
\usepackage{multirow}
\usepackage{booktabs}
\usepackage{colortbl}
\usepackage{tabulary}
\usepackage{algorithmicx}
\usepackage{algpseudocode}
\usepackage{listings}
\usepackage{multicol}
\usepackage{xspace}
\usepackage{subcaption}
\usepackage{graphbox}
\usepackage{arydshln}
\usepackage{caption}
\usepackage{algpseudocode}
\usepackage{algorithm}
\usepackage{adjustbox}
\usepackage{amsmath}
\usepackage{microtype}

\usepackage{diagbox}

\definecolor{cvprblue}{rgb}{0.21,0.49,0.74}
\usepackage[pagebackref,breaklinks,colorlinks,allcolors=cvprblue]{hyperref}

\def\paperID{N/A} 
\def\confName{CVPR}
\def\confYear{2026}

\title{C$^2$FG: Control Classifier-Free Guidance via Score Discrepancy Analysis}

\author{
Jiayang Gao$^{1,\ast}$, Tianyi Zheng$^{2,\ast}$, Jiayang Zou$^1$, Fengxiang Yang$^2$, Shice Liu$^2$, Luyao Fan$^1$,\\ Zheyu Zhang$^2$, Hao Zhang$^2$, Jinwei Chen$^2$, Peng-Tao Jiang$^2$, Bo Li$^{2,\dagger}$, Jia Wang$^{1,\dagger}$ 
  \\$^1$ {Shanghai Jiao Tong University, Shanghai, China} 
  \\
  $^2${vivo BlueImage Lab, vivo Mobile Communication Co., Ltd., China} \\
  {\tt\small {\{gjy0515, jiawang\}}@sjtu.edu.cn, \{zhengtianyi, libra\}@vivo.com}}

\begin{document}
\maketitle
\footnotetext{$\ast$ These authors contributed equally to this work.}
\footnotetext{$\dagger$ Corresponding authors.}
\begin{abstract}
Classifier-Free Guidance (CFG) is a cornerstone of modern conditional diffusion models, yet its reliance on the fixed or heuristic dynamic guidance weight is predominantly empirical and overlooks the inherent dynamics of the diffusion process. 
In this paper, we provide a rigorous theoretical analysis of the Classifier-Free Guidance. Specifically, we establish strict upper bounds on the score discrepancy between conditional and unconditional distributions at different timesteps based on the diffusion process.
This finding explains the limitations of fixed-weight strategies and establishes a principled foundation for time-dependent guidance. Motivated by this insight, we introduce \textbf{Control Classifier-Free Guidance (C$^2$FG)}, a novel, training-free, and plug-in method that aligns the guidance strength with the diffusion dynamics via an exponential decay control function. Extensive experiments demonstrate that C$^2$FG is effective and broadly applicable across diverse generative tasks, while also exhibiting orthogonality to existing strategies.
\end{abstract}    
\section{Introduction}
\label{sec:intro}

Diffusion models \citep{SohlDickstein2015DeepUL,Song2019GenerativeMB,Song2020DenoisingDI,Song2020ScoreBasedGM} have received widespread attention due to their remarkable generative capabilities and have been successfully applied in image synthesis \citep{rombach2022high,Dhariwal2021DiffusionMB}, speech generation \citep{de2025lipdiffuser}, and 3D generation \citep{woo2024harmonyview}. With the advent of conditional diffusion models, researchers have explored guiding generation using additional information, such as class labels \citep{Dhariwal2021DiffusionMB} or textual descriptions \citep{balaji2022ediff,ramesh2022hierarchical}. Among these, classifier-free guidance (CFG) \citep{Ho2022ClassifierFreeDG} has emerged as a popular approach to improve sample quality. How to effectively incorporate conditional information remains a central challenge in conditional diffusion model design.

Diffusion models \citep{SohlDickstein2015DeepUL,Song2019GenerativeMB,Song2020DenoisingDI,Song2020ScoreBasedGM,11143960, shen2025information} 
are grounded in the principle of gradually transforming noise into data through a reverse denoising process, where conditional generation requires effective mechanisms for incorporating guidance.
Most of the conditional diffusion models are based on Bayes' theory: 
Early approaches such as Classifier Guidance (CG) \citep{Dhariwal2021DiffusionMB}  introduced an auxiliary classifier to steer the sampling trajectory toward the target condition. 
While effective, this approach is often unstable and relies on training an additional classifier, which can be difficult and computationally expensive \citep{vaeth2024gradcheck}. 
To address these limitations, Classifier-Free Guidance (CFG) was proposed as a more practical solution, enabling conditional generation without the need for an external classifier. The key motivation of CFG lies in its ability to interpolate between unconditional and conditional score estimates, thus providing a flexible and straightforward mechanism for conditional control.

Despite its success, the original design of CFG fixes guidance in the time domain, which may not be optimal. Subsequent works have extended CFG by exploring alternative strategies:  Interval Guidance~\cite{kynkaanniemi2024applying} propose restricting guidance to a limited interval of noise levels, FDG~\cite{Sadat2025GuidanceIT} choose a low cfg-scale for low frequencies and a high cfg-scale for high frequencies,  CFG++~\cite{chung2025cfg} and TFG~\cite{ye2024tfg} constrain classifier-free guidance to the data manifold, $\beta$-CFG~\cite{malarz2025classifier} and RAAG~\cite{zhu2025raagratioawareadaptive} adjust guidance strength via a time-dependent distribution. While these efforts deepen the community’s understanding and lead to tangible improvements in generative performance, they remain largely heuristic and motivated by empirical observations rather than rigorous theory. \textit{More importantly, they often overlook a fundamental aspect of CFG’s design: the inherent difference between the conditional and unconditional data distributions. Consequently, these methods remain sub-optimal, lacking the principled and theoretically-grounded solutions necessary to combine the conditional and unconditional scores effectively across different diffusion stages.}

In this paper, we aim to provide a theoretical understanding of the difference between conditional and unconditional outputs in classifier-free guidance.
Specifically, we analyze the problem from the perspective of differences between score functions of conditional and unconditional distributions in Theorems~\ref{thm mse vp} and \ref{thm mse ve}. Moreover, we explore the relationship of distributions at different timesteps and locations in Theorems~\ref{VPthm} and \ref{VEthm}. 
Theoretically, Theorems~\ref{thm mse vp} and \ref{thm mse ve} establish rigorous bounds on the score discrepancies, which in turn reveal intrinsic limitations in existing approaches that rely on unconditional guidance alone. Furthermore, we empirically validate our theoretical findings (Figure~\ref{fig:MSE and Cosine}), showing that the derived upper bounds on score MSE hold in practice. Besides, Theorems~\ref{VPthm} and \ref{VEthm} show that it's hard to bound the probability density function (PDF) when the timestep tends to 0. By integrating these theoretical and empirical insights, we confirm that the difference between conditional and unconditional outputs is strictly monotonically decreasing in the forward process. \textit{This insight inspires us to design a time-decaying weighting for CFG, which optimally balances the unconditional and conditional guidance throughout the generation process, thereby enhancing generation quality.}

Building on our theoretical analysis and empirical validations, we propose Control Classifier-Free Guidance (C$^2$FG), a novel guidance strategy in conditional diffusion models. The key design of our approach is to replace the fixed guidance weight with a time-dependent control function, which aligns strictly with our theoretical conclusions. Meanwhile, our method offers greater controllability and provides more flexible choices for balancing fidelity and diversity. Importantly, it is a training-free approach, requiring no additional classifier training, and can be seamlessly applied to a wide range of advanced diffusion frameworks, such as Stable Diffusion (SD) \citep{rombach2022high}, EDM2 \citep{Karras2024edm2}, U-ViT \citep{bao2022all}, DiT \citep{Peebles2022DiT}, and SiT \citep{ma2024sit}. Besides, C$^2$FG is not only generalizable across various generative tasks but also orthogonal to existing strategies (e.g., autoguidance~\citep{Karras2024autoguidance}). Moreover, our approach can theoretically explain and integrate the interval guidance strategy \cite{kynkaanniemi2024applying} and can be seamlessly applied to the exceptionally strong SiT-XL/2 (REPA), a strong baseline already difficult to improve. Even in this setting, C$^2$FG yields further gains in both FID and IS scores while maintaining other metrics.
Overall, our main contributions can be summarized as:
\begin{enumerate}
    \item \textbf{Theoretical analysis:} We provide a rigorous theoretical analysis of the discrepancy in CFG, revealing that the difference between conditional and unconditional scores dynamically decays over time. This insight establishes a principled foundation for time-dependent scaling and exposes the fundamental limitations of a fixed guidance weight.
    \item \textbf{Method design:} Guided by our analysis, we propose \textbf{Control Classifier-Free Guidance (C$^2$FG)}, a theoretically-grounded, training-free method that implements a time-dependent exponential decay control function. This design enhances controllability over the generation process by aligning guidance strength with the underlying diffusion dynamics.
    \item \textbf{Experimental validation:} We demonstrate that C$^2$FG achieves SOTA performance across various conditional generation benchmarks. Moreover, C$^2$FG can be applied to various sampling designs, including stochastic and ordinary differential equations, demonstrating its versatility.
    Notably, C$^2$FG's orthogonal design enhances even exceptionally strong baselines like SiT-XL/2 (REPA) with interval guidance \citep{kynkaanniemi2024applying}. It provides further gains in FID and IS scores while maintaining other key metrics.

\end{enumerate}

\section{Background}
\label{background}

\subsection{Diffusion Models}
Diffusion models  \citep{SohlDickstein2015DeepUL,Song2019GenerativeMB,Song2020DenoisingDI,Song2020ScoreBasedGM} learn complex data distributions through a two-stage procedure. The forward process adds noise to the data step by step, while the reverse process removes the noise to recover the target distribution.




\noindent \textbf{Stochastic Differential Equations (SDEs).} Mathematically, diffusion models can be described using stochastic differential equations. The forward process can be expressed as an SDE \cite{Song2020ScoreBasedGM}:
\begin{align}\label{gen_sde}
    \mathrm{d} x_t = f(x_t, t) \, \mathrm{d}t + g(t) \, \mathrm{d}w_t,
\end{align}
where $f(x_t, t)$ is the drift coefficient, $g(t)$ is the diffusion coefficient, and $w_t$ is a standard Wiener process.
Meanwhile, the corresponding reverse-time SDE \citep{anderson1982reverse} is:
\begin{align}
    \mathrm{d} x_t = &\left[ f(x_t, t) - \frac{1}{2} \left( g^2(t) + \sigma^2(t) \right) \nabla_{x_t} \log p(x_t, t) \right] \mathrm{d}t \nonumber\\&~~~~~~~~~~~~~~~~~~~~~~~~~~~~~~~~~~~~~~~~~~~~~~~~~~~~\quad + \sigma(t) \, \mathrm{d}\bar{w}_t,
\end{align}
where $\bar{w}_t$ is a standard reverse-time Wiener process, and $\sigma(t)$ is a user-specified noise scale. Common choices include $\sigma(t) = g(t)$ as in DDPM \citep{Ho2020DenoisingDP} , 
or $\sigma(t) = 0$ as in DDIM \citep{Song2020DenoisingDI} 
and Probability Flow ODEs \citep{Liu2022FlowSA,Lipman2022FlowMF,Liu2022RectifiedFA}.

\noindent  \textbf{Fokker–Planck Equation (FPE).} 
The SDE \eqref{gen_sde} is governed by the FPE \citep{gardiner1985handbook}:
\begin{align}\label{FPE}
    \frac{\partial p(x, t)}{\partial t} = -\nabla_x \cdot \left( f(x, t) p(x, t) \right) + \frac{1}{2} \Delta_x \left( g^2(t) p(x, t) \right),
\end{align}
where $p(x, t)$ denotes the probability density function (PDF) of \eqref{gen_sde} at time $t$. The FPE \eqref{FPE} describes the time evolution of the probability density function (PDF), where the first term (drift term) describes deterministic transport of probability mass, and the second term (diffusion term) models stochastic spreading due to noise.

\subsection{Classifier-Free Guidance}
For conditional diffusion, we incorporate conditioning variables $y$ into the generative process. To remove the need for an external classifier like Classifier Guidance (CG) \citep{Dhariwal2021DiffusionMB}, Classifier-Free Guidance (CFG) \citep{Ho2022ClassifierFreeDG} proposes a method derived from Bayes’ theorem
\begin{align}\label{nabla log bayes equation}
\nabla \log p(y \mid x_t) = \nabla \log p(x_t \mid y) - \nabla \log p(x_t).
\end{align}

Specifically, CFG incorporates conditional information into the denoising network based on Bayes’ theorem, and the generation process is given by
\begin{align}\label{CFG}
\hat{\epsilon}(x_t, t, y) = \omega \left[ \epsilon_\theta(x_t, t, y) - \epsilon_\theta(x_t, t, \varnothing) \right] + \epsilon_\theta(x_t, t, \varnothing),
\end{align}
where $\epsilon_\theta(x_t, t, y)$ is trained with conditional information and $\epsilon_\theta(x_t, t, \varnothing)$ is trained without it. The parameter $\omega$ controls the strength of conditional guidance. In most previous work~\citep{Liu2022FlowSA,Ho2022ClassifierFreeDG}, $\omega$ is fixed during the generation process.
While recent studies~\citep{Sadat2025GuidanceIT,kynkaanniemi2024applying,wang2024analysis,zhu2025raagratioawareadaptive, sadat2025trainingproblemrethinkingclassifierfree} find that a fixed $\omega$ is sub-optimal, the dynamic strategies they propose are often based on heuristic designs and lack clear theoretical guidelines.
Other methods~\cite{jin2025stagewisedynamicsclassifierfreeguidance-mm1, chen2025s2guidancestochasticselfguidance-mm3,bradley2025classifierfree-mm2}, designed for multimodal tasks, suffer from a lack of generality, as they often rely on task-specific properties such as large guidance scales ($\omega$) or external reward models.
Thus, most current approaches are either heuristic or not sufficiently general. We provide a detailed discussion about these works in Appendix \ref{related}.
\section{Method}
\subsection{Theoretic Analysis of Diffusion Process}\label{theory}
In diffusion models, the forward diffusion process is formulated as the Ornstein–Uhlenbeck (OU) process. i.e,
\begin{align}\label{OU-process}
    \rd x_t=f(t)x_t\rd t+g(t)\rd w_t.
\end{align}
Two widely used parameterizations are the Variance-Preserving SDE (VP-SDE) and the Variance-Exploding SDE (VE-SDE) \citep{Song2020ScoreBasedGM}.

\noindent \textbf{VP-SDE.} VP-SDE is designed to keep the marginal variance of $x_t$ bounded during the forward process, typically matching the discrete-time DDPM formulation. A common choice is 
$f(t)=-\frac12 \beta_t,\quad g(t)=\sqrt{\beta_t},$
where $\beta_t$ controls the noise schedule. The process gradually drives $x_t$ towards an isotropic Guassion with fixed variance.
Therefore, \eqref{OU-process} becomes
$\rd x_t=-\frac12 \beta_t x_t\rd t+\sqrt{\beta_t}\rd w_t.$

\noindent \textbf{VE-SDE.} The VE formulation increases the variance of $x_t$ over time, corresponding to a pure diffusion process. A typical parameterization is
$f(t)=0,\quad g(t)=\sqrt{\frac{\rd\sigma_t^2}{\rd t}}, $
leading to
$ \rd x_t=\sqrt{\frac{\rd\sigma_t^2}{\rd t}}\rd w_t.$

The forward diffusion process aims to transform unknown data distributions into predefined ones (e.g., Gaussian). Although the initial distributions under different conditions differ in the early stages, they become increasingly similar as the process progresses. As shown by our mathematical analysis, this convergence is non-uniform, meaning that the rate at which conditional information is lost also varies over time. This property challenges the commonly used fixed guidance strategy, indicating that a constant guidance strength is not consistent with the mathematical properties of diffusion.

To guide the design of a time-dependent weighting function $w(t)$, we consider the mean-square error between the scores of distributions induced by different initial conditions. 
\paragraph{Score MSE Bounds.} 

Denote $\tilde{p}(x_t, t) = p(x_t,t|y)$ for the conditional distribution given $y$, then we aim to estimate upper bound of mean square loss of scores between $p(x_t, t)$ and $\tilde p(x_t,t)$:

\begin{theorem}[VP-SDE Score MSE Bound]\label{thm mse vp}
Assume that the sample space is bounded and closed. Then we consider the VP-SDE
\begin{align}\label{VP-SDE}
    \mathrm{d}x_t = -\frac12 \beta(t) x_t \, \mathrm{d}t + \sqrt{\beta(t)} \, \mathrm{d}w_t,
\end{align}

let $p(x,t)$ and $\tilde p(x,t)$ denote the probability densities at time $t$, induced by initial distributions $p(x_0)$ and $\tilde p(x_0)$, respectively. 

Then, the mean-square error (MSE) between the scores satisfies the uniform bound
\begin{align}\label{VP-SDE score final}
    \|\nabla \log p(x,t) - \nabla \log \tilde p(x,t)\| \le \frac{\alpha(t)}{\sigma^2(t)}\, C,\nonumber\\  \forall x \in \text{supp}, \ t \ge 0, 
\end{align}

where $C$ is a constant, $\alpha(t) = \exp{(-\frac{1}{2}\int_0^t\beta_s\rd s)}$, and $\sigma(t)= \alpha(t)\sqrt{\int_0^t \frac{\beta_s}{\alpha^2(s)} \rd s}$.
\end{theorem}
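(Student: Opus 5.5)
The plan is to write both marginal densities explicitly using the Gaussian transition kernel of the VP-SDE. Then we express each score as a posterior mean of the initial point (Tweedie's formula), and bound the difference of two posterior means using boundedness of the sample space.

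\textbf{Transition kernel.} Solving the linear SDE \eqref{VP-SDE} with the integrating factor $\alpha(t)^{-1}$ gives
\begin{align*}
x_t = \alpha(t)x_0 + \alpha(t)\int_0^t \frac{\sqrt{\beta_s}}{\alpha(s)}\,\rd w_s .
\end{align*}
Hence $x_t\mid x_0 \sim \Nc(\alpha(t)x_0, \sigma^2(t)\rmI)$, with $\sigma(t)$ exactly as in the statement (by It\^o isometry). Therefore
\begin{align*}
p(x,t)=\int \Nc(x;\alpha(t)x_0,\sigma^2(t)\rmI)\,p(x_0)\,\rd x_0 ,
\end{align*}
and similarly for $\tilde p$ with $\tilde p(x_0)$.

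\textbf{Tweedie's formula.} Differentiating under the integral, which is justified because the kernel is smooth and the support is compact, gives
\begin{align*}
\nabla\log p(x,t) = \frac{\alpha(t)\,\mathbb{E}_{p}[x_0\mid x_t=x] - x}{\sigma^2(t)} .
\end{align*}
The same identity holds for $\tilde p$, with the posterior taken under the initial law $\tilde p(x_0)$. The term $-x/\sigma^2(t)$ cancels in the difference, so
\begin{align*}
\|\nabla\log p(x,t) - \nabla\log\tilde p(x,t)\| = \frac{\alpha(t)}{\sigma^2(t)}\,\bigl\|\mathbb{E}_{p}[x_0\mid x]-\mathbb{E}_{\tilde p}[x_0\mid x]\bigr\| .
\end{align*}

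\textbf{Bounding the posterior means.} Both posterior means are averages of points in the sample space $K$, which is closed and bounded, hence compact. Each posterior mean therefore lies in the convex hull of $K$. Their difference is bounded by $C=\operatorname{diam}(\mathrm{conv}\,K)=\operatorname{diam}(K)$. Alternatively, $C=2\sup_{x_0\in K}\|x_0\|$ works. This constant is independent of $x$ and $t$, which gives the claimed uniform bound for all $x$ and all $t>0$.

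\textbf{Main obstacle.} The hard part is making the Tweedie step rigorous and handling the endpoint $t=0$. For $t>0$ the marginals are smooth and strictly positive everywhere. This holds because a Gaussian convolution of a probability measure on a compact set is positive and $C^\infty$, so the logarithm and the interchange of gradient and integral are legitimate. I would handle this with dominated convergence: on compact $K$, the integrand's $x$-derivative is bounded by a polynomial in $\|x\|$ times the Gaussian.

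At $t=0$ we have $\sigma(0)=0$ and the right-hand side is $+\infty$, so the bound holds trivially, or the statement can be read for $t>0$. The phrase ``$\forall x\in\text{supp}$'' is harmless, since the argument actually gives the bound for every $x\in\mathbb{R}^d$. Any squared or integrated MSE version then follows by squaring and integrating against either marginal.
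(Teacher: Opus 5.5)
Your proposal is correct and follows the paper's proof essentially step for step: Gaussian transition kernel, Tweedie's formula for the score, cancellation of the $-x/\sigma^2(t)$ term, and a compactness bound on the difference of the two posterior means. The paper phrases that last step through total variation to get $C=2R$, which is the same as your $2\sup_{x_0\in K}\|x_0\|$; your extra care about differentiating under the integral and about the degenerate endpoint $t=0$ supplies details the paper leaves implicit.
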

\begin{proof}
    See in Appendix \ref{proof thm1}.
\end{proof}
To make the result more intuitive, we reparameterize time by $t' = \frac{1}{2} \int_0^t \beta_s \rd s$, under which the SDE becomes
$$\rd x_{t'}=-x_{t'}\rd t'+\sqrt2 \rd w_{t'},$$ 
so that $\alpha(t') = e^{-t'}$ and $\sigma^2(t')=\frac{1-e^{-2t}}{2}$.

By Theorem~\ref{thm mse vp}, we obtain
\begin{align}\label{Heat-SDE score}
\|\nabla \log p(x,t) - \nabla \log p(x,t|y)\| \le \frac{e^{-t}}{1 - e^{-2t}}\, C,
\end{align}
which admits an asymptotic upper bound of order $O(e^{-t})$ decay rate when t is large.

\begin{theorem}[VE-SDE Score MSE Bound]\label{thm mse ve}
Assume that the sample space is bounded and closed. Then we consider the VE-SDE
\begin{align}\label{VE-SDE}
    \rd x_t=\sqrt{\frac{\rd\sigma_t^2}{\rd t}}\rd w_t,
\end{align}

let $p(x,t)$ and $\tilde p(x,t)$ denote the probability densities induced by initial distributions $p(x_0)$ and $\tilde p(x_0)$, respectively. Assume that the sample space is bounded and closed.

Then, the mean-square error (MSE) between the conditional and unconditional scores satisfies the uniform bound
\begin{align}\label{VE-SDE score final}
    \|\nabla \log p(x,t) - \nabla \log \tilde p(x,t)\|
    &\le \frac{C}{\sigma^2(t)}, \nonumber\\
    &\forall\, x \in \text{supp},\ t \ge 0,
\end{align}

where $C$ is a constant.
\end{theorem}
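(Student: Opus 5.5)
The plan is to use the explicit Gaussian transition kernel of the VE-SDE, exactly as in the VP case of Theorem~\ref{thm mse vp}, with $\alpha(t)\equiv 1$. Assume $\sigma(0)=0$ for normalization, so the forward process gives
\begin{align*}
x_t = x_0 + \sigma(t) z, \qquad z\sim\Nc(0,\rmI).
\end{align*}
The marginal is therefore the convolution
\begin{align*}
p(x,t)=\int \Nc\bigl(x;x_0,\sigma^2(t)\rmI\bigr)\,p(x_0)\,\rd x_0,
\end{align*}
and similarly for $\tilde p$. Differentiating under the integral, which is justified because the Gaussian kernel is smooth and the sample space $K$ is compact, gives Tweedie's formula:
\begin{align*}
\nabla \log p(x,t)=\frac{\mathbb{E}_{p}[x_0\mid x_t=x]-x}{\sigma^2(t)},
\end{align*}
where the expectation is over the posterior $p(x_0\mid x_t=x)\propto \Nc(x;x_0,\sigma^2(t)\rmI)\,p(x_0)$. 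The same identity holds for $\tilde p$ with its own posterior.

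Subtracting the two expressions, the $-x/\sigma^2(t)$ terms cancel, leaving
\begin{align*}
\nabla \log p(x,t)-\nabla\log\tilde p(x,t)=\frac{\mathbb{E}_{p}[x_0\mid x]-\mathbb{E}_{\tilde p}[x_0\mid x]}{\sigma^2(t)}.
\end{align*}
Both posteriors are supported on $K$, since they are absolutely continuous with respect to the priors $p(x_0)$ and $\tilde p(x_0)$, which live on $K$. Each posterior mean is therefore a convex combination of points of $K$ and lies in its convex hull. The numerator is thus at most $\operatorname{diam}(K)$, and we may take $C=\operatorname{diam}(K)$. This bound is uniform in $x$, and in fact holds for every $x\in\mathbb{R}^d$, not only on the support, and for every $t>0$.

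The main subtlety is the regime $t\to 0$, where $\sigma(t)\to 0$. There the bound blows up, and at $t=0$ the densities may not even be smooth; the statement's ``$t\ge 0$'' should be read as $t>0$, with the $t=0$ case interpreted as the trivially infinite bound. I would also check that the Gaussian kernel keeps $p(x,t)>0$ everywhere for $t>0$, so that the score and the posterior are well defined.

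The rest is routine: justify the interchange of gradient and integral by dominated convergence, using boundedness of $K$, and note that the same computation with $x_0$ replaced by $\alpha(t)x_0$ reproduces Theorem~\ref{thm mse vp}, confirming consistency of the constants.
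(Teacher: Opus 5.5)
Your proposal is correct and is essentially the paper's argument: the paper also writes $x_t = x_0+\sigma(t)\xi_t$, reuses the Tweedie-type score formula from the VP proof with $\alpha\equiv 1$, cancels the $-x/\sigma^2(t)$ terms, and bounds the difference of posterior means by a constant using compactness of the sample space. The differences are cosmetic. The paper bounds the numerator by $2R$ (with $K\subset B(0,R)$) via a total-variation step, whereas your convex-hull argument gives the slightly sharper $C=\operatorname{diam}(K)\le 2R$, and your remark that the statement should be read for $t>0$ is a fair correction.
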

\begin{proof}
    See in Appendix \ref{proof thm2}.
\end{proof}

From Theorems~\ref{thm mse vp} and~\ref{thm mse ve}, we obtain a relatively precise characterization of how the discrepancy between the conditional and unconditional score functions decays for both VP-SDE and VE-SDE. This implies that, as the diffusion process evolves, the effect of conditioning information gradually diminishes. Nevertheless, the theoretical bounds in \eqref{VP-SDE score final} and \eqref{VE-SDE score final} become singular as $t\to 0,$ , which makes them unsuitable for direct use in practical weighting schemes. However, as noted in~\cite{Song2020ScoreBasedGM}, estimating the score function near $t=0$ is also inherently difficult. In practice, we simply disregard this regime, since the function is hard to fit and the score estimates are unreliable. Consequently, on the interval $t\in [t_0,T]$ with $t_0>0$, we instead try to employ a continuous, time-decaying, and non-singular function that provides a uniform upper bound on the score discrepancy. In practical implementations, this surrogate function can be smoothly and approximately extrapolated toward $t\to 0$, allowing for stable and convenient engineering-level hyperparameter tuning.

In addition, we can estimate the bound between $p(x_1, t_1)$ and $p(x_2,t_2)$, $t_1<t_2$ in Theorem \ref{VPthm} and \ref{VEthm}, which claim that when fixing $p(x_2,t_2)$, the upper bound of $p(x_1, t_1)$ becomes larger as $t_1$ tends to 0 and $x_1$ deviates further from $x_2$. Via these estimations, we can take an insight into the difference of PDF when $t\to0$.

\paragraph{Harnack-type PDF Inequalities.}
The probability density functions themselves satisfy the following inequalities, which provide further insight into the evolution of the distributions over time:
\begin{theorem}[Harnack-type Inequality of VP-SDE]\label{VPthm}
    Let $p(x_t,t) \in C^{2, 1}(\mathbb R^n\times[0,+\infty))$ denote the probability density function of the VP-SDE \eqref{VP-SDE}, and define
    
    $$s(t) = \frac12 \int_0^t \beta_r dr, ~~t(s) = s^{-1}(t).$$
    
    Then for any $\alpha >1, x_1, x_2\in \mathbb R^n, 0<s_1<s_2<+\infty$, the following inequality holds:
    \begin{align}\label{VPinequa}
    p(x_1, t(s_1))
    &\leq p(x_2, t(s_2))
    \left(\frac{s_2}{s_1}\right)^{\frac{m\alpha}{2}}
    \nonumber\\
    &~~\times
    \exp\!\left(
        \frac{\alpha^2 \|x_1 - x_2\|^2}{4(s_2 - s_1)}
        + \frac{\|x_2\|^2 - \|x_1\|^2}{2}
    \right),
\end{align}
    where $m\geq n$ and $\|\cdot\|$ denotes the Euclidean distance.
\end{theorem}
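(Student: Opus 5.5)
The plan is to pass to the intrinsic time $s$ and factor out the Gaussian weight $e^{-\|x\|^2/2}$. The leftover function then solves a weighted heat equation. To it we apply a Li--Yau differential Harnack inequality and integrate along a straight space–time path. The factor $\exp\big((\|x_2\|^2-\|x_1\|^2)/2\big)$ in \eqref{VPinequa} is exactly what this weight produces, and the factors $(s_2/s_1)^{m\alpha/2}$ and $\exp\big(\alpha^2\|x_1-x_2\|^2/(4(s_2-s_1))\big)$ are the usual Li--Yau terms.

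\textbf{Step 1 (time change and conjugation).} With $s=\frac12\int_0^t\beta_r\,\rd r$, the VP-SDE \eqref{VP-SDE} becomes $\rd x_s=-x_s\,\rd s+\sqrt2\,\rd w_s$. Writing $q(x,s)=p(x,t(s))$, the FPE \eqref{FPE} reads
\begin{align*}
\partial_s q=\Delta q+\nabla\cdot(xq).
\end{align*}
Set $\rho(x)=e^{-\|x\|^2/2}$ (the stationary density up to a constant) and $v=q/\rho$. Since $\nabla\rho=-x\rho$, one checks $\Delta q+\nabla\cdot(xq)=\nabla\cdot(\rho\nabla v)=\rho(\Delta v-x\cdot\nabla v)$. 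Hence
\begin{align*}
\partial_s v=\Delta_f v:=\Delta v-\nabla f\cdot\nabla v,\qquad f=\tfrac12\|x\|^2.
\end{align*}
This is the heat equation for the weighted Laplacian of $(\mathbb R^n,e^{-f}\rd x)$. Since $v>0$, we may put $u=\log v$. The claim then reduces to
\begin{align*}
u(x_1,s_1)-u(x_2,s_2)\le \tfrac{m\alpha}{2}\log\tfrac{s_2}{s_1}+\tfrac{\alpha^2\|x_1-x_2\|^2}{4(s_2-s_1)},
\end{align*}
because multiplying back by $\rho(x_1)/\rho(x_2)=\exp\big((\|x_2\|^2-\|x_1\|^2)/2\big)$ gives \eqref{VPinequa}.

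\textbf{Step 2 (Li--Yau gradient estimate).} The goal is the differential Harnack inequality
\begin{align*}
\|\nabla u\|^2-\alpha\,\partial_s u\le \frac{m\alpha^2}{2s},\qquad \alpha>1,
\end{align*}
in the Bakry--Émery form. The argument runs through the weighted Bochner formula
\begin{align*}
\tfrac12\Delta_f\|\nabla u\|^2=\|\nabla^2u\|^2+\langle\nabla u,\nabla\Delta_f u\rangle+\mathrm{Ric}_f(\nabla u,\nabla u),\qquad \mathrm{Ric}_f=\nabla^2 f=I\succeq0,
\end{align*}
combined with the dimensional inequality $\|\nabla^2 u\|^2+\frac{(\nabla f\cdot\nabla u)^2}{m-n}\ge\frac{(\Delta_f u)^2}{m}$. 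This last inequality is where the effective dimension $m\ge n$ enters.

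One then sets $F=s(\|\nabla u\|^2-\alpha\partial_s u)$, computes $(\Delta_f-\partial_s)F$, and applies the maximum principle. If a cutoff is needed, use the standard Li--Yau localization on balls and let the radius tend to infinity.

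\textbf{Step 3 (path integration).} Take the path $\gamma(\tau)=x_2+\frac{\tau-s_1}{s_2-s_1}(x_1-x_2)$ for $\tau\in[s_1,s_2]$. Then
\begin{align*}
u(x_1,s_1)-u(x_2,s_2)=-\int_{s_1}^{s_2}\big(\partial_s u+\nabla u\cdot\dot\gamma\big)\,\rd\tau.
\end{align*}
By Step 2, $-\partial_s u\le \frac{m\alpha}{2\tau}-\frac{1}{\alpha}\|\nabla u\|^2$. Young's inequality gives $-\nabla u\cdot\dot\gamma\le \frac1\alpha\|\nabla u\|^2+\frac{\alpha}{4}\|\dot\gamma\|^2$, so the gradient terms cancel. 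Integrating the remainder yields $\frac{m\alpha}{2}\log(s_2/s_1)$ plus $\frac{\alpha\|x_1-x_2\|^2}{4(s_2-s_1)}$. Because $\alpha>1$, this second term is at most $\frac{\alpha^2\|x_1-x_2\|^2}{4(s_2-s_1)}$, which gives the stated (slightly weaker) form.

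\textbf{Main obstacle.} The hard step is Step 2, specifically the dimensional curvature condition. The Li--Yau estimate with the explicit constant $m$ needs $\mathrm{CD}(0,m)$, i.e.
\begin{align*}
\mathrm{Ric}_f-\frac{\nabla f\otimes\nabla f}{m-n}=I-\frac{xx^\top}{m-n}\succeq0,
\end{align*}
which holds only where $\|x\|^2\le m-n$. The paper's standing assumption that the sample space is bounded and closed is what should make this work: choose $m\ge n+\sup\|x\|^2$ over the relevant region, or otherwise restrict $x_1,x_2$ to a bounded set and localize. This is presumably why the statement says only ``some $m\ge n$.''

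If that route is too delicate, the fallback is to use the $\mathrm{CD}(1,\infty)$ structure. Equivalently, conjugate by $e^{-\|x\|^2/4}$ to obtain a Schrödinger equation with potential $\frac n2-\frac{\|x\|^2}{4}$, and apply the Li--Yau estimate with potential. One would then absorb the potential term into $m$ on the bounded region.
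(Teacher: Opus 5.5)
\paragraph{Verdict: genuine gap in Step 2, and it cannot be closed.}
Your Steps 1 and 3 follow the paper's route exactly. You use the same time change and the same conjugation $u=p\,e^{\|x\|^2/2}$, giving $\partial_s u=\Delta u-x\cdot\nabla u$. You then take a Li--Yau estimate from a weighted Bochner formula with a cutoff and $R\to\infty$, and integrate along a space--time segment. (Your segment has its endpoints swapped; it should run from $x_1$ at $s_1$ to $x_2$ at $s_2$. That is cosmetic.) You have also located the real problem: the global estimate $\|\nabla u\|^2-\alpha\partial_s u\le \frac{m\alpha^2}{2s}$. The paper gets past it only by asserting, for every $m\ge n$, that $\frac12\mathrm{L}|\nabla g|^2\ge \frac{|\mathrm{L}g|^2}{m}+\langle\nabla g,\nabla\mathrm{L}g\rangle+\nabla g^\top\nabla^2\phi\,\nabla g$. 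With $\phi=\frac12\|x\|^2$ this already fails for $g(x)=\langle a,x\rangle$: the left side is $0$ and the right side is $\langle a,x\rangle^2/m$. This is precisely the missing $\mathrm{CD}(0,m)$ condition you identified, so the paper's proof has the same gap.

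Your proposed repairs do not work. A bounded sample space (not even a hypothesis of this theorem) constrains the initial law, not the spatial variable. For $t>0$ we have $p(\cdot,t)>0$ on all of $\mathbb R^n$, and the claim is for all $x_1,x_2\in\mathbb R^n$. If you localize to a ball $B_R$ on which $I-\frac{xx^\top}{m-n}\succeq0$, the cutoff error $\frac{C\alpha^2}{R^2}\big(1+\frac{\alpha^2}{\alpha-1}\big)$ survives. The maximum-principle step multiplies it by a factor of order $m$, so it does not vanish as $R\to\infty$ with $m\ge n+R^2$. The Schr\"odinger variant fails for the same reason, since its potential $\frac n2-\frac{\|x\|^2}{4}$ and that potential's gradient are unbounded.

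\paragraph{Counterexample.}
No repair exists, because both Step 2 and the theorem are false. Take the initial law $\delta_y$ with $y\neq0$. If you insist on $C^{2,1}$ regularity up to $t=0$, use $\mathcal N(y,\varepsilon^2 I)$ with $0<\varepsilon<1$; the computation is the same and only enlarges the $\tanh$ term below by the factor $(1-\varepsilon^2)^{-1}$. Then $p(\cdot,t(s))=\mathcal N\big(e^{-s}y,(1-e^{-2s})I\big)$, and maximizing a difference of two Gaussian exponents gives
\[
\sup_{x\in\mathbb R^n}\Big[\log p(x,t(s_1))-\log p(x,t(s_2))\Big]=\frac n2\log\frac{1-e^{-2s_2}}{1-e^{-2s_1}}+\frac{\|y\|^2}{2}\tanh\frac{s_2-s_1}{2}.
\]
With $x_1=x_2=x$, the theorem requires this to be at most $\frac{m\alpha}{2}\log\frac{s_2}{s_1}$. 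Take $s_2=s_1+1$ and let $s_1\to\infty$. The left side stays above $\frac{\|y\|^2}{2}\tanh\frac12$, while the right side tends to $0$, for every finite $m$ and every $\alpha>1$.

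Infinitesimally, since $\partial_s u=\partial_s\log p$, Step 2 would give $-\partial_s\log p\le \frac{m\alpha}{2s}$. But in fact $\sup_x\big(-\partial_s\log p\big)=\frac{ne^{-2s}}{1-e^{-2s}}+\frac{\|y\|^2}{4}$, which does not decay. The extremal points sit at $\|x\|\sim e^{s}\|y\|$, which is why no assumption on the support of the data can help. What one can realistically salvage is a local inequality for $x_1,x_2$ in a fixed ball, with $m$ and an extra $O(s_2-s_1)$ term in the exponent, both depending on that ball.
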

\begin{proof}
    See in Appendix \ref{proof thm3}.
\end{proof}
\begin{theorem}[Harnack-type Inequality of VE-SDE]\label{VEthm}
    Similarly, let $p(x_t,t) \in C^{2,1}(\mathbb R^n\times[0,+\infty))$ denote the probability density function of the VE-SDE \eqref{VE-SDE}, and define 
    $$s(t) = \sigma_t^2, ~~t(s) = s^{-1}(t).$$ 
    Then for any $\alpha >1, x_1, x_2\in \mathbb R^n, 0<s_1<s_2<+\infty$, the following inequality holds for $p$:
    \begin{align}\label{VEinequa}
        p(x_1,t(s_1)))&\leq p(x_2,t(s_2)) \left(\frac{s_2}{s_1} \right)^{\frac{n\alpha}{2}}\nonumber\\
    &\quad\times\exp{\left(\frac{\alpha^2\|x_1-x_2\|^2}{4(s_2-s_1)}\right)}.
    \end{align}
\end{theorem}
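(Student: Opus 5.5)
The plan is to reduce the VE-SDE to the standard heat equation by a change of time, and then apply the classical Li--Yau differential Harnack inequality followed by its integrated (path) form.

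\textbf{Step 1 (heat equation in the new time).} Since $f=0$ and $g^2(t)=\frac{\rd \sigma_t^2}{\rd t}$, the Fokker--Planck equation \eqref{FPE} reads
\[
\partial_t p=\tfrac12 \frac{\rd\sigma_t^2}{\rd t}\,\Delta p .
\]
Set $s=\sigma_t^2$ and $u(x,s)=p(x,t(s))$. The chain rule gives $\partial_s u=\tfrac12\Delta u$. Rescaling space, or equivalently absorbing the factor $\tfrac12$ into the time variable, turns this into the standard heat equation. We should track this factor carefully, since it is what produces the constant $4$ in the exponent and the exponent $n\alpha/2$ on the time ratio.

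\textbf{Step 2 (Li--Yau gradient estimate).} Let $u>0$ solve the heat equation on $\mathbb R^n$ with nonnegative Ricci curvature. Li--Yau gives, for any $\alpha>1$ (in the flat case one may even take $\alpha=1$),
\[
\frac{|\nabla u|^2}{u^2}-\alpha\frac{\partial_s u}{u}\le \frac{n\alpha^2}{2s},
\]
with the constant adjusted to the normalization from Step 1. Positivity of $p$ holds for $s>0$ because it is a Gaussian convolution. The growth and regularity conditions needed on $\mathbb R^n$ follow from $p\in C^{2,1}$ and the Gaussian tails, which are guaranteed by the bounded initial support assumed earlier. If this hypothesis check becomes awkward, I would instead verify the estimate directly by Bochner's formula and a maximum principle applied to $F=s\,(|\nabla \log u|^2-\alpha\,\partial_s\log u)$.

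\textbf{Step 3 (integrate along a path).} Take the segment $\gamma(s)=x_2+\frac{s-s_1}{s_2-s_1}(x_1-x_2)\cdot(-1)$, reparametrized to run from $(x_1,s_1)$ to $(x_2,s_2)$. Differentiate $\ell(s)=\log u(\gamma(s),s)$ to get
\[
\ell'=\nabla\log u\cdot\dot\gamma+\partial_s\log u .
\]
Use the Step 2 bound in the form $\partial_s\log u\ge \frac1\alpha|\nabla\log u|^2-\frac{n\alpha}{2s}$. Completing the square, $|\nabla\log u|^2/\alpha+\nabla\log u\cdot\dot\gamma\ge -\frac{\alpha}{4}|\dot\gamma|^2$. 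Integrating from $s_1$ to $s_2$ with $|\dot\gamma|=\|x_1-x_2\|/(s_2-s_1)$ yields
\[
\log u(x_2,s_2)-\log u(x_1,s_1)\ge -\frac{n\alpha}{2}\log\frac{s_2}{s_1}-\frac{\alpha\|x_1-x_2\|^2}{4(s_2-s_1)} .
\]
Exponentiating gives \eqref{VEinequa}. The form stated in the theorem has $\alpha^2$, which is weaker since $\alpha>1$, so it follows as well.

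\textbf{Main obstacle.} The hard step is Step 2: justifying Li--Yau on the noncompact space $\mathbb R^n$ with the correct constants after the time rescaling. I would settle this through the explicit Gaussian-convolution representation, which gives the needed bounds on $|\nabla \log u|$ so the maximum principle argument applies.
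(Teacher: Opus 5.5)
Your route matches the paper's: a time change to a heat equation, a Li--Yau gradient estimate, and integration along a space-time segment. (The paper proves a local estimate on $B_R$ via Bochner, a cutoff and the maximum principle, then lets $R\to\infty$.) The gap is the factor $\tfrac12$ that you flag in Step 1 and then never track.

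With $s=\sigma_t^2$, as the statement defines it, $u$ solves $\partial_s u=\tfrac12\Delta u$. Passing through $\tau=s/2$, the correct estimate is
\[
|\nabla\log u|^2-2\alpha\,\partial_s\log u\le\frac{n\alpha^2}{s},
\]
not the one in your Step 2. Your version is false for this equation. For $u=(2\pi s)^{-n/2}e^{-|x|^2/(2s)}$ one gets $|\nabla\log u|^2-\alpha\,\partial_s\log u=(1-\tfrac{\alpha}{2})|x|^2/s^2+\tfrac{n\alpha}{2s}$, which is unbounded in $x$ when $\alpha<2$. Whichever way you absorb the $\tfrac12$ (into $s$ or into $x$), undoing it doubles the coefficient in the exponent. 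So Step 3 actually yields $(s_2/s_1)^{n\alpha/2}\exp\big(\alpha\|x_1-x_2\|^2/(2(s_2-s_1))\big)$. This implies the stated bound when $\alpha\ge 2$, and when $\alpha\ge\sqrt2$ if you use the sharp flat estimate with $\alpha=1$. It does not give the bound for $\alpha$ near $1$.

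No better argument can close this, because the statement as written is false for $\alpha\in(1,\sqrt2)$. Start the VE-SDE (with $\sigma_0=0$) from $\mathcal N(0,cI)$, $c>0$. Then $p(x,s)=(2\pi(s+c))^{-n/2}e^{-|x|^2/(2(s+c))}$ is smooth up to $s=0$. Take $x_i=(s_i+c)v$. Then
\[
\frac{p(x_1,s_1)}{p(x_2,s_2)}=\Big(\frac{s_2+c}{s_1+c}\Big)^{n/2}e^{(s_2-s_1)|v|^2/2},\qquad \frac{\alpha^2\|x_1-x_2\|^2}{4(s_2-s_1)}=\frac{\alpha^2(s_2-s_1)|v|^2}{4},
\]
so the claimed inequality fails as $|v|\to\infty$ whenever $\alpha^2<2$.

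The paper's own proof escapes this only by silently setting $s=\tfrac12\sigma_t^2$, contradicting the statement, which makes $\partial_s p=\Delta p$. In that normalization your Steps 2 and 3 are correct verbatim. They even give the slightly better exponent $\alpha\|x_1-x_2\|^2/(4(s_2-s_1))$, and your argument then coincides with the paper's. So you must either redefine $s=\tfrac12\sigma_t^2$, or keep $s=\sigma_t^2$ and replace the $4$ in the exponent by $2$.

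Two minor points. Your segment as written runs from $x_2$ to $2x_2-x_1$; you want $\gamma(s)=x_1+\frac{s-s_1}{s_2-s_1}(x_2-x_1)$. Also, noncompactness is not the real obstacle. The local estimate on $B_R$ has an $O(R^{-2})$ error, and letting $R\to\infty$ needs no growth hypothesis on $u$.
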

\begin{proof}
    See in Appendix \ref{proof thm4}.
\end{proof}

We take VP-SDE for example, fix $x_2 = x, s_2=s$, and assume $p(x,t(s))>0$, then
we can see that the upper bound of $p(x_1,t(s_1))$ is increasing as $s_1$ decreases and $d(x_1,x)$ increases. When $t\to 0$, it becomes harder to bound the PDF, which indicates that the  `magnitude' or `amplitude' of the PDF at early times (small $s_1$) can be much larger than at later times. Moreover, the closer we are to the initial time, the greater the diversity of the PDF, which amplifies the differences between different initial distributions. We can obtain similar conclusion from VE-SDE using the same method.

These inequalities complement the score MSE bounds, offering a detailed view of how the densities evolve and spread over time, further supporting the design of the exponentially decaying weighting function $\omega(t)$.

\paragraph{Relationship between Harnack-type Inequalities and MSE Bound.} 
Moreover, Appendix \ref{relationship} provides a deeper insight into the connection between Theorems \ref{thm mse vp} and \ref{VPthm}: they respectively lead to Theorems \ref{KL-bound} and \ref{thm ent-cost} in the Appendix \ref{relationship}. In essence, these two results offer complementary perspectives on the evolution of the KL divergence. Specifically, Theorem \ref{thm ent-cost} provides an $O(1/t)$ upper bound on the KL divergence, while Theorem \ref{KL-bound} establishes an $O(e^{-2t}/(1-e^{-2t}))$ upper bound decay rate. The former offers a sharper characterization in the short-time regime, whereas the latter dominates in the long-time limit. Together, they delineate a coherent picture of how discrepancies dissipate over time.
\subsection{Control Classifier-Free Guidance (C$^2$FG)}

\paragraph{Intuitive Motivation.}  
We established a core fact in Section~\ref{theory} (Theorems~\ref{thm mse vp} and~\ref{thm mse ve}): during the \textit{forward} diffusion process ($t: 0 \to T$), the conditional and unconditional distributions gradually converge, and the uniform upper bound of score discrepancy
\begin{align*}
    \text{discrepancy}(t) \ge \|\nabla \log p(x_t|y) - \nabla \log p(x_t)\|, \nonumber \\
\forall t > 0, x_t\in \text{Supp}, y\in \mathbb N,
\end{align*}
decays as $t$ increases. 
In~\eqref{Heat-SDE score}, after reparameterizing $t$ to $t^\prime$, the theoretical bound  takes the form $\tfrac{e^{-t'}}{1-e^{-2t'}}\cdot C \sim O(e^{-t'})$. This shows that in the reparameterized time scale, the conditional–unconditional score discrepancy decays exponentially.
Since $t'$ is a monotone function of the original time $t$, this result implies that in the practical diffusion clock $t\in [0,T],$ the score discrepancy exhibits an \textbf{approximately exponential trend}, with the precise rate governed by the effective diffusion coefficient $\beta_t$.
Hence, the key insight is that the conditional discrepancy evolves roughly as
$$\text{discrepancy}(t)\propto e^{-t}.$$

This property has direct implications for the \textit{reverse sampling} process ($t: T \to 0$):

\begin{itemize}
    \item \textbf{Early generation (high $t$, near pure noise):} the conditional score $\nabla \log p(x_t|y)$ and unconditional score $\nabla \log p(x_t)$ are \textbf{highly similar}.
    \item \textbf{Late generation (low $t$, near data):} the two score functions \textbf{diverge significantly}.
\end{itemize}

Our empirical validation (Figure~\ref{fig:MSE and Cosine}) strongly supports this theoretical prediction. During reverse sampling, we observe that as $t \to 0$, the mean-square error (MSE) between conditional and unconditional scores (Figure~\ref{fig:mse}) \textbf{grows exponentially}, while the cosine similarity (Figure~\ref{fig:cos}) \textbf{decreases}, indicating divergence in both magnitude and direction.
Moreover, we visualize these discrepancies as heatmaps of the logarithmic ratio between the two score functions on various timesteps (See Appendix \ref{more exp}, Figure \ref{fig:score ratio}).

These trends explain why a fixed guidance factor ($\omega$ constant) cannot fully capture the true diffusion dynamics: it treats intrinsic differences at all timesteps \textit{uniformly}.

\begin{itemize}
    \item \textbf{Early generation (high $t$):} when the score discrepancy is already small, a fixed (or large) $\omega$ may introduce \textbf{unnecessary, excessive guidance}, potentially disrupting the natural structure formation (as discussed in \cite{kynkaanniemi2024applying}).
    \item \textbf{Late generation (low $t$):} As the diffusion process approaches the data manifold, the discrepancy between conditional and unconditional scores reaches its maximum. In this stage, a fixed (or small) $\omega$ may be \textbf{insufficient}, failing to pull the sample trajectory towards the target conditional manifold, thereby reducing fidelity.
\end{itemize}

Therefore, an ideal guidance schedule $\omega(t)$ should \textbf{align the exponential law} that governs the conditional–unconditional score discrepancy:
$$\omega(t) = \lambda e^{-t}\propto \text{discrepancy}(t),$$
under this setting, the guidance schedule $\omega(t)$ should follow a \textbf{exponentially decreasing} trend over time.

Furthermore, the Harnack-type inequalities (Theorems~\ref{VPthm} and~\ref{VEthm}) provide additional support from the probability density function (PDF) perspective. They indicate that as $t \to 0$, the \textit{magnitude} and \textit{diversity} of the PDF become difficult to control (i.e., the upper bound diverges). In this high-discrepancy, high-diversity ``critical region'' ($t \to 0$), a strong guidance signal $\omega(t)$ is necessary to \textit{steer} the generation process, ensuring precise convergence to the target conditional distribution.

Both the score-MSE discrepancy and Harnack-type inequality indicate an exponential upper bound. Therefore, we design an exponentially decreasing $\omega(t)$ that is calibrated to match the exponential upper bound.

\paragraph{Methodology Design.}  
To embody this intuition, we propose the \textbf{Control Classifier-Free Guidance (C$^2$FG)}, which replaces the fixed guidance $\omega$ in standard CFG with a time-varying control function based on previous analysis:
\begin{align}\label{eq:ecfg}
    \omega(t) = \omega_0 \exp\Big(\lambda \Big(1-\frac{t}{t_{\max}}\Big)\Big),
\end{align}
where $t_{\max}$ is the maximum diffusion time determined by the forward process, $\lambda>0$ controls the growth rate, determining how much the guidance increases from $\omega_0$ at $t=t_{\max}$ to $\omega_0 e^{\lambda}$ at $t=0$, directly capturing the theoretically proven exponential growth of score discrepancy during the reverse process.
This design has several benefits:
\begin{enumerate}
    \item \textbf{Consistency with both theory and observation.} Theoretical bound $\frac{\alpha(t)}{\sigma^2(t)}\, C$ in Theorem \ref{thm mse vp} and MSE bound in Figure \ref{fig:mse} suggest exponential-like decay of conditional discrepancy, which aligns naturally with this schedule.  
    \item \textbf{Smoothness and stability.} Unlike step-wise or linear schedules, the exponential function is continuously differentiable, reducing numerical instability.  
    \item \textbf{Easy to tune.} Only two hyperparameters are introduced: $\omega_0$ (maximum guidance strength, same as standard CFG) and $\lambda$ (control decay rate). 
    \item \textbf{Interpretability.} $\omega_0$ sets the  guidance strength, and $\lambda$ directly controls the speed of conditional information decay, providing an intuitive trade-off between \emph{faithfulness} and \emph{diversity}. More detailed analysis see Appendix \ref{more exp}.
\end{enumerate}

\paragraph{Integration into Sampling.}  
At each timestep $t$ during generation, the C$^2$FG update replaces the standard CFG as follows:
\[
\epscfgnc(\x_t) = \epsnullnc(\x_t) + \omega(t) \big[\epscond(\x_t) - \epsnullnc(\x_t)\big].
\]

Furthermore, our framework also provides a theoretical interpretation of the interval-based strategy proposed in~\cite{kynkaanniemi2024applying}.
Specifically, during the \textbf{early generation stage}, the discrepancy between conditional ($\epsilon_c$) and unconditional ($\epsilon_\emptyset$) scores becomes negligible.
Thus, only the conditional network is necessary in this regime, as the crucial information distinguishing between \textit{different} conditions (e.g., $c_1$ vs. $c_2$) \textit{only resides} within this network. 
Accordingly, the method of~\cite{kynkaanniemi2024applying} employs a piecewise-constant guidance schedule: $\omega(t)$ is set to a fixed value $\omega_0>1$ within a selected interval $[t_l,t_h]$, and reverts to $1$ outside this region. Its underlying motivation can in fact be interpreted as a special case of our theoretical framework. Moreover, as shown in Figure~\ref{fig:noise2img}, combining our C$^2$FG with their interval-based strategy further reduces model evaluation overhead, since the guidance is applied only where it is most effective.

\begin{figure*}[!hbt]
    \centering
    \resizebox{0.8\textwidth}{!}{ 
    \begin{minipage}{\textwidth}
        \centering
        \begin{subfigure}[t]{0.44\textwidth}
            \centering
            \includegraphics[width=\linewidth]{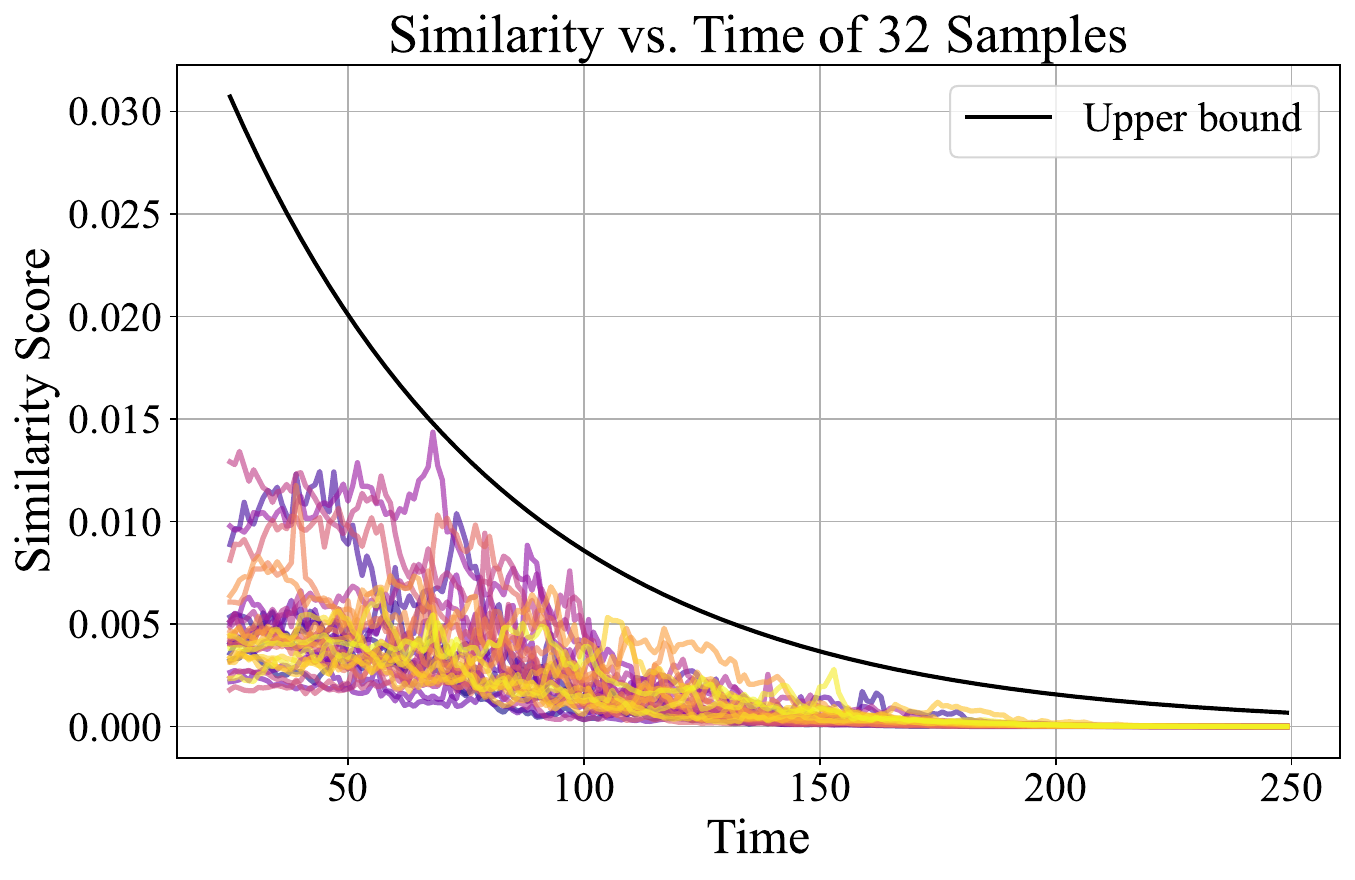}
            \caption{MSE of cond and uncond}
            \label{fig:mse}
        \end{subfigure}
        \hfill 
        \begin{subfigure}[t]{0.42\textwidth}
            \centering
            \includegraphics[width=\linewidth]{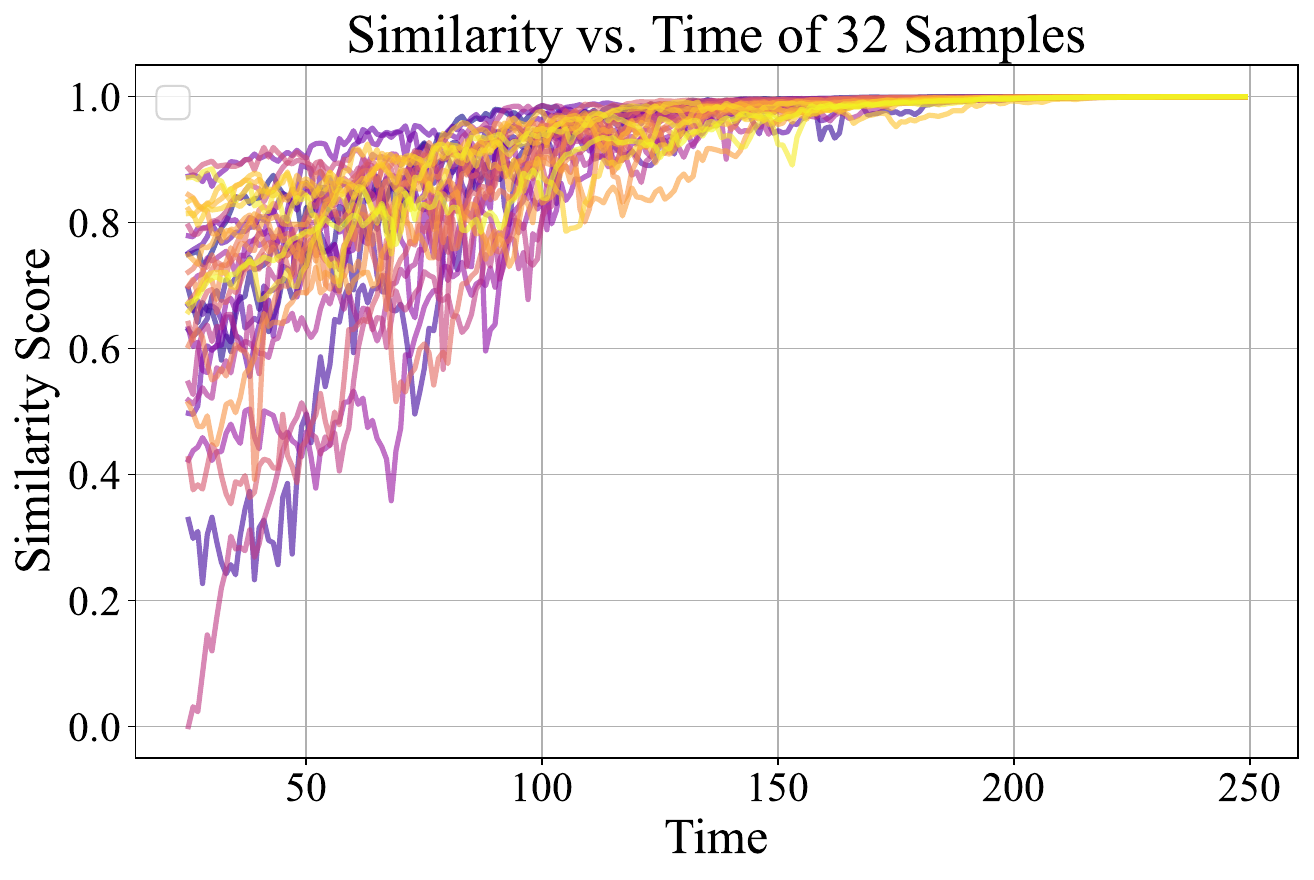}
            \caption{Cosine similarity between cond and uncond}
            \label{fig:cos}
        \end{subfigure}
    \end{minipage}
    }
    \caption{Following \cite{Song2020ScoreBasedGM}, (a) and (b) present results for $t\geq t_0>0$.
    (a) shows that the MSE of conditional score and unconditional score can be bounded by a function which tends to 0 when $t\to+\infty$; 
    (b) shows that the normalized cosine similarity between the two vectors decreases over reverse time, indicating that their directions gradually diverge in the reasoning process.}
    \label{fig:MSE and Cosine}
\end{figure*}

\begin{figure*}[!hbt]
    \centering
    \includegraphics[width=0.8\linewidth]{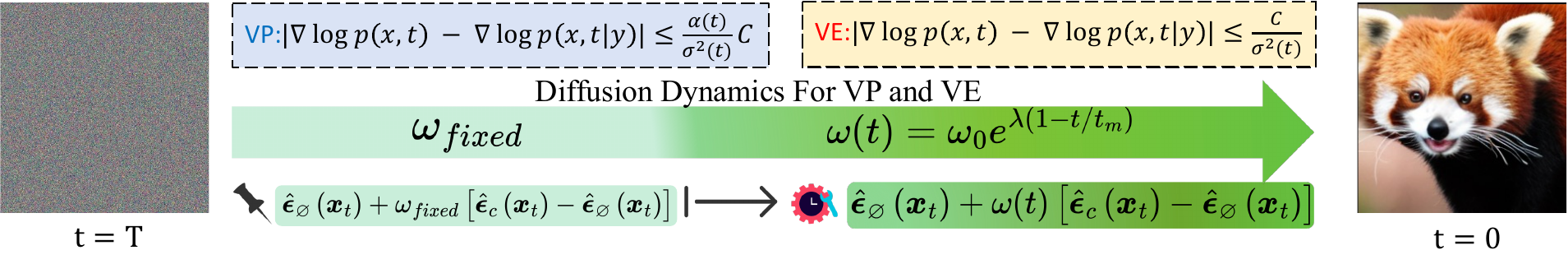}
    \caption{Noise to Image Process of \textbf{C$^2$FG}: Dynamic guidance weight $\omega(t)$ adaptively balances conditional and unconditional outputs at each timestep t during generation, guided by theoretical bounds on the score function. Moreover, we can choose to add the method of \cite{kynkaanniemi2024applying}, where we fix the $\omega(t) =1 $ at the beginning of generation or when $t$ tends to 0.}
        \label{fig:noise2img}
\end{figure*}

\section{Experiments}
\subsection{Experimental Setup}\label{setup}
\noindent \textbf{Models and Datasets.}
We evaluate our method on multiple generative tasks, including conditional image and text-to-image generation. Experiments are conducted on ImageNet \citep{5206848} and MS-COCO \citep{lin2014microsoft} text-to-image datasets. All models are based on advanced diffusion backbones, including U-ViT \citep{bao2022all}, DiT \citep{Peebles2022DiT}, Stable Diffusion \citep{rombach2022high} and SiT \citep{ma2024sit}, using pre-trained weights where applicable.   

\noindent \textbf{Evaluation Metrics.} Quantitative evaluation uses FID \citep{heusel2017gans}, IS \citep{salimans2016improved}, and Precision/Recall \citep{kynkaanniemi2019improved} score to assess both fidelity and conditional alignment. All experiments are implemented in PyTorch and TensorFlow and run on NVIDIA 4090 GPUs. 

\subsection{Experimental Result}
\begin{figure*}[!hbt]
    \centering
    \includegraphics[width=0.7\linewidth]{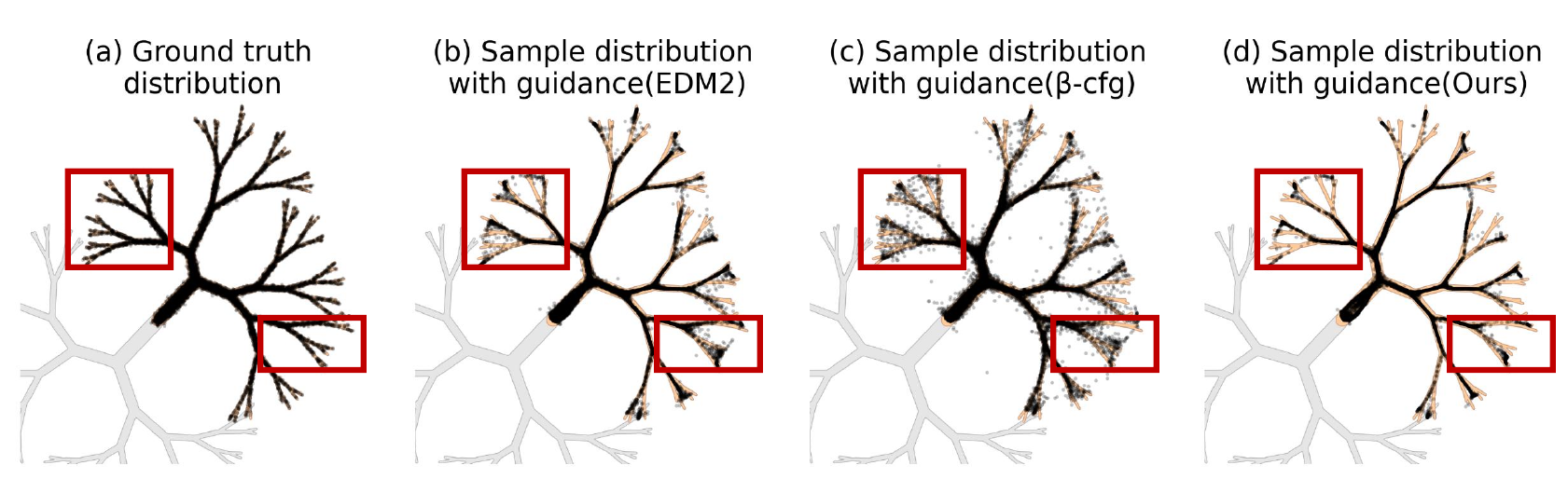}
    \caption{ A two-dimensional distribution featuring two classes
 represented by gray and orange regions. Approximately 99\% of the probability mass is inside the shown contours. (a) Ground truth samples from the orange class. 
        (b) EDM2 ($\omega=1$) produces some outliers. 
        (c) $\beta$-CFG ($\alpha=\beta=2, \omega=1$) produces more outliers. 
        (d) C$^2$FG ($\omega_0=1, \lambda=0.6$) generates fewer outliers and better matches the target distribution.}
    \label{fig:toy}
\end{figure*}
\noindent \textbf{Toy Example.} Figure \ref{fig:toy} presents a 2D toy example comparing three conditional sampling methods: EDM2 \citep{Karras2024edm2}, $\beta$-CFG \citep{malarz2025classifier}, and our proposed method. 
The figure demonstrates that our method produces a more adaptive weighting strategy, resulting in fewer outliers and better alignment with the target distribution compared to the baselines.



\noindent \textbf{Results on DiT.}
In Table \ref{tab:fid}, we quantitatively evaluate our C$^2$FG on the different ImageNet (256 $\times$ 256, 512 $\times$ 512, class-conditional) benchmarks based on DiT diffusion architectures. 
We also compare with the recent SOTA Rectified Diffusion \citep{wang2024rectified} methods. As shown in the Table \ref{tab:fid}, C$^2$FG shows comprehensive improvements across all metrics, exhibiting particularly significant gains in FID and IS scores. Additionally, C$^2$FG is validated on the higher-resolution ImageNet-512 dataset, demonstrating that it remains effective for high-resolution data.

\noindent \textbf{Results on SiT.}
For the SiT baselines, we utilize REPA  as our pre-trained guidance model. During training, REPA aligns the noisy latent features of the diffusion model with representations from pre-trained visual encoders (e.g., MAE \citep{MaskedAutoencoders2021}, DINO \citep{caron2021emerging}), thereby enhancing its generative capabilities. At inference, REPA employs both full \citep{yu2025repa} and interval \citep{kynkaanniemi2024applying} guidance strategies within the timestep range  $(t_l,t_h)$. We evaluate both strategies for a fair comparison. As shown in Table \ref{tab:fid}, our proposed C$^2$FG achieves comprehensive performance improvements at no additional overhead. This effectiveness extends to evaluations using ODE samplers, where C$^2$FG also boosts model performance. Collectively, these results demonstrate the effectiveness of C$^2$FG and its robustness across different samplers.

\begin{table*}[!hbt]
\caption{\textbf{Quantitative Comparison.} Comparison of different evaluation metrics on Class-Conditional ImageNet datasets with different diffusion architectures and inference steps.} 
    \begin{center}
    \scalebox{0.67}
    { 
    \begin{tabular}{lcccccc}
    \toprule
    \multicolumn{7}{l}{{\bf\normalsize ImageNet} } \\
    \toprule
    \multirow{1}{*}{\bf{Model ( 256$\times$256 ), 50k samples, 250 inference timesteps}} 
    & FID$\downarrow$ & IS$\uparrow$ & sFID $\downarrow$ & Prec$\uparrow$ & Rec$\uparrow$ \\
    \arrayrulecolor{black}\midrule 
    DiT-XL/2 ($\omega=1.5$, ODE sampler)     & 2.29 & 276.8 & 4.6 & 0.83 & 0.57  \\
    DiT-XL/2 (Rectified Diffusion, $\omega=1.5$, ODE)     & 2.13 & / & / & 0.83 & 0.58  \\
    \textbf{DiT-XL/2 + Ours}($\omega_0= 1, \lambda=\ln 2$, ODE)       & \textbf{2.07} & \textbf{291.5} & 4.6 & 0.83 & \textbf{0.59} \\
    \arrayrulecolor{gray}\cmidrule(lr){1-7}
    SiT-XL/2 (REPA)($\omega=1.35$, SDE)     & 1.80 & 284.0 & 4.5 & 0.81 & 0.61  \\
    \textbf{SiT-XL/2 (REPA) + Ours} ($\omega_0=1, \lambda=1$, SDE)     & \textbf{1.51} & \textbf{315.0} & 4.6 & 0.80 & \textbf{0.62}  \\
    \arrayrulecolor{gray}\cmidrule(lr){1-7}
    SiT-XL/2 (REPA, Interval) ($\omega=1.8, t_l=0,t_h=0.7$, SDE)     & 1.42 & 305.7 & 4.7 & 0.80 & 0.65  \\
    \textbf{SiT-XL/2 (REPA, Interval) + Ours} ($\omega_0=1.8, \lambda=0.03$, SDE)     & \textbf{1.41} & \textbf{308.0} & 4.7 & 0.80 & 0.65  \\

    \arrayrulecolor{gray}\cmidrule(lr){1-7}
    SiT-XL/2 (REPA)($\omega=1.8$, ODE)     & 3.64 & 366.0 & 4.9 & 0.86 & 0.54  \\
    \textbf{SiT-XL/2 (REPA)+Ours}($\omega_0=1.7,\lambda=0.15$, ODE)    &\textbf{3.40} & 364.2 & \textbf{4.7} & 0.86 & \textbf{0.55} \\
    \arrayrulecolor{gray}\cmidrule(lr){1-7}
    SiT-XL/2 (REPA, Interval) ($\omega=1.8, t_l=0,t_h=0.7$, ODE)     & 1.56 & 283.1 & 4.6 & 0.78 & 0.66  \\
    \textbf{SiT-XL/2 (REPA, Interval) + Ours} ($\omega_0=1.8, \lambda=0.03$, ODE)     & \textbf{1.54} & \textbf{286.0} & 4.6 & 0.78 & 0.66  \\
    \arrayrulecolor{black}\midrule 
    \multirow{1}{*}{\bf{Model ( 512$\times$512 ), 10k samples, 100 inference timesteps}}
    \\
    \arrayrulecolor{black}\midrule 
    DiT-XL/2 ($\omega=1.5$, SDE)     & 6.81 & 229.5 & 20.0 & 0.82 & 0.62  \\
    \textbf{DiT-XL/2 + Ours}($\omega_0= 1, \lambda=\ln 2$, SDE)       & \textbf{6.54} & \textbf{280.9} & \textbf{19.7} & \textbf{0.83} & 0.60 \\
    
    \arrayrulecolor{black}\bottomrule
    \end{tabular}
    }
    
    \end{center}
\label{tab:fid}
\end{table*}

\noindent \textbf{Results on other models and datasets.}
To further validate the generality of C$^2$FG, we further extend our evaluation to text-to-image generation in Table \ref{tab:t2icoco}, another representative conditional generation task. On MS-COCO, we validate the effectiveness of C$^2$FG on both U-ViT \citep{bao2022all} and Stable Diffusion 1.5 \citep{rombach2022high}, as reported in Table~\ref{tab:t2icoco}. Our method consistently improves performance across architectures, lowering the FID of U-ViT from 5.37 to 5.28, and achieving a gain in CLIP-Score on Stable Diffusion.

\begin{table}[!b]
\caption{Evaluation of C$^2$FG on MS-COCO (U-ViT) in latent space and ImageNet-64 (EDM2) in pixel space.}
\centering
\scalebox{0.9}{
\begin{tabular}{lc}
    \toprule
    \multicolumn{2}{l}{\bf Latent Space (MS-COCO)} \\
    \midrule
    Model & FID$\downarrow$ \\
    \midrule
    U-ViT($\omega=2$) & 5.37 \\
    \textbf{U-ViT+Ours}($\omega_0=2,\lambda=0.2$) & \textbf{5.28} \\
    \midrule
    Model & CLIP$\uparrow$ \\
    \midrule
    SD15($\omega=5$) & 31.8 \\
    \textbf{SD15+Ours}($\omega_0=5,\lambda=0.2$) & \textbf{31.9} \\
    \midrule
    \multicolumn{2}{l}{\bf Pixel Space (ImageNet-64)} \\
    \midrule
    Model & FID$\downarrow$\\
    \midrule
    EDM2-S(no autoguidance) & 1.58\\
    EDM2-S-autog($\omega=1.7$) & 1.04\\
    \textbf{EDM2-S-autog+Ours}($\omega_0=1.7,\lambda=0.05$) & \textbf{1.03} \\
    \bottomrule
\end{tabular}
}

\label{tab:t2icoco}
\end{table}
In addition, we test C$^2$FG on ImageNet-64 by applying it to another exceptionally strong baseline: the EDM2~\cite{Karras2024edm2} combined with autoguidance~\citep{Karras2024autoguidance}, where the model operates directly in the pixel domain rather than a latent space. Notably, EDM2-S with autoguidance already achieves an exceptionally strong FID of 1.04, representing a near-saturation performance for pixel-space diffusion models. Remarkably, our C$^2$FG further reduces this number to 1.03. These results highlight that C$^2$FG serves as a plug-and-play design. More results and visualized analyses are provided in Appendix~\ref{more exp}.

\begin{figure*}[!htb]
    \centering
    \includegraphics[width=0.7\linewidth]{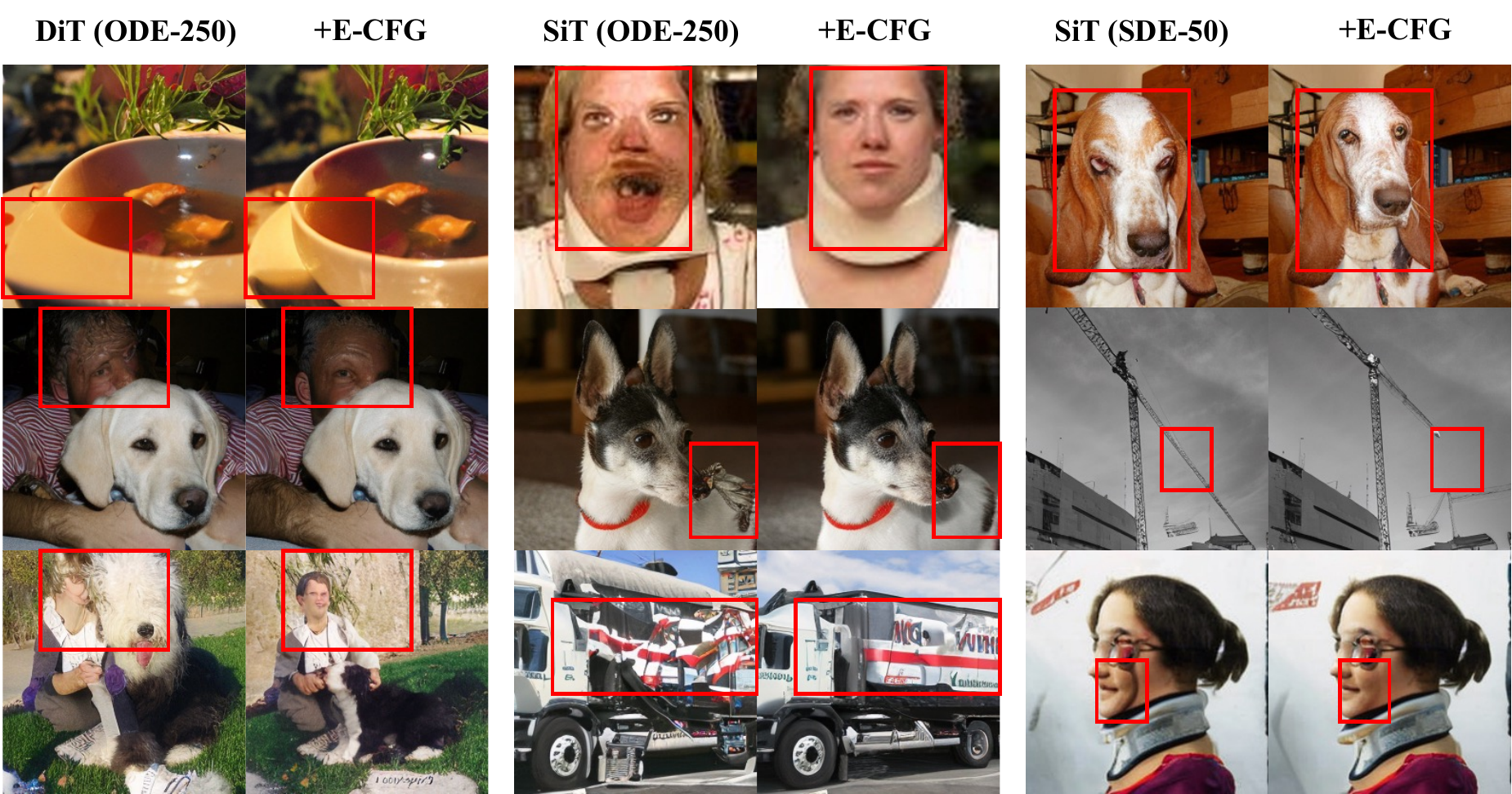}
    
    \caption{\textbf{Qualitative Comparison.} Qualitative comparison on Class-Conditional ImageNet datasets with different architectures and samplers. The sampler used and the number of inference steps are indicated in parentheses. }
    \label{fig:qualitive_multi}
\end{figure*}

\subsection{More Analysis}\label{ablation}
\noindent \textbf{Robustness of the Sampler.} As shown in Table \ref{tab:ablation}, integrating C$^2$FG with SiT-XL/2 (REPA) yields consistent performance gains across various timesteps and sampling schemes. At 50 inference steps, the FID score improves from 3.36 to 3.20 for SDE sampling and from 3.46 to 3.25 for ODE sampling. These improvements become even more pronounced at 20 steps, particularly with the ODE sampler. This indicates that C$^2$FG is robust to the samplers.

\begin{table*}[!hbt]
\caption{\textbf{Ablation Comparison.} Comparison of different evaluation metrics on Class-Conditional ImageNet datasets with different architectures and fewer timesteps.} 
    \begin{center}
    \scalebox{0.75}
    { 
    \begin{tabular}{lcccccc}
    \toprule
    \multicolumn{7}{l}{{\bf\normalsize ImageNet( 256$\times$256 )} } \\
    \toprule
    \multirow{1}{*}{\bf{Model~~\textcolor{red}{50} inference timesteps}} 
    & FID$\downarrow$ & sFID $\downarrow$ & Prec$\uparrow$ & Rec$\uparrow$ \\
    \arrayrulecolor{black}\midrule 
    SiT-XL/2 (REPA)($\omega=1.8$, SDE)     & 3.36 & 4.5 & 0.86 & 0.54  \\
    \textbf{SiT-XL/2 (REPA) + Ours} ($\omega_0=1.7, \lambda=0.15$, SDE)     & \textbf{3.20} & 4.6 & 0.86 & 0.54  \\
    \arrayrulecolor{gray}\cmidrule(lr){1-7}
    
    SiT-XL/2 (REPA)($\omega=1.8$, ODE)     & 3.46 & 4.5 & 0.86 & 0.54  \\
    \textbf{SiT-XL/2 (REPA)+Ours}($\omega_0=1.7,\lambda=0.15$, ODE)    &\textbf{3.25} & \textbf{4.4} & 0.86 & \textbf{0.55} \\
    \arrayrulecolor{black}\midrule 
    \multirow{1}{*}{\bf{Model~~\textcolor{red}{20} inference timesteps}}
    \\
    \arrayrulecolor{black}\midrule 
    SiT-XL/2 (REPA)($\omega=1.8$, SDE)     & 4.38 & 11.8 & 0.79 & 0.53  \\
    \textbf{SiT-XL/2 (REPA) + Ours} ($\omega_0=1.7, \lambda=0.15$, SDE)     & \textbf{4.30} & 12.1 & 0.79 & \textbf{0.54}  \\
    \arrayrulecolor{gray}\cmidrule(lr){1-7}
    
    SiT-XL/2 (REPA)($\omega=1.8$, ODE)     & 3.29 & 4.6 & 0.85 & 0.54  \\
    \textbf{SiT-XL/2 (REPA)+Ours}($\omega_0=1.7,\lambda=0.15$, ODE)    &\textbf{3.10} & \textbf{4.5} & 0.85 & 0.54 \\
    
    \arrayrulecolor{black}\bottomrule
    \end{tabular}
    }
    
    \end{center}
\label{tab:ablation}
\end{table*}

\noindent \textbf{Qualitative Comparison.}\label{qualitative}
Figure \ref{fig:qualitive_multi} presents a qualitative comparison. The examples highlighted in the red box show that C$^2$FG significantly enhances generation quality. Specifically, samples generated by C$^2$FG can effectively mitigate issues such as distortion and blurred texture in generated images.
Moreover, this improvement remains consistent across various samplers and sampling steps, demonstrating the effectiveness and generalizability of C$^2$FG.

\section{Conclusion}
In this work, we theoretically analyze Classifier-Free Guidance by bounding the score discrepancy, revealing the limitations of fixed-weight strategies and motivating time-dependent scaling. Based on this, we propose C$^2$FG, a training-free method that adapts guidance strength via an exponential control function. C$^2$FG consistently improves controllability and achieves state-of-the-art performance across diverse diffusion frameworks, tasks, and inference strategies. Our framework enables principled guidance design and may inspire theoretically grounded methods for conditional diffusion models.

\paragraph{Acknowledgment.} This work was supported by the National Natural Science Foundation of China (NSFC) under Grant U24A20220, by the Shanghai Municipal Commission of Science and Technology under Grant 22DZ2229005, and by the Shanghai Municipal Commission of Economy and Informatization under Grant 2024-GZL-RGZN-01008.

{
    \small
    \bibliographystyle{ieeenat_fullname}
    \bibliography{main}
}
\newpage
\appendix

\onecolumn

\section*{Appendix Overview}
This appendix provides additional details and supplementary results to support the main paper.
In Section~\ref{related}, we review related literature to place our work in a broader context.
Section~\ref{proof of thms} presents the detailed proofs of the theoretical results introduced in the main text.
In Section~\ref{relationship}, we further explore the connection between our MSE bound and Harnack-type inequalities, highlighting their theoretical implications.
In Section \ref{sec alg}, we show the difference between standard CFG and our method.
Finally, Section~\ref{more exp} reports additional experimental results and visualizations.

\section{Related Work}\label{related}
A scaling factor for conditional diffusion models was first introduced in CG \citep{Dhariwal2021DiffusionMB}, which controls the trade-off between fidelity and diversity:
\begin{align}\label{CG}
    \hat{\mu} = \mu_{\theta,\text{uncond}} + \gamma \, \Sigma_\theta(x_t, t) \nabla \log p_t(y \mid x_t),
\end{align}
where $\mu_{\theta,\text{uncond}}$ denotes the predicted mean of the unconditional denoiser, $\Sigma_\theta(x_t, t)$ is the predicted covariance (or noise scale) at step $t$, and $\nabla \log p_t(y \mid x_t)$ represents the conditional score function with respect to the label $y$. The hyperparameter $\gamma$ is the classifier-free guidance scale: $\gamma>1$ strengthens conditioning at the cost of diversity, while $\gamma<1$ weakens conditioning but increases sample diversity. This scaling modifies the reverse sampling distribution as:
\begin{align}
    \tilde{p}(x_{t-1} \mid x_t, y) = \frac{p(x_{t-1} \mid x_t) \, p^\gamma(y \mid x_{t-1})}{Z(x_t, y)}, 
    \quad Z(x_t, y) = \sum_{x_{t-1}} p(x_{t-1} \mid x_t) \, p^\gamma(y \mid x_{t-1}).
\end{align}

Then CFG \citep{Ho2022ClassifierFreeDG} eliminates the need for an external classifier by jointly training the network for both conditional and unconditional predictions:
\begin{align}
    &\epsilon_\theta(x_t, t, y) = \mu_{\theta,\text{uncond}} + \Sigma_\theta(x_t, t) \nabla \log p_t(y \mid x_t), 
    \quad\epsilon_\theta(x_t, t, \phi) = \mu_{\theta,\text{uncond}}.
\end{align}
where $\epsilon_\theta$ is the neural network’s output for noise prediction. 
Substituting these into \eqref{CG} and setting $\gamma = \omega$ yields the CFG formulation:
\begin{align}
    \hat{\epsilon}(x_t, t, y) = \omega \left[ \epsilon_\theta(x_t, t, y) - \epsilon_\theta(x_t, t, \phi) \right] + \epsilon_\theta(x_t, t, \varnothing),
\end{align}
We see that CFG and CG are using the same scaling factor. And for now CFG with this scaling technique that has been widely adopted in mainstream diffusion models, typically with a fixed CFG-scale.

However, recent studies have pointed out that using a constant guidance weight is not necessarily optimal and may lead to limitations in balancing fidelity and diversity. Specifically, several works have proposed various forms of dynamic or time-dependent scaling strategies to improve generation quality. 
\cite{Sadat2025GuidanceIT} proposes Frequency-Decoupled Guidance (FDG), an improved version of classifier-free guidance that operates in the frequency domain, which chooses a low cfg-scale for low frequencies and a high cfg-scale for high frequencies.
\cite{kynkaanniemi2024applying} observe that applying a constant classifier-free guidance (CFG) weight across all noise levels is suboptimal: guidance harms diversity in the high-noise regime, has little effect in the low-noise regime, and is only beneficial in the middle. They propose restricting guidance to a limited interval of noise levels, which both improves sample fidelity and diversity while reducing computational cost. 
\cite{poleski2025geoguide} propose a geometric guidance method for CG to address the vanishing gradient issue in late denoising stages of probabilistic approaches. Its core innovation enforces fixed-length gradient updates ($\|\nabla p\|$-normalized) proportional to data dimension ($\sqrt{D}/T$), maintaining consistent guidance strength throughout sampling.
\cite{lin2024common} rescale classifier-free guidance to prevent
over-exposure.
\cite{malarz2025classifier} propose $\beta$-adaptive scaling to address the trade-off between image quality and prompt alignment in standard CFG. It dynamically adjusts guidance strength via a time-dependent $\beta$-distribution $\beta(t)$, enforcing weak guidance at initial/final steps and strong guidance during critical mid-denoising phases. 
\cite{wang2024analysis} investigate different time-dependent schedulers for the guidance weight. Their analysis and experiments confirm that dynamic weighting strategies outperform fixed weights, with high weights being beneficial in the mid-noise regime but detrimental at the extremes. 
\cite{chung2025cfg} and \cite{chen2025diffieroptimizingdiffusionmodels} improve diffusion model performance by constraining CFG to the data manifold, enabling higher-quality generation, better inversion, and smoother interpolation at lower guidance scales.
\cite{shen2024rethinking} mitigate spatial inconsistency in classifier-free guidance by introducing Semantic-aware CFG, which segments latent images into semantic regions via attention maps and adaptively assigns region-specific guidance scales, leading to more balanced semantics and higher-quality generations.
\cite{wang2025diffusion} propose Diffusion-NPO, which incorporates non-parametric optimization into diffusion sampling via nearest-neighbor matching, improving sample diversity and quality without retraining and working across different models and datasets. 
\cite{bradley2025classifierfree-mm2}  investigate PCG, a theoretical foundations of CFG, and reveal the differrence and relationship between DDPM and DDIM, embedding CFG in a broader design space of principled sampling methods.
\cite{ye2024tfg} introduce TFG, a framework which encompasses existing methods as special cases. In their framework, they defined a  hyper-parameter space for their algorithm and analyze the underlying theoretical motivation of each hyper-parameter.

As for recent work, \textbf{RAAG} \cite{zhu2025raagratioawareadaptive} recompute $\omega$ at every reverse step via a lightweight exponential map of the current RATIO: $\omega(\rho) = 1+(\omega_{\max}-1)\exp(-\alpha \rho)$, which is similar to the form of our C$^2$FG. However, RAAG is primarily designed for text-to-image generation under strong conditioning, whereas our analysis highlights intrinsic properties of diffusion dynamics, making the applicability of our framework broader and not restricted to text-to-image tasks. \textit{Besides, their exponential design is motivated by empirical intuition, while ours is supported by formal theorems, providing a stronger theoretical grounding.}  
\textbf{Energy Rectification} \cite{yang2025efficienttrainingfreehighresolutionsynthesis} studies training-free high-resolution synthesis in diffusion models and discusses the role of classifier-free guidance in generation quality. In particular, they analyze the energy decay phenomenon during high-resolution synthesis and show that tuning the classifier-free guidance hyperparameter can significantly improve results.
\textbf{Stage-wise Dynamic of CFG} \cite{jin2025stagewisedynamicsclassifierfreeguidance-mm1}  analyze CFG under multimodal conditionals and show that the sampling process can be seen as three successive stages: early \textit{direction drift}, mid  \textit{mode separation} and late \textit{constration}. And then propose a stage-wise guidance schedule. However, their dynamic guidance schedule can actually be seen as a piece-wise CFG guidance $\omega(t)$ just like \cite{kynkaanniemi2024applying}. Moreover, their method seems relies on task-specific properties such as large guidance scales $\omega(t)$,  suffering from a lack of generality.
\textbf{S$^2$-Guidance} \cite{chen2025s2guidancestochasticselfguidance-mm3} propose a novel method in which network parameters are stochastically masked to form a latent subnetwork during each forward pass, guiding the model away from potential low-quality predictions and toward higher-quality outputs. However, their approach also relies on task-specific settings, such as text-to-image and text-to-video generation. 
\cite{sadat2025trainingproblemrethinkingclassifierfree} introduce an extension of CFG called \textbf{TSG}, whose motivation is based on the structure of diffusion network. They  compute the model outputs for the clean time-step embedding and a perturbed embedding and use their difference to guide the sampling. Importantly, the motivation behind TSG arises from the interaction between conditional and timestep embeddings within the network architecture, rather than from a theoretical analysis of the diffusion framework itself.

While these approaches have shown promising improvements, they are still largely heuristic in nature and often lack rigorous theoretical justification, leaving the principles of adaptive weight design not fully understood. To address this gap, our work provides a theoretical foundation for adaptive guidance. By establishing a sequence of results (Theorems~\ref{thm mse vp}--\ref{VEthm}), we uncover structural properties of diffusion processes under different initial distributions. These insights naturally motivate the design of adaptive, theoretically grounded scaling functions. In this way, our framework offers a more robust and general basis for conditional generation.


\section{Proof of Theorems}\label{proof of thms}
In this section we give the proof of theorems: 
\subsection{Proof of Theorem \ref{thm mse vp}}\label{proof thm1}
\begin{proof}[Proof of Theorem \ref{thm mse vp}]
    For VP-SDE
    \begin{align}
    \rd x_t=-\frac12 \beta_t x_t\rd t+\sqrt{\beta_t}\rd w_t,
    \end{align}
    we can represent $x_t$ with $x_0$:
    \begin{align}\label{VP-SDE-sol}
    x_t=\alpha(t) x_0+\sigma(t)\xi_t,
    \end{align}
    where $\alpha(t) = \exp{(-\frac{1}{2}\int_0^t\beta_s\rd s)},\sigma(t)= \alpha(t)\sqrt{\int_0^t \frac{\beta_s}{\alpha^2(s)} \rd s}$, and $\xi_t\sim\mathcal N(0,I)$.

    Hence we can get the $p(x_t, t|x_0)$:
    \begin{align}
    p(x, t|x_0)=
    \frac{1}{(2\pi\sigma^2(t))^{n/2}}\exp{\Big(-\frac{||x-\alpha(t) x_0||^2}{2\sigma^2(t)}\Big)},
    \end{align}
    by using Bayes formula, we can get the probablity density function :
    \begin{align}
    p(x,t)=\int_{\mathbb R^n}
    \frac{1}{(2\pi\sigma^2(t))^{n/2}}\exp{\Big(-\frac{||x-\alpha(t) x_0||^2}{2\sigma^2(t)}\Big)}p(x_0)\rd x_0,
    \end{align}
    then we can get the score:
    \begin{align}\label{VP-SDE-score}
    \nabla\log p(x,t)&=\frac{\nabla p(x_t, t)}{p(x_t,t)}\\
    &=
    \frac{\int_{\mathbb R^n}\frac{\alpha(t) x_0-x}{\sigma^2(t)}
    \exp{\Big(-\frac{||x-\alpha(t) x_0||^2}{2\sigma^2(t)}\Big)}p(x_0)\rd x_0}{\int_{\mathbb R^n}
    \exp{\Big(-\frac{||x-\alpha(t) x_0||^2}{2\sigma^2(t)}\Big)}p(x_0)\rd x_0}\\
    &=\frac{1}{\sigma^2(t)}\Big(\alpha(t)\mathbb E[x_0|x_t=x]-x\Big).
    \end{align}

    Denote that $p(x_0|y)=\tilde p(x_0)$, consider the MSE:
    \begin{align}\label{VP-SDE-mse}
    \|\nabla\log p(x,t) - \nabla\log \tilde p(x,t)\|
    =\frac{\alpha(t)}{\sigma^2(t)}\|\mathbb E_{x_0\sim p}[x_0|x_t=x]-\mathbb E_{x'_0\sim \tilde p}[x'_0|x_t=x]\|,
    \end{align}
    then we try bounding $f(t,x) = \|\mathbb E_{x_t\sim p_t}[x_0|x_t=x]-\mathbb E_{x'_t\sim \tilde p_t}[x'_0|x'_t=x]\|$ term. Assume that $f(t,x)$ is a smooth function on $\mathbb R^n\times [0,+\infty)$, it's easy to find that
    $$f(0,x)=0, f(+\infty,x) =  \|\mathbb E_{x_0\sim p}[x_0]-\mathbb E_{x'_0\sim \tilde p}[x'_0]\|, $$ 
    hence $f(t,x)$ is a bounded function on $t$, and we denote its bound by $C(x)$. Note that we cannot say that when $t\to0$, $\frac{\alpha(t)}{\sigma^2(t)}\|\mathbb E_{x_0\sim p}[x_0|x_t=x]-\mathbb E_{x'_0\sim \tilde p}[x'_0|x_t=x]\|\to0$, because $\sigma(t)\to0$, too. 
    
    In practical engineering applications of diffusion models, the sample space is often assumed to be compact, reflecting the fact that physical quantities are naturally limited and numerical simulations are performed on finite domains. So $C(x)$ can be bounded by $C$ without loss of convince. Assume that we talk about $x_0$ on any bounded domain $K$ with $\sup_{z\in K}|z|\le R$. Let total variation distance be $\text{TV}(\mu,\nu) = \frac{1}{2}\int|\mu(\rd x)-\nu(\rd x)|$
    
       \begin{align*}
    f(t,x)
    &= \big\lVert \mathbb{E}_{x_t\sim p_t}\big[X_0\mid x_t=x\big] - \mathbb{E}_{x_t\sim \tilde p_t}\big[X_0\mid x_t=x\big]\big\rVert \\[4pt]
    &= \Big\lVert \int x_0\big(p(x_0\mid x_t=x)-\tilde p(x_0\mid x_t=x)\big)\,\mathrm{d}x_0\Big\rVert \\[4pt]
    &\le 2 M\cdot \mathrm{TV}\big(p(\cdot\mid x_t=x),\,\tilde p(\cdot\mid x_t=x)\big) \\[4pt]
    &\le 2R =C.
    \end{align*}

    Then we can rewrite \eqref{VP-SDE-mse}
    \begin{align}\label{VP-SDE-mse-final}
    \|\nabla\log p(x,t) - \nabla\log \tilde p(x,t)\|
    \leq \frac{2\alpha(t)}{\sigma^2(t)}R.
    \end{align}

\end{proof}

\subsection{Proof of Theorem \ref{thm mse ve}}\label{proof thm2}
\begin{proof}[Proof of Theorem \ref{thm mse ve}]
    For VE-SDE
    \begin{align}
    \rd x_t=\sqrt{\frac{\rd\sigma_t^2}{\rd t}}\rd w_t.
    \end{align}
    we can represent $x_t$ with $x_0$:
    \begin{align}\label{VP-SDE-sol}
    x_t=x_0+\sigma(t)\xi_t,
    \end{align}
    where $\xi_t\sim\mathcal N(0,I)$.

    Like the proof of Theorem 1, we have
    \begin{align}\label{VP-SDE-mse-final}
    \|\nabla\log p(x,t) - \nabla\log \tilde p(x,t)\|
    \leq \frac{1}{\sigma^2(t)}C.
    \end{align}

\end{proof}


\subsection{Proof of Theorem \ref{VPthm}}\label{proof thm3}
First we give Lemma \ref{thm cutoff} and Lemma \ref{thm BB-W} without proof as below:
\begin{lemma}[Cut-off Function \citep{evans1998partial}]\label{thm cutoff}
There exists a cut-off function $\eta \in C_c^\infty(B_R)$ with $0 \leq \eta \leq 1$, such that $\eta \equiv 1$ on $B_{\frac{R}{2}}$, and for any $x \in \mathbb{R}^n$,
\begin{align}
|\nabla \eta|(x) \leq \frac{C}{R} \eta^\frac{1}{2}, \quad \Delta\eta(x) \geq -\frac{C}{R^2}
\end{align}
where $C>0$ depends only on the dimension $n$.
\end{lemma}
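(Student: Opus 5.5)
We construct $\eta$ for $R=1$ as a radial function and then rescale. The only nontrivial requirement is the gradient estimate $|\nabla\eta|\le \frac{C}{R}\eta^{1/2}$, which fails for a generic bump function near the edge of its support. We get it by taking the profile to be the \emph{square} of another smooth profile.

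\textbf{Step 1 (one-dimensional profile).} Fix $\phi\in C^\infty([0,\infty))$, nonincreasing, with $\phi\equiv1$ on $[0,\tfrac12]$, $\phi\equiv0$ on $[\tfrac{3}{4},\infty)$ and $0\le\phi\le1$. One can take the standard mollified step built from $e^{-1/s}$. Set $\psi=\phi^2$, so $\psi$ is again smooth, nonincreasing, equal to $1$ on $[0,\tfrac12]$ and supported in $[0,\tfrac34]$. Then
\[
|\psi'|=2\phi|\phi'|\le 2\|\phi'\|_\infty\,\psi^{1/2},\qquad |\psi''|\le 2\|\phi'\|_\infty^2+2\|\phi''\|_\infty=:A .
\]
These constants are absolute.

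\textbf{Step 2 (radial extension and scaling).} Define $\eta(x)=\psi(|x|/R)$. Since $\psi$ is constant near $0$, $\eta$ is $C^\infty$ (there is no singularity at the origin), $\eta\in C_c^\infty(B_R)$, $0\le\eta\le1$, and $\eta\equiv1$ on $B_{R/2}$. With $r=|x|$, the chain rule gives
\[
|\nabla\eta|=\frac1R|\psi'(r/R)|\le \frac{2\|\phi'\|_\infty}{R}\,\eta^{1/2}.
\]

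\textbf{Step 3 (Laplacian lower bound).} For a radial function,
\[
\Delta\eta=\frac{1}{R^2}\psi''(r/R)+\frac{n-1}{rR}\psi'(r/R).
\]
The first term is at least $-A/R^2$. For the second term, note that $\psi'\le0$ and that $\psi'(r/R)\neq0$ only when $r\ge R/2$, so there $\frac{n-1}{r}\le\frac{2(n-1)}{R}$. Hence the second term is at least $-\frac{2(n-1)\|\psi'\|_\infty}{R^2}$. Together,
\[
\Delta\eta\ge -\frac{C}{R^2}\quad\text{with}\quad C=\max\{2\|\phi'\|_\infty,\ A+2(n-1)\|\psi'\|_\infty\},
\]
which depends only on $n$.

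The main obstacle is the $\eta^{1/2}$ factor in the gradient bound. The squaring trick in Step 1 handles it, and everything else is routine bookkeeping. The only point needing care in the Laplacian bound is the $1/r$ factor, and it is harmless because $\psi'$ vanishes on $r<R/2$.
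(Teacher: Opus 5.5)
Your proof is correct and follows essentially the same radial construction $\eta(x)=\psi(|x|/R)$, with the same gradient and radial-Laplacian computation, that the paper carries out in the proof of its Witten-Laplacian cut-off lemma (this lemma itself is only cited from Evans). Your squaring trick $\psi=\phi^2$ also justifies the estimate $|\psi'|\le C\sqrt{\psi}$, which the paper merely asserts, and supporting the profile in $[0,3/4]$ ensures that $\eta$ genuinely has compact support inside the open ball $B_R$.
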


\begin{lemma}[Bochner formula and Bakry–Émery Inequality of Heat equation with Witten Laplacian \citep{bakry2006diffusions}]\label{thm BB-W}
Define linear operator $\textnormal{L}=\Delta-\nabla\phi\cdot\nabla$, and $\nabla^2\phi$ is positive semi-definite, then for any $g\in C^3$, we have
    \begin{align}
        \frac{1}{2}\textnormal{L}|\nabla g|^2=|\nabla^2g|^2+\left<\nabla g,\nabla \textnormal{L}g\right>+\nabla g^T\nabla^2\phi\nabla g,
    \end{align}
    and furthermore
    \begin{align}
        \frac{1}{2}\textnormal{L}|\nabla g|^2\geq \frac{|\textnormal{L} g|^2}{m}+\left<\nabla g,\nabla \textnormal{L} g\right>+\nabla g^T\nabla^2\phi\nabla g
    \end{align}
    where $|\nabla^2g|^2=\Sigma_{i,j=1}^n(\partial_{ij}g)^2$ and $m\geq n$ denotes the \emph{virtual dimension}.
\end{lemma}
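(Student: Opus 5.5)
The plan is to prove both statements by direct pointwise computation: first an exact identity for $\frac12 \textnormal{L}|\nabla g|^2$, then a Cauchy–Schwarz step on the Hessian. Only pointwise calculus is needed, since $g\in C^3$ makes every third derivative continuous and lets derivatives commute.

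\textbf{Step 1 (identity).} Start from the classical Euclidean Bochner formula
\[
\tfrac12\Delta|\nabla g|^2=|\nabla^2 g|^2+\left<\nabla g,\nabla\Delta g\right>,
\]
which follows by expanding $\tfrac12\sum_{i,j}\partial_{jj}(\partial_i g)^2$ and commuting $\partial_j$ with $\partial_i$.

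Next treat the drift part. Two elementary identities are needed:
\[
-\nabla\phi\cdot\nabla\big(\tfrac12|\nabla g|^2\big)=-\nabla^2 g(\nabla\phi,\nabla g),
\]
\[
\left<\nabla g,\nabla(\nabla\phi\cdot\nabla g)\right>=\nabla g^T\nabla^2\phi\,\nabla g+\nabla^2 g(\nabla\phi,\nabla g).
\]
From the second one,
\[
\left<\nabla g,\nabla \textnormal{L} g\right>=\left<\nabla g,\nabla\Delta g\right>-\nabla g^T\nabla^2\phi\,\nabla g-\nabla^2 g(\nabla\phi,\nabla g).
\]
Add $\tfrac12\Delta|\nabla g|^2$ to the first drift identity to get $\tfrac12\textnormal{L}|\nabla g|^2$. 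Then substitute $\left<\nabla g,\nabla\Delta g\right>$ from the last display. The cross terms $\nabla^2 g(\nabla\phi,\nabla g)$ cancel, and what remains is exactly the stated identity.

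\textbf{Step 2 (inequality).} Apply Cauchy–Schwarz to the eigenvalues of the symmetric matrix $\nabla^2 g$:
\[
|\nabla^2 g|^2\ge \frac{(\operatorname{tr}\nabla^2 g)^2}{n}=\frac{(\Delta g)^2}{n}.
\]
To pass from $\Delta g$ to $\textnormal{L} g=\Delta g-\nabla\phi\cdot\nabla g$, use the weighted inequality $(a+b)^2\le \frac{m}{n}a^2+\frac{m}{m-n}b^2$, valid for $m>n$. This gives
\[
\frac{(\Delta g)^2}{n}\ge \frac{(\textnormal{L} g)^2}{m}-\frac{(\nabla\phi\cdot\nabla g)^2}{m-n}.
\]
Inserting this into Step 1 yields the dimensional Bakry–Émery inequality in the form
\[
\tfrac12\textnormal{L}|\nabla g|^2\ge\frac{|\textnormal{L} g|^2}{m}+\left<\nabla g,\nabla \textnormal{L} g\right>+\nabla g^T\Big(\nabla^2\phi-\tfrac{\nabla\phi\otimes\nabla\phi}{m-n}\Big)\nabla g.
\]

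\textbf{Main obstacle.} The stated inequality keeps the full term $\nabla g^T\nabla^2\phi\,\nabla g$ together with $\frac{|\textnormal{L} g|^2}{m}$. Step 2 only delivers this if the correction $(\nabla\phi\cdot\nabla g)^2/(m-n)$ can be dropped. That is exactly the $\mathrm{CD}(K,m)$ versus $\mathrm{CD}(K,\infty)$ issue, and I do not expect it to hold unconditionally, so this is the step I would scrutinize first. Three routes are available:
\begin{itemize}
\item The stated form holds as written when $\nabla\phi\cdot\nabla g=0$, or when $\phi$ is constant (then $\textnormal{L}=\Delta$ and $m=n$ works directly).
\item For the VP application, $\phi=\tfrac12|x|^2$ and $\nabla^2\phi=\rmI$. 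One can instead keep the curvature $\nabla^2\phi-\frac{\nabla\phi\otimes\nabla\phi}{m-n}$ and restrict to a bounded region (say, where the cut-off of Lemma~\ref{thm cutoff} is supported), choosing $m$ large enough that this matrix stays nonnegative. This costs only a constant in the later Li–Yau argument.
\item Use the $m=\infty$ version, which holds with the full $\nabla^2\phi$ term but without the $\frac{|\textnormal{L} g|^2}{m}$ term.
\end{itemize}
I would present the corrected form as the rigorous statement and check which of these the proof of Theorem~\ref{VPthm} actually needs.
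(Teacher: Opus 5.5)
The paper does not prove this lemma; it is quoted without proof from the Bakry--\'Emery literature, so your computation is the only argument to assess. Your Step~1 is correct; note that it also uses $\phi\in C^2$, to expand $\nabla(\nabla\phi\cdot\nabla g)$. Your Step~2 correctly produces the genuine dimensional inequality, with curvature term $\nabla^2\phi-\frac{\nabla\phi\otimes\nabla\phi}{m-n}$, which is what that literature actually states.

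Your suspicion about the second display can be upgraded to a disproof. By your identity, the claimed inequality is equivalent to $|\nabla^2 g|^2\ge|\textnormal{L}g|^2/m$. This fails whenever $\nabla^2 g=0$ but $\nabla\phi\cdot\nabla g\neq0$. Take the paper's own $\phi=\frac12|x|^2$ and $g=x_1$. Then $|\nabla g|^2\equiv1$, $\textnormal{L}g=-x_1$, $\left<\nabla g,\nabla\textnormal{L}g\right>=-1$ and $\nabla g^T\nabla^2\phi\nabla g=1$. The claim reduces to $0\ge x_1^2/m$, which is false for every finite $m\ge n$ and every $x_1\neq0$. So only the identity and your corrected inequality are provable, and stating the corrected form, as you propose, is the right call.

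One correction to your second bullet. Since $\nabla\phi=x$, keeping $\rmI-\frac{xx^T}{m-n}\succeq0$ on $B_R$ forces $m\ge n+R^2$, so the price is not a fixed constant. It enters the Li--Yau bound of Theorem~\ref{thm Gradient Estimate-W} through $\frac{m\alpha^2}{2t}$ and the Harnack factor through $(s_2/s_1)^{m\alpha/2}$. Both blow up in the limit $R\to\infty$ taken in the proof of Theorem~\ref{VPthm}.

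Your third bullet does not help either. It discards exactly the $|\textnormal{L}g|^2/m$ term that generates $\frac{m\alpha^2}{2t}$ in that argument.

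In fact no variant can rescue the global statement. For any $a_0\neq0$, the function $u(x,t)=\exp\bigl(a_0e^{-t}x_1+\tfrac{a_0^2}{2}(1-e^{-2t})\bigr)$ is a positive solution of $\partial_tu=\Delta u-x\cdot\nabla u$. With $a=a_0e^{-t}$, it satisfies $|\nabla\log u|^2-\alpha\,\partial_t\log u=\alpha a x_1-(\alpha-1)a^2$, which is unbounded in $x_1$ at each fixed $t$. So when you check what Theorem~\ref{VPthm} needs, expect this route to give only an $R$-dependent, bounded-domain Harnack estimate.
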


Then we first prove such lemma:

\begin{lemma}[Cut-off Function for Heat Equation with Witten Laplacian]\label{thm cutoff-W}
There exists a cut-off function $\eta \in C_c^\infty(B_R)$ with $0 \leq \eta \leq 1$, such that $\eta \equiv 1$ on $B_{\frac{R}{2}}$, and for any $x \in \mathbb{R}^n$, $\phi = k(|x|)x, k\geq 0$ on $B_R$, 
\begin{align}
|\nabla \eta|(x) \leq \frac{C}{R} \eta^\frac{1}{2}, \quad \Delta\eta(x) \geq -\frac{C}{R^2}, \quad \nabla\phi \cdot \nabla \eta(x) \leq 0, 
\end{align}
 $C>0$ depends only on the dimension $n$.
\end{lemma}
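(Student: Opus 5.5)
We take the radial cutoff that is standard in Lemma~\ref{thm cutoff}, namely $\eta(x)=\psi(|x|/R)$ with a fixed nonincreasing profile $\psi$. The two estimates of Lemma~\ref{thm cutoff} then hold as usual. The only new requirement, $\nabla\phi\cdot\nabla\eta\le 0$, follows because both $\nabla\eta$ and $\nabla\phi$ are radial: $\nabla\eta$ points inward and $\nabla\phi$ points outward. We read the hypothesis on $\phi$ as saying that the drift field is radial and outward-pointing, $\nabla\phi(x)=k(|x|)\,x$ with $k\ge 0$ on $B_R$. For the VP-SDE the relevant weight is $\phi(x)=\|x\|^2/2$, so $k\equiv 1$.

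\textbf{Step 1: construct the profile.} Fix $\chi\in C^\infty(\mathbb R)$ that is nonincreasing, with $\chi\equiv1$ on $(-\infty,\tfrac12]$, $\chi\equiv0$ on $[1,\infty)$ and $0\le\chi\le1$. Set $\psi=\chi^2$ and $\eta(x)=\psi(|x|/R)$. Then:
\begin{itemize}
\item $\eta\in C_c^\infty(B_R)$, since $\psi$ is constant near $0$, so $\eta$ is smooth at the origin;
\item $0\le\eta\le1$ and $\eta\equiv1$ on $B_{R/2}$;
\item $\psi'=2\chi\chi'\le0$.
\end{itemize}

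\textbf{Step 2: verify the three inequalities by direct computation.} For $x\neq0$,
\begin{align*}
\nabla\eta(x)=\frac{\psi'(|x|/R)}{R}\frac{x}{|x|},\qquad
\Delta\eta(x)=\frac{\psi''(|x|/R)}{R^2}+\frac{n-1}{R|x|}\psi'(|x|/R).
\end{align*}
\begin{itemize}
\item \emph{Gradient bound.} $|\nabla\eta|=\frac{2\chi|\chi'|}{R}\le\frac{2\sup|\chi'|}{R}\,\eta^{1/2}$, because $\chi=\psi^{1/2}=\eta^{1/2}$.
\item \emph{Laplacian bound.} $\psi'$ is supported in $\{\tfrac12\le|x|/R\le1\}$, so on that set $\frac{1}{R|x|}\le\frac{2}{R^2}$. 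Hence $\Delta\eta\ge-\frac{\sup|\psi''|+2(n-1)\sup|\psi'|}{R^2}$.
\item \emph{Drift inequality.} $\nabla\phi\cdot\nabla\eta=k(|x|)\,\frac{|x|}{R}\,\psi'(|x|/R)\le0$, since $k\ge0$ and $\psi'\le0$. At $x=0$ we have $\nabla\eta(0)=0$.
\end{itemize}
Taking $C$ to be the largest of these constants gives a constant depending only on $n$ and the fixed profile $\chi$.

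\textbf{Main obstacle.} The computations themselves are routine. The delicate points are the $\eta^{1/2}$ factor in the gradient estimate and the interpretation of the hypothesis on $\phi$.
\begin{itemize}
\item The $\eta^{1/2}$ factor is what forces the choice $\psi=\chi^2$ instead of an arbitrary smooth profile.
\item The hypothesis on $\phi$ must be read so that $\nabla\phi$ is a nonnegative multiple of $x$. If $\phi$ is a general convex potential whose gradient is not radial, the sign condition can fail for a radial $\eta$. In that case the statement would require level sets of $\eta$ adapted to $\phi$, which we do not pursue. The radial reading is exactly the case needed for the Ornstein--Uhlenbeck drift of the VP-SDE.
\end{itemize}
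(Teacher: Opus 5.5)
Your proposal is correct and matches the paper's proof. It uses the same radial cutoff $\eta(x)=\psi(|x|/R)$ with a nonincreasing profile and the same gradient and Laplacian computations (using $|x|\ge R/2$ on the support of $\psi'$). It also uses the same drift argument $k(|x|)\,x\cdot\nabla\eta=k(|x|)\frac{|x|}{R}\psi'(|x|/R)\le 0$ under the reading $\nabla\phi=k(|x|)x$, which is how the paper's Step~4 uses the hypothesis too.

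Your only addition is the explicit choice $\psi=\chi^2$ to justify $|\psi'|\le C\sqrt{\psi}$, which the paper simply asserts as standard. That choice is convenient but not forced, as your remarks claim: any nonnegative profile with bounded second derivative already satisfies $|\psi'|^2\le 2\sup|\psi''|\,\psi$.
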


\begin{proof}
\textbf{Step 1. Construction of the cutoff.}
We construct a radial cutoff function by setting
\[
\eta(x) = \psi\!\left(\tfrac{|x|}{R}\right),
\]
where $\psi \in C_c^\infty([0,\infty))$ satisfies:
\[
\psi \equiv 1 \text{ on } [0,1/2], 
\qquad 
\psi \equiv 0 \text{ on } [1,\infty), 
\qquad 
\psi' \leq 0,
\]
together with the standard cutoff estimates
\[
|\psi'| \leq C \sqrt{\psi}, 
\qquad 
|\psi''| \leq C.
\]

\medskip
\noindent
\textbf{Step 2. Gradient estimate.}
Writing $r = |x|$, we compute
\[
\nabla \eta(x) = \frac{1}{R} \psi'\!\left(\tfrac{r}{R}\right)\frac{x}{r}.
\]
Hence
\[
|\nabla \eta(x)| \leq \frac{1}{R}\big|\psi'\!\left(\tfrac{r}{R}\right)\big| 
\leq \frac{C}{R}\sqrt{\eta(x)}.
\]

\medskip
\noindent
\textbf{Step 3. Laplacian estimate.}
Using the radial Laplacian formula, we have
\[
\Delta \eta(x) = \frac{1}{R^2} \psi''\!\left(\tfrac{r}{R}\right) 
+ \frac{n-1}{rR} \psi'\!\left(\tfrac{r}{R}\right).
\]
The first term is bounded by $C/R^2$ since $|\psi''| \leq C$.  
For the second term, note that $\psi' = 0$ when $r \leq R/2$, and for $r \in [R/2,R]$, we have
\[
\left|\frac{n-1}{rR}\psi'\!\left(\tfrac{r}{R}\right)\right|
\leq \frac{C}{R^2}.
\]
Therefore
\[
|\Delta \eta(x)| \leq \frac{C}{R^2}, 
\qquad 
\Delta \eta(x) \geq -\frac{C}{R^2}.
\]

\medskip
\noindent
\textbf{Step 4. Witten Laplacian estimate.}
Finally,
\[
\textnormal{L}\eta(x) = \Delta \eta(x) - kx \cdot \nabla \eta(x).
\]
Since
\[
x \cdot \nabla \eta(x) = \frac{r}{R}\psi'\!\left(\tfrac{r}{R}\right),
\]
and $\psi' \leq 0$, the term $k(x)x \cdot \nabla \eta(x) \leq 0$.  
\end{proof}

Based on this we give proof of Theorem \ref{thm Gradient Estimate-W}:
\begin{theorem}[Gradient Estimate of Heat Equation with Witten Laplacian]\label{thm Gradient Estimate-W}
   Let $u$ be a positive solution to the heat equation 
   \begin{align}\label{eq witten heat}
       \partial_tu=(\Delta-\nabla\phi\cdot\nabla)u,
   \end{align}
   on $(0,T]\times B_R.$ Assume that $\nabla^2\phi$ is positive semi-definite, $\phi = k(|x|)x, k\geq 0$ on $B_R$, then for any $(t,x)\in(0,T]\times B_{\frac R2}$, the following inequality holds:
   \begin{align}\label{}
       \frac{|\nabla u|^2}{u^2}-\alpha\frac{\partial_t u}{u}\leq\frac{m\alpha^2}{2t}+\frac{C\alpha^2}{R^2}\Big(1+\frac{\alpha^2}{\alpha - 1}\Big), 
   \end{align}
   where $m\geq n$ denotes the virtual dimension, $C(m,n)$ is a constant depends on $(m, n)$.
\end{theorem}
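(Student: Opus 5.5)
This is a Li--Yau gradient estimate for the weighted heat equation. The plan is the classical argument: set $f=\log u$, build a Li--Yau quantity, apply the Bochner--Bakry--Émery inequality of Lemma~\ref{thm BB-W}, localize with the cutoff of Lemma~\ref{thm cutoff-W}, and use the maximum principle.

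\textbf{Setup.} Write $\textnormal{L}=\Delta-\nabla\phi\cdot\nabla$, so that $u$ solves $\partial_t u=\textnormal{L}u$. With $f=\log u$ one has
\[
(\textnormal{L}-\partial_t)f=-|\nabla f|^2 .
\]
Define
\[
F=t\big(|\nabla f|^2-\alpha\,\partial_t f\big)=t\big(|\nabla f|^2-\alpha f_t\big).
\]
Since $\textnormal{L}f=f_t-|\nabla f|^2$, this gives $|\nabla f|^2-\alpha f_t=-\alpha\,\textnormal{L}f-(\alpha-1)|\nabla f|^2$.

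\textbf{Evolution inequality.} Apply Lemma~\ref{thm BB-W} to $g=f$ and discard the term $\nabla f^T\nabla^2\phi\nabla f\ge 0$, using positive semi-definiteness. Together with the commutation $\partial_t\textnormal{L}=\textnormal{L}\partial_t$ (valid since $\phi$ is time-independent), this yields
\[
(\textnormal{L}-\partial_t)F\ \ge\ -2\nabla f\cdot\nabla F+\frac{2t}{m}\big(|\nabla f|^2-f_t\big)^2-\frac{F}{t}.
\]
The $\phi$-terms appear with the favorable sign here, which is where the convexity hypothesis is used.

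\textbf{Localization.} Let $\eta$ be the cutoff of Lemma~\ref{thm cutoff-W} and set $G=\eta F$. Suppose $G$ attains a positive maximum on $[0,T]\times \overline{B_R}$ at a point $(t_0,x_0)$ with $t_0>0$ and $\eta>0$. At that point $\nabla G=0$, $\textnormal{L}G\le0$ and $\partial_tG\ge0$; the condition $\Delta G\le 0$ combined with $\nabla G=0$ gives $\textnormal{L}G\le 0$ directly. Multiplying the evolution inequality by $t\eta$ at the maximum point gives
\[
0\ \ge\ \frac{2t^2\eta^2}{m}\big(|\nabla f|^2-f_t\big)^2-\eta G+2G\,t\,\nabla f\cdot\nabla\eta+tG\,\textnormal{L}\eta-2tG\frac{|\nabla\eta|^2}{\eta}.
\]

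\textbf{Bounding the cutoff terms.} Lemma~\ref{thm cutoff-W} gives
\[
\textnormal{L}\eta=\Delta\eta-\nabla\phi\cdot\nabla\eta\ \ge\ -\frac{C}{R^2},
\]
precisely because $\nabla\phi\cdot\nabla\eta\le0$ for the radial $\phi=k(|x|)x$. This is the only place the structural assumption on $\phi$ enters. The other cutoff bound is $|\nabla\eta|^2/\eta\le C/R^2$.

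\textbf{Main obstacle.} The hard step is closing the quadratic inequality. Write $y=\eta|\nabla f|^2$ and $z=\eta f_t$. Then
\[
\eta^2\big(|\nabla f|^2-f_t\big)^2=\Big(\tfrac{1}{\alpha}\eta F/t+(1-\tfrac1\alpha)y\Big)^2\ \ge\ \frac{1}{\alpha^2}\Big(\frac{G}{t}+(\alpha-1)y\Big)^2,
\]
up to the exact constant bookkeeping. The cross term $|\nabla f||\nabla\eta|G$ is controlled by $\frac{C}{R}\,y^{1/2}G$ and absorbed by Young's inequality against the positive term $(\alpha-1)y\,G/(t\alpha^2)$. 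This absorption is what produces the factor $\frac{\alpha^2}{\alpha-1}$.

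\textbf{Conclusion.} After absorption one obtains
\[
G^2\le \frac{m\alpha^2}{2}\,G+ \frac{C m\alpha^2 t}{R^2}\Big(1+\frac{\alpha^2}{\alpha-1}\Big)G,
\]
so that
\[
G\le \frac{m\alpha^2}{2}+\frac{C\alpha^2 t}{R^2}\Big(1+\frac{\alpha^2}{\alpha-1}\Big).
\]
On $B_{R/2}$ we have $\eta=1$, so $F$ obeys the same bound. Dividing by $t$ gives the stated estimate, with $C=C(m,n)$.
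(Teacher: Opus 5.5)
Your outline follows the paper's proof step for step. Both use the Li--Yau quantity $F=t(|\nabla f|^2-\alpha\partial_t f)$, then Lemma~\ref{thm BB-W} with the $\nabla^2\phi$ term discarded. Both localize with the cutoff of Lemma~\ref{thm cutoff-W}, where $\nabla\phi\cdot\nabla\eta\le 0$ gives $\textnormal{L}\eta\ge -C/R^2$, and apply the maximum principle. Both close with Young's inequality, absorbing $G|\nabla f||\nabla\eta|$ into the $(\alpha-1)\,t\,\eta|\nabla f|^2G$ term, which is where $\alpha^2/(\alpha-1)$ comes from. Your localization is slightly cleaner than the paper's, since you correctly use $\partial_tG\ge0$ at the maximum. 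One detail is missing: the maximum must be taken over $[0,t]\times\overline{B_R}$ for each target time $t$, as the paper does with an arbitrary $T'$. Otherwise dividing by $t$ leaves a factor $T/t$ in the $R^{-2}$ term.

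The real problem is the step you share with the paper, the dimensional inequality of Lemma~\ref{thm BB-W}. It does not follow from $\nabla^2\phi\ge 0$. Since $\textnormal{L}g=\Delta g-\nabla\phi\cdot\nabla g$, what one actually gets for $m>n$ is
\[
|\nabla^2g|^2+\nabla g^T\nabla^2\phi\nabla g\ \ge\ \frac{(\textnormal{L}g)^2}{m}+\nabla g^T\Big(\nabla^2\phi-\frac{\nabla\phi\,\nabla\phi^T}{m-n}\Big)\nabla g .
\]
So dropping the last term requires $\nabla^2\phi\ge\nabla\phi\,\nabla\phi^T/(m-n)$, not just convexity. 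Concretely, take $n=1$, $\phi=x^2/2$, $g=x$. Then $\textnormal{L}|\nabla g|^2=0$, $\langle\nabla g,\nabla\textnormal{L}g\rangle=-1$ and $(\textnormal{L}g)^2=x^2$. Your Bochner step would then assert $0\ge x^2/m-1$, which is false for $|x|>\sqrt m$.

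This cannot be fixed by bookkeeping, because the statement itself fails for $\phi=|x|^2/2$, the case used in Theorem~\ref{VPthm}. For $c>0$, the function $u(x,t)=\exp\big(ce^{-t}x+\tfrac{c^2}{2}(1-e^{-2t})\big)$ is a positive solution of $\partial_tu=u''-xu'$ on all of $\mathbb R\times(0,\infty)$. For it, $|\nabla f|^2-\alpha\partial_tf=\alpha ce^{-t}x-(\alpha-1)c^2e^{-2t}$. At $x=R/4\in B_{R/2}$ this grows linearly in $R$, while the claimed right-hand side stays bounded as $R\to\infty$. So your proposal is faithful to the paper, but neither argument proves the theorem. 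A correct version needs the hypothesis $\nabla^2\phi\ge\nabla\phi\,\nabla\phi^T/(m-n)$ on $B_R$. For the Ornstein--Uhlenbeck drift this forces $m\ge n+R^2$, so the $R\to\infty$ limit used downstream is lost. The alternative is a dimension-free substitute for the Li--Yau estimate.
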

\begin{proof}
    We define linear operator $\textnormal{L}=\Delta-\nabla\phi\cdot\nabla$, and function $f=\log u,F=t(|\nabla f|^2-\alpha\partial_tf)$, then applying it into \eqref{eq witten heat}, we have
    \begin{align}
           &\partial_tf=\textnormal{L} f+|\nabla f|^2,\label{eq differential1}\\
           &\textnormal{L} f=-|\nabla f|^2+\partial_tf=-\frac{F}{\alpha t}-\frac{\alpha-1}{\alpha}|\nabla f|^2\label{eq differential2},\\
           &\Delta f=\textnormal{L} f+\left<\nabla\phi,\nabla f\right> = -\frac{F}{\alpha t}+\left<\nabla\phi-\frac{\alpha-1}{\alpha}\nabla f,\nabla f\right>\label{eq differential3}.
    \end{align}
    
    Based on Lemma \ref{thm BB-W} and \eqref{eq differential1},\eqref{eq differential2}, \eqref{eq differential3}, we can get
    \begin{align*}
        \textnormal{L} F &= t((\Delta-\nabla\phi\cdot\nabla)|\nabla f|^2-\alpha\partial_t((\Delta-\nabla\phi\cdot\nabla) f))\\
        &= t\left(2|\nabla^2f|^2+2\left<\nabla f,\nabla \textnormal{L}f\right>-\alpha\partial_t(\textnormal{L} f)+2\nabla f^T\nabla^2\phi\nabla f\right)\\
        &\geq t\left(\frac2m|\textnormal{L}f|^2
        +2\left<\nabla f,\nabla \textnormal{L}f\right>-\alpha\partial_t(\textnormal{L} f)+2\nabla f^T\nabla^2\phi\nabla f\right)\\
        &\geq t\frac{2|-\frac{F}{\alpha t}+\left<-\frac{\alpha-1}{\alpha}\nabla f,\nabla f\right>|^2}{m}\\
        &~~~~+t\left(2\left<\nabla f,\nabla\left(-\frac{F}{\alpha t}-\frac{\alpha-1}{\alpha}|\nabla f|^2\right)\right>-\alpha\partial_t(-\frac{F}{\alpha t}-\frac{\alpha-1}{\alpha}|\nabla f|^2)\right)\\
        &=\left(\frac{2}{m\alpha^2}\left(\frac{F^2}{t}+2(\alpha-1)F|\nabla f|^2+(\alpha-1)^2t|\nabla f|^4\right)\right)-\frac{2}{\alpha}\left<\nabla f,\nabla F\right>\\
        &~~~~-\frac{2(\alpha-1)}t{\alpha }\left<\nabla f,\nabla |\nabla f|^2\right>-\frac{F}{t}+\partial_tF+2(\alpha-1)t\left<\nabla f,\partial_t\nabla f\right>\\
        &\geq\left(\frac{2}{m\alpha^2}\left(\frac{F^2}{t}+2(\alpha-1)F|\nabla f|^2\right)\right)-\frac{2}{\alpha}\left<\nabla f,\nabla F\right>-\frac{F}{t}+\partial_tF\\
        &~~~~+\frac{2(\alpha-1)}{t}\alpha \left<\nabla f,\nabla( -|\nabla f|^2+\alpha\partial_t\nabla f)\right>\\
        &=\left(\frac{2}{m\alpha^2}\left(\frac{F^2}{t}+2(\alpha-1)F|\nabla f|^2\right)\right)-2\left<\nabla f,\nabla F\right>-\frac{F}{t}+\partial_tF, \\
    \end{align*}
    hence we have 
    \begin{align}
        (\partial_t-\textnormal{L})F\leq -\left(\frac{2}{m\alpha^2}\left(\frac{F^2}{t}+2(\alpha-1)F|\nabla f|^2\right)\right)+\frac{F}{t}+2\left<\nabla f,\nabla F\right>.
    \end{align}

    Let us consider the cut-off function $\eta$ which satisfies $\innerprod{\nabla\phi}{\nabla \eta} \geq 0$(Lemma \ref{thm cutoff-W}). We use the Bochner technique to estimate its upper bound, $\forall T'\in(0,T]$, suppose $\eta F$  attains its maximum over $(0,T']\times\bar{B}_R$ at $(t_0,x_0)$. Without loss of generality, assume $(\eta F)(t_0,x_0)>0$; otherwise, the conclusion of the theorem holds trivially. Consequently, we have $\eta(x_0),F(t_0,x_0)>0$, which implies $x_0\notin\partial B_R, t_0>0$. Thus, $(t_0,x_0)$ lies in the interior of $(B_R)_T$. Then we consider
    \begin{align*}
        (\partial_t-\textnormal{L})(\eta F)&=-F\cdot \textnormal{L}\eta-2\left<\nabla\eta,\nabla F\right>+\eta (\partial_t-\textnormal{L})F \\
        &=-F\cdot\Delta\eta + F\cdot\left<\nabla\phi, \nabla\eta\right> - 2\left<\nabla\eta,\nabla F\right>+\eta (\partial_t-\textnormal{L})F \\
        &\leq \frac{C}{R^2}F-2\left<\nabla\eta,\nabla F\right> + F\cdot\left<\nabla\phi, \nabla\eta\right>\\
        &~~~~~~+ \eta \left(-\left(\frac{2}{m\alpha^2}\left(\frac{F^2}{t}+2(\alpha-1)F|\nabla f|^2\right)\right)+\frac{F}{t}+2\left<\nabla f,\nabla F\right>\right).
    \end{align*}
    
    Applying $\nabla F=\frac{\nabla(\eta F)}{\eta}-\frac{\nabla\eta}{\eta}F$, we have
    \begin{align*}
        (\partial_t-\textnormal{L})(\eta F)(t_0,x_0)&\leq \frac{C}{R^2}F-\frac{2}{\eta}\left<\nabla\eta,\nabla (\eta F))\right>+2\frac{|\nabla\eta|^2}{\eta}F + F\cdot\left<\nabla\phi, \nabla\eta\right>
        \\&~~+\eta \left(-\left(\frac{2}{m\alpha^2}\left(\frac{F^2}{t_0}+2(\alpha-1)F|\nabla f|^2\right)\right)+\frac{F}{t_0}+2\left<\nabla f,\nabla F\right>\right)
    \end{align*}
    Using the properties of maximum$$\nabla(\eta F)(t_0,x_0)=0,\Delta(\eta F)(t_0,x_0)\leq0,\partial_t(\eta F)(t_0,x_0)=0, $$and applying Lemma \ref{thm cutoff} so that
    \begin{align}
        0&\leq \frac{C+2C^2}{R^2}F-\frac{2}{m\alpha^2}\frac{\eta F^2}{t_0}-\frac{4(\alpha-1)}{m\alpha^2}\eta F|\nabla f|^2+\frac{\eta F}{t_0}
        \\&~~~~~+2\eta \left<\nabla f,\nabla F\right> + F\cdot\left<\nabla\phi, \nabla\eta\right>\\
        &=\frac{C+2C^2}{R^2}F-\frac{2}{m\alpha^2}\frac{\eta F^2}{t_0}-\frac{4(\alpha-1)}{m\alpha^2}\eta F|\nabla f|^2+\frac{\eta F}{t_0}
        \\&~~~~~+2\left<\nabla f,\nabla (\eta F))\right>-2F\left<\nabla f,\nabla \eta\right> + F\cdot\left<\nabla\phi, \nabla\eta\right>\\
        &=\frac{C+2C^2}{R^2}F-\frac{2}{m\alpha^2}\frac{\eta F^2}{t_0}-\frac{4(\alpha-1)}{m\alpha^2}\eta F|\nabla f|^2+\frac{\eta F}{t_0}
        \\&~~~~~-2F\left<\nabla f,\nabla \eta\right> + F\cdot\left<\nabla\phi, \nabla\eta\right>\label{ineq diff1}, 
    \end{align}
    then let us consider two of the terms $\frac{4(\alpha-1)}{m\alpha^2}\eta F|\nabla f|^2+2F\left<\nabla f,\nabla \eta\right>$,
    \begin{align*}
        \frac{4(\alpha-1)}{m\alpha^2}\eta F|\nabla f|^2+2F\left<\nabla f,\nabla \eta\right>&\geq \frac{4(\alpha-1)}{m\alpha^2}\eta F|\nabla f|^2-2F|\nabla f||\nabla \eta|\\
        &\geq \frac{4(\alpha-1)}{m\alpha^2}\eta F|\nabla f|^2\\&~~-F(\frac{4(\alpha-1)R^2}{m\alpha^2C^2}|\nabla f|^2|\nabla\eta|^2+\frac{m\alpha^2C^2}{4(\alpha-1)R^2})\\
        &\geq -\frac{m\alpha^2C^2}{4(\alpha-1)R^2}F
    \end{align*}
    then inequality \ref{ineq diff1} can be turn into
    \begin{align*}
        0
        &\leq \left(\frac{m\alpha^2C^2}{4(\alpha-1)R^2}+\frac{C+2C^2}{R^2}\right)F-\frac{2}{m\alpha^2}\frac{\eta F^2}{t_0}+\frac{\eta F}{t_0} + F\cdot\left<\nabla\phi, \nabla\eta\right>,
    \end{align*}
     
    then we divide $F$ and then get 
    \begin{align*}
        \eta F(t_0,x_0)&\leq  \frac{m\alpha^2}{2}t_0 \left(\frac{m\alpha^2C^2}{4(\alpha-1)R^2}+\frac{C+2C^2}{R^2}+\frac{\eta}{t_0} + \left<\nabla\phi, \nabla\eta\right>\right)\\
        &\leq \frac{m\alpha^2}{2}t_0\left( \frac{m\alpha^2C^2}{4(\alpha-1)R^2}+\frac{C+2C^2}{R^2}+\frac{1}{t_0}\right)\\
        &\leq\frac{m\alpha^2}{2}+\frac{m\alpha^2}{2}\left(\frac{m\alpha^2C^2}{4(\alpha-1)R^2}+\frac{C+2C^2}{R^2}\right)t_0\\
        &\leq \frac{m\alpha^2}{2}+\frac{C_1\alpha^2}{R^2}\left(\frac{\alpha^2}{\alpha-1}+1\right)T', \\
        &(C_1 = \max\{m^2C^2/8, C^2+C/2\}),
    \end{align*}
    On $B_{\frac{R}{2}}$, $\eta = 1, \nabla\eta=0$, so for all $(t,x)\in (0,T']\times B_{\frac{R}{2}}$
    \begin{align*}
       t(|\nabla f|^2-\alpha\partial_tf)|_{t=T'}&= F(T',x) = \eta F(T',x) \leq \eta F(t_0,x_0)\\
       &\leq   \frac{m\alpha^2}{2}+\frac{C_1\alpha^2}{R^2}\Big(\frac{\alpha^2}{\alpha-1}+1\Big)T',
    \end{align*}
    $T'$ is arbitrary, so
    \begin{align}
        (|\nabla f|^2-\alpha\partial_tf)&\leq   \frac{m\alpha^2}{2t}+\frac{C_1\alpha^2}{R^2}\Big(\frac{\alpha^2}{\alpha-1}+1\Big).
    \end{align}
\end{proof}

From Theorem \ref{thm Gradient Estimate-W} we can conclude Theorem \ref{thm Witten heat inq}
\begin{theorem}[Harnack-type Inequality of Heat Equation with Witten Laplacian ]\label{thm Witten heat inq}
Let $u$ be a positive solution of the heat equation $\partial_t u = \textnormal{L} u$ in $ (0,T] \times B_R$, where $\alpha > 1$.  
For any $x_1, x_2 \in B_{\frac{R}{2}}$ and $0 < t_1 < t_2 \leq T$, the following inequality holds:
\begin{align}
    u(x_1,t_1) \leq u(x_2,t_2) \left(\frac{t_2}{t_1} \right)^{\frac{m\alpha}{2}}
    \exp{\left( \frac{\alpha^2 \|x_1-x_2\|^2}{4(t_2 - t_1)}
    + \frac{C\alpha}{R^2} \left( 1 + \frac{\alpha^2}{\alpha - 1} \right)(t_2 - t_1) \right)},
\end{align}
where $C = C(m,n)$.
\end{theorem}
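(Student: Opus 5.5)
The plan is the standard Li--Yau integration of the gradient estimate along a space-time path. Set $f=\log u$. Theorem~\ref{thm Gradient Estimate-W} gives, for every $(t,x)\in(0,T]\times B_{R/2}$,
\begin{align*}
\partial_t f \ge \frac{1}{\alpha}|\nabla f|^2 - \frac{m\alpha}{2t} - \frac{C\alpha}{R^2}\Big(1+\frac{\alpha^2}{\alpha-1}\Big).
\end{align*}
Join $(x_1,t_1)$ to $(x_2,t_2)$ by the straight segment $\gamma(s)=(1-s)x_2+s x_1$ with time $\tau(s)=(1-s)t_2+s t_1$, $s\in[0,1]$. Since $B_{R/2}$ is convex, the segment stays in the region where the estimate holds. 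Then differentiate $h(s)=f(\gamma(s),\tau(s))$.

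\textbf{Key computation.} We have $h'(s)=\langle\nabla f,x_1-x_2\rangle-(t_2-t_1)\partial_t f$. Substituting the lower bound for $\partial_t f$ gives
\begin{align*}
h'(s)\le \langle\nabla f,x_1-x_2\rangle-\frac{t_2-t_1}{\alpha}|\nabla f|^2+(t_2-t_1)\Big(\frac{m\alpha}{2\tau}+\frac{C\alpha}{R^2}\Big(1+\frac{\alpha^2}{\alpha-1}\Big)\Big).
\end{align*}
Maximize the quadratic in $|\nabla f|$ (Cauchy--Schwarz plus completing the square): $\langle v,w\rangle-a|v|^2\le |w|^2/(4a)$. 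This bounds the first two terms by $\frac{\alpha\|x_1-x_2\|^2}{4(t_2-t_1)}$.

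\textbf{Integration.} Integrate over $s\in[0,1]$. The term $\int_0^1\frac{(t_2-t_1)\,ds}{\tau(s)}$ equals $\log(t_2/t_1)$, which produces the factor $(t_2/t_1)^{m\alpha/2}$. The constant term integrates to $\frac{C\alpha}{R^2}(1+\frac{\alpha^2}{\alpha-1})(t_2-t_1)$. Exponentiating $f(x_1,t_1)-f(x_2,t_2)=h(1)-h(0)$ then yields the stated inequality.

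Two points need care. First, the quadratic term comes out as $\alpha\|x_1-x_2\|^2/4$, whereas the statement has $\alpha^2$. Since $\alpha>1$, $\alpha\le\alpha^2$, so the bound proven implies the stated one; I would simply note this relaxation. Second, the constant in Theorem~\ref{thm Gradient Estimate-W} carries $\alpha^2$ and dividing by $\alpha$ leaves $\alpha$, which matches the statement.

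The main obstacle is mostly bookkeeping. I must make sure the gradient estimate is applied at interior points $\tau>0$: this holds since $\tau\ge t_1>0$. I must also confirm the segment lies in $B_{R/2}$, which is where the estimate holds and which convexity guarantees.
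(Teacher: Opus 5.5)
Your proposal is correct and follows the paper's proof essentially step for step. It uses the same straight space-time segment from $(x_2,t_2)$ to $(x_1,t_1)$, the gradient estimate of Theorem~\ref{thm Gradient Estimate-W} divided by $\alpha$, Cauchy--Schwarz plus completing the square, and $\int_0^1 (t_2-t_1)/\tau(s)\,ds=\log(t_2/t_1)$. The paper's computation also yields only $\alpha\|x_1-x_2\|^2/(4(t_2-t_1))$, so your remark that $\alpha\le\alpha^2$, and your check that convexity of $B_{R/2}$ keeps the path inside the region where the estimate holds, make explicit two points the paper leaves unstated.
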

\begin{proof}
Let $f=\log u$. Consider the line segment
\[
    L(s) = (1-s)(t_2, x_2) + s(t_1, x_1).
\]
We have
\begin{align*}
    \log\frac{u(x_1,t_1)}{u(x_2,t_2)}
    &= \int_0^1 \frac{d}{ds} f(L(s)) \, ds \\
    &= \int_0^1 \left[ \nabla f(L(s)) \cdot (x_1 - x_2)
    + \partial_t f(L(s)) (t_1 - t_2) \right] ds.
\end{align*}
Moreover, using the inequality
\[
    -\partial_t f \leq -\frac{1}{\alpha} |\nabla f|^2 + \frac{m\alpha}{2t}
    + \left[\frac{C\alpha}{R^2} \left( \frac{\alpha^2}{\alpha - 1} + 1 \right)\right],
\]
we get
\begin{align*}
    \log\frac{u(x_1,t_1)}{u(x_2,t_2)}
    &\leq \int_0^1 \Big[ |\nabla f(L(s))|\, |x_1 - x_2| \\
    &\quad + \Big( -\frac{1}{\alpha} |\nabla f|^2(L(s))
    + \frac{m\alpha}{2[(1-s) t_2 + s t_1]}\\
    &\quad + \frac{C\alpha}{R^2} \left( \frac{\alpha^2}{\alpha - 1} + 1 \right)\Big) (t_2 - t_1) \Big] ds.
\end{align*}
Using the inequality
\[
    |\nabla f(L(s))|\, |x_1 - x_2|
    - \frac{t_2 - t_1}{\alpha} |\nabla f|^2(L(s))
    \leq \frac{\alpha\, d^2(x_1,x_2)}{4(t_2 - t_1)},
\]
we obtain
\begin{align*}
    \log\frac{u(x_1,t_1)}{u(x_2,t_2)}
    &\leq \frac{\alpha\, d^2(x_1,x_2)}{4(t_2 - t_1)}
    + \frac{m\alpha}{2} \ln\frac{t_2}{t_1}
    + \frac{C\alpha}{R^2} \left( \frac{\alpha^2}{\alpha - 1} + 1 \right) (t_2 - t_1).
\end{align*}
\end{proof}

Finally, we can prove Theorem \ref{VPthm}:
\begin{proof}[Proof of Theorem \ref{VPthm}]
        The VP-SDE is given by
    \begin{equation}
        \rd x_t = -\frac{1}{2}\beta_t x_t \rd t + \sqrt{\beta_t} \rd W_t,
    \end{equation}
    and its corresponding Fokker-Planck equation (FPE) is
    \begin{equation}
        \frac{\partial p_t(x)}{\partial t} = \frac{1}{2}\beta_t \left( \nabla_x \cdot [x p_t(x)] + \Delta_x p_t(x) \right).
    \end{equation}
    
    We can reparameterize $t$ by letting $ds = \frac{1}{2}\beta_t dt$. Then,
    \begin{align}
        s(t) &= \frac{1}{2}\int_0^t \beta_r dr, \\
        \frac{\rd}{\rd t} &= \frac{1}{2}\beta_t \frac{\rd}{\rd s}.
    \end{align}
    Thus,
    \begin{equation}
        \frac{\partial p_{t(s)}(x)}{\partial s} = \frac{\partial p_t}{\partial t} \frac{\rd t}{\rd s} = \frac{\partial p_t(x)}{\partial t} \cdot \frac{1}{\frac{1}{2}\beta_t} = \nabla_x \cdot [x p_t(x)] + \Delta_x p_t(x).
    \end{equation}
    
    For this new FPE
    \begin{equation}
        \frac{\partial p_{t(s)}(x)}{\partial s} = \nabla_x \cdot [x p_{t(s)}(x)] + \Delta_x p_{t(s)}(x),
    \end{equation}
    the corresponding SDE is
    \begin{equation}
        \rd x_{t(s)} = -x_{t(s)} \rd s + \sqrt{2} \rd W_s.
    \end{equation}
    
    Assume $p(x,t)$ is a positive solution to this FPE, and let $u(x,t) = p(x,t) e^{|x|^2/2}$. Computing the right-hand side:
    \begin{align}
        \nabla(xp) &= x\nabla p + n p = (n u + x \nabla u - |x|^2 u) e^{-|x|^2/2}, \\
        \Delta p &= \nabla \cdot [(\nabla u - x u) e^{-x^2/2}] = [\Delta u - n u - 2x \nabla u + |x|^2 u] e^{-|x|^2/2}, \\
        \nabla(xp) + \Delta p &= [\Delta u - x \nabla u] e^{-|x|^2/2}.
    \end{align}
    
    Thus, the FPE for $u$ is
    \begin{equation}
        \frac{\partial u_{t(s)}(x)}{\partial s} = \Delta u - x \cdot \nabla u = \Delta u - \nabla \phi \cdot \nabla u, \quad \phi = \frac{|x|^2}{2},
    \end{equation}
    which satisfies the equation in \textbf{Theorem \ref{thm Witten heat inq}}, and we can easily figure out that $k(|x|)=1>0$.
    
    Therefore, for any $ \alpha>1, x_1,x_2\in M, 0<s_1<s_2<+\infty$, and let $R\to\infty$, the following inequality holds:
    \begin{align}
    u(x_1, t(s_1)) \leq u(x_2, t(s_2)) \left(\frac{s_2}{s_1} \right)^{\frac{m\alpha}{2}} \exp\left(\frac{\alpha^2 \|x_1-x_2\|^2}{4(s_2 - s_1)}\right).
    \end{align}
    
Rewriting it in terms of $p$, we obtain
    \begin{align}
    p(x_1, t(s_1)) \leq p(x_2, t(s_2)) \left(\frac{s_2}{s_1} \right)^{\frac{m\alpha}{2}} \exp\left(\frac{\alpha^2 \|x_1-x_2\|^2}{4(s_2 - s_1)} + \frac{\|x_2\|^2 - \|x_1\|^2}{2}\right).
    \end{align}
\end{proof}

\subsection{Proof of Theorem \ref{VEthm}}\label{proof thm4}
First we give Lemma \ref{thm BB} without proof as below:
\vspace{2mm}
\begin{lemma}[Bochner Formula and Bakry–Émery Inequality \citep{bakry2006diffusions}]\label{thm BB}
For any $g\in C^3$, we have
    \begin{align}
        \frac{1}{2}\Delta|\nabla g|^2=|\nabla^2g|^2+\left<\nabla g,\nabla \Delta g\right>,
    \end{align}
    and furthermore
    \begin{align}
        \frac{1}{2} \Delta |\nabla g|^2\geq \frac{|\Delta g|^2}{n}+\left<\nabla g,\nabla \Delta g\right>
    \end{align}
    where $|\nabla^2g|^2=\Sigma_{i,j=1}^n(\partial_{ij}g)^2$.
\end{lemma}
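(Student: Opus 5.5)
The plan is to prove both parts of Lemma~\ref{thm BB} by a direct coordinate computation, since the Laplacian on $\mathbb R^n$ has flat geometry and no curvature term appears. Write $|\nabla g|^2=\sum_{j=1}^n(\partial_j g)^2$ and apply $\Delta=\sum_{i=1}^n\partial_i\partial_i$ term by term.

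The first step is the Bochner identity. For each $i,j$ the product rule gives
\begin{align*}
\tfrac12\,\partial_i\partial_i(\partial_j g)^2=\partial_i\big(\partial_j g\,\partial_{ij}g\big)=(\partial_{ij}g)^2+\partial_j g\,\partial_{iij}g .
\end{align*}
Summing over $i,j$, the first terms give exactly $|\nabla^2 g|^2=\sum_{i,j}(\partial_{ij}g)^2$.

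For the second terms, use $g\in C^3$: by Schwarz's theorem the third partial derivatives commute, so $\partial_{iij}g=\partial_j\partial_{ii}g$. Hence
\begin{align*}
\sum_{i,j}\partial_j g\,\partial_{iij}g=\sum_j\partial_j g\,\partial_j(\Delta g)=\left<\nabla g,\nabla\Delta g\right>,
\end{align*}
which proves the identity. This commutation is the only point where regularity matters, and it is the step I would state carefully. In the flat setting it is really the whole content of the Bochner formula, because the Ricci term vanishes.

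The second step is the inequality, which reduces to a matrix trace bound. Let $H=\nabla^2 g$, a symmetric $n\times n$ matrix, so that $\Delta g=\operatorname{tr}H=\langle H,I\rangle_F$. Cauchy--Schwarz in the Frobenius inner product gives
\begin{align*}
(\Delta g)^2\le \|H\|_F^2\,\|I\|_F^2=n\,|\nabla^2 g|^2 .
\end{align*}
Equivalently, apply Cauchy--Schwarz only to the diagonal entries and then drop the off-diagonal squares. Substituting $|\nabla^2 g|^2\ge |\Delta g|^2/n$ into the identity yields the stated inequality. I do not expect a genuine obstacle in this lemma.
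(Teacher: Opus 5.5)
Your proof is correct and complete. The coordinate expansion is sound, and so is the use of $g\in C^3$ to commute $\partial_i\partial_i\partial_j g=\partial_j\partial_i\partial_i g$ (this, together with $|\nabla g|^2\in C^2$ being needed for $\Delta|\nabla g|^2$ to make sense, is all the regularity required). The Frobenius Cauchy--Schwarz bound $(\operatorname{tr}\nabla^2 g)^2\le n\,|\nabla^2 g|^2$ is also sound.

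The paper gives no proof of this lemma and only cites Bakry--\'Emery. Your computation is exactly the flat-space case of that theory: $\Gamma_2(g)=|\nabla^2 g|^2$, and the Euclidean Laplacian satisfies $CD(0,n)$. Your argument therefore simply makes the appendix self-contained.
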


Based on this we give proof of Theorem \ref{thm Gradient Estimate}:
\begin{theorem}[Gradient Estimate of Heat equation]\label{thm Gradient Estimate}
   Let $u$ be a positive solution to the heat equation 
   \begin{align}\label{eq heat}
       \partial_tu=\Delta u,
   \end{align}
   on $(0,T]\times B_R.$ Then for any $(t,x)\in(0,T]\times B_{\frac R2}$, the following inequality holds:
   \begin{align}\label{}
       \frac{|\nabla u|^2}{u^2}-\alpha\frac{\partial_t u}{u}\leq\frac{n\alpha^2}{2t}+\frac{C\alpha^2}{R^2}\Big(1+\frac{\alpha^2}{\alpha - 1}\Big), 
   \end{align}
   where $C(n)$ is a constant depends on $n$.
\end{theorem}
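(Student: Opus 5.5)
This is the Li--Yau estimate for the Euclidean heat equation, i.e. Theorem~\ref{thm Gradient Estimate-W} with $\phi\equiv 0$ and $m=n$. I would therefore repeat that argument, using Lemma~\ref{thm BB} in place of Lemma~\ref{thm BB-W}.

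\textbf{Step 1: the equation for $F$.} Set $f=\log u$ and $F=t(|\nabla f|^2-\alpha\partial_t f)$. Then
\[
\partial_t f=\Delta f+|\nabla f|^2, \qquad \Delta f=-\frac{F}{\alpha t}-\frac{\alpha-1}{\alpha}|\nabla f|^2 .
\]
Apply Lemma~\ref{thm BB} with $g=f$ to the term $\Delta|\nabla f|^2$ inside $\Delta F$, using $|\nabla^2 f|^2\ge (\Delta f)^2/n$. The third-order terms should be organized through the identity
\[
|\nabla f|^2-\alpha\partial_t f=-\alpha\Delta f-(\alpha-1)|\nabla f|^2 ,
\]
so that $\langle\nabla f,\nabla\Delta f\rangle$ combines with $\partial_t$ of the same quantity. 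The target is the differential inequality
\[
(\partial_t-\Delta)F\le -\frac{2}{n\alpha^2}\Big(\frac{F^2}{t}+2(\alpha-1)F|\nabla f|^2\Big)+\frac{F}{t}+2\langle\nabla f,\nabla F\rangle .
\]
Here the square $(F/(\alpha t)+\tfrac{\alpha-1}{\alpha}|\nabla f|^2)^2$ is expanded, and the $|\nabla f|^4$ term is dropped because it has a good sign.

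\textbf{Step 2: localization and the maximum principle.} Take the cutoff $\eta$ from Lemma~\ref{thm cutoff}. Fix $T'\le T$ and let $(t_0,x_0)$ be a point where $\eta F$ attains its maximum on $[0,T']\times\bar B_R$. We may assume $\eta F>0$ there, which forces $x_0$ to be interior and $t_0>0$. At this point
\[
\nabla(\eta F)=0,\qquad \Delta(\eta F)\le 0,\qquad \partial_t(\eta F)\ge 0 .
\]
Expand $(\partial_t-\Delta)(\eta F)\ge 0$ and substitute $\nabla F=-F\nabla\eta/\eta$. The resulting error terms are controlled by the two cutoff properties $|\nabla\eta|^2/\eta\le C^2/R^2$ and $-\Delta\eta\le C/R^2$.

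\textbf{Step 3: absorbing the cross term.} This is the main obstacle. The term $-2F\langle\nabla f,\nabla\eta\rangle$, left over after multiplying by $\eta$, is the only place $|\nabla f|$ enters with an unfavorable sign. I would absorb it with Young's inequality against the good term $\frac{4(\alpha-1)}{n\alpha^2}\eta F|\nabla f|^2$, using $|\nabla\eta|\le C\eta^{1/2}/R$. This costs
\[
\frac{n\alpha^2C^2}{4(\alpha-1)R^2}\,F ,
\]
and it is exactly the source of the factor $\alpha^2/(\alpha-1)$ in the statement.

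\textbf{Step 4: the quadratic inequality and conclusion.} Divide by $F>0$, multiply by $\eta$, and use $\eta\le 1$. This gives a quadratic inequality in $\eta F$ at the maximum point:
\[
\eta F(t_0,x_0)\le \frac{n\alpha^2}{2}+\frac{C'\alpha^2}{R^2}\Big(1+\frac{\alpha^2}{\alpha-1}\Big)t_0 ,\qquad t_0\le T'.
\]
On $B_{R/2}$ we have $\eta=1$, so this bounds $F(T',x)$ for every $x\in B_{R/2}$. Dividing by $T'$ and using that $T'$ is arbitrary yields the claimed estimate, with a constant $C=C(n)$.
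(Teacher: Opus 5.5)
Your proposal is correct and follows the paper's own route. The paper proves this statement by rerunning the argument of Theorem~\ref{thm Gradient Estimate-W} with $\textnormal{L}$ replaced by $\Delta$ (that is, $\phi\equiv 0$, $m=n$, and Lemma~\ref{thm BB} in place of Lemma~\ref{thm BB-W}). That is exactly your differential inequality for $F$, your cutoff and maximum-principle localization, and your Young absorption of $-2F\langle\nabla f,\nabla\eta\rangle$, which produces the $\alpha^2/(\alpha-1)$ term. Two small remarks: your one-sided condition $\partial_t(\eta F)\ge 0$ at the maximum is the correct one (the paper writes $=0$), and, as in the paper, the existence of that maximum near $t=0$ tacitly assumes $u$ is continuous up to $t=0$, which you can arrange by replacing $u$ with $u(t+\varepsilon,x)$.
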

\begin{proof}
   Like the proof of Theorem \ref{thm Gradient Estimate-W}, just turn $\textnormal{L}$ into $\Delta$ and then we can get the conclusion.
\end{proof}

From Theorem \ref{thm Gradient Estimate} we can conclude Theorem \ref{thm heat inq}: 

\begin{theorem}[Harnack-type Inequality of Heat Equation]\label{thm heat inq}
Let $u$ be a positive solution of the heat equation $\partial_t u = \Delta u$ in $ (0,T] \times B_R$, where $\alpha > 1$.  
For any $x_1, x_2 \in B_{\frac{R}{2}}$ and $0 < t_1 < t_2 \leq T$, the following inequality holds:
\begin{align}
    u(x_1,t_1) \leq u(x_2,t_2) \left(\frac{t_2}{t_1} \right)^{\frac{m\alpha}{2}}
    \exp{\left( \frac{\alpha^2 \|x_1-x_2\|^2}{4(t_2 - t_1)}
    + \frac{C\alpha}{R^2} \left( 1 + \frac{\alpha^2}{\alpha - 1} \right)(t_2 - t_1) \right)},
\end{align}
where $C = C(n)$.
\end{theorem}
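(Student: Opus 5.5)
The plan is to repeat the space-time path argument used for Theorem \ref{thm Witten heat inq}, this time starting from the gradient estimate of Theorem \ref{thm Gradient Estimate} for the ordinary heat equation. Put $f=\log u$, which is well defined because $u>0$ on $(0,T]\times B_R$. For $(t,x)\in(0,T]\times B_{R/2}$, Theorem \ref{thm Gradient Estimate} rearranges to
\begin{align*}
-\partial_t f \le -\frac{1}{\alpha}|\nabla f|^2 + \frac{n\alpha}{2t} + \frac{C\alpha}{R^2}\Big(1+\frac{\alpha^2}{\alpha-1}\Big).
\end{align*}
Since $n\le m$, the stated exponent $\frac{m\alpha}{2}$ will follow from the $\frac{n\alpha}{2}$ we obtain, because $t_2/t_1>1$. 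Alternatively, one can rerun the proof of Theorem \ref{thm Gradient Estimate} with the weaker Bakry--\'Emery bound $|\nabla^2 g|^2\ge |\Delta g|^2/m$.

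Fix $x_1,x_2\in B_{R/2}$ and $0<t_1<t_2\le T$, and use the segment $L(s)=(1-s)(t_2,x_2)+s(t_1,x_1)$ for $s\in[0,1]$. The ball $B_{R/2}$ is convex, so the spatial part of $L$ stays inside it. Every time value $(1-s)t_2+st_1$ lies in $[t_1,t_2]\subset(0,T]$, so the estimate holds along the whole path. Then
\begin{align*}
\log\frac{u(x_1,t_1)}{u(x_2,t_2)}=\int_0^1\Big[\nabla f(L(s))\cdot(x_1-x_2) - \partial_t f(L(s))\,(t_2-t_1)\Big]\,ds,
\end{align*}
and substituting the bound for $-\partial_t f$ gives an integrand of at most
\begin{align*}
|\nabla f|\,|x_1-x_2| - \frac{t_2-t_1}{\alpha}|\nabla f|^2 + \frac{n\alpha(t_2-t_1)}{2[(1-s)t_2+st_1]} + \frac{C\alpha}{R^2}\Big(1+\frac{\alpha^2}{\alpha-1}\Big)(t_2-t_1).
\end{align*}

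We bound the three pieces separately.

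\textbf{Gradient terms.} Maximizing the quadratic $a\,|x_1-x_2| - \frac{t_2-t_1}{\alpha}a^2$ over $a\ge 0$ gives $\frac{\alpha\|x_1-x_2\|^2}{4(t_2-t_1)}$. Since $\alpha>1$, this is at most the stated $\frac{\alpha^2\|x_1-x_2\|^2}{4(t_2-t_1)}$.

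\textbf{Time term.} Computing directly, $\int_0^1\frac{(t_2-t_1)\,ds}{(1-s)t_2+st_1}=\ln\frac{t_2}{t_1}$, so this piece contributes $\frac{n\alpha}{2}\ln\frac{t_2}{t_1}$.

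\textbf{Constant term.} It integrates to $\frac{C\alpha}{R^2}\big(1+\frac{\alpha^2}{\alpha-1}\big)(t_2-t_1)$.

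Exponentiating the sum of the three pieces gives the inequality of Theorem \ref{thm heat inq}.

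The main obstacle is entirely upstream: the validity of Theorem \ref{thm Gradient Estimate}, which the paper proves only by saying it is analogous to Theorem \ref{thm Gradient Estimate-W}. If it needs to be redone, the plan is the standard Li--Yau maximum principle argument. Apply the Bochner inequality of Lemma \ref{thm BB} to $F=t(|\nabla f|^2-\alpha\partial_t f)$ to obtain the differential inequality for $(\partial_t-\Delta)F$. Then localize with the cut-off function $\eta$ of Lemma \ref{thm cutoff}, use the first- and second-order conditions at a maximum point of $\eta F$, and absorb the cross term $2F\langle\nabla f,\nabla\eta\rangle$ with Young's inequality, which is where the $\frac{\alpha^2}{\alpha-1}$ factor comes from. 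The path integration itself is routine once that estimate is in hand.
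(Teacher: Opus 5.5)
Your proposal is correct and is essentially the paper's argument. The paper proves this theorem only by pointing to the proof of Theorem~\ref{thm Witten heat inq}, and that proof is the same as yours: integrate $\frac{d}{ds}\log u$ along the space-time segment $L(s)$, insert the Li--Yau estimate (here Theorem~\ref{thm Gradient Estimate}), complete the square in $|\nabla f|$, and integrate $\frac{t_2-t_1}{(1-s)t_2+st_1}$ to get $\ln\frac{t_2}{t_1}$. Your extra checks fill in details the paper leaves implicit: convexity of $B_{R/2}$ keeps the path where the estimate holds, $n\le m$ with $t_2/t_1>1$ takes you from $\frac{n\alpha}{2}$ to the stated $\frac{m\alpha}{2}$, and $\alpha<\alpha^2$ reconciles the $\frac{\alpha\|x_1-x_2\|^2}{4(t_2-t_1)}$ the argument actually yields with the stated $\frac{\alpha^2\|x_1-x_2\|^2}{4(t_2-t_1)}$.
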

\begin{proof}
Like the proof of \ref{thm Witten heat inq}.
\end{proof}

Finally, we can prove Theorem \ref{VEthm}:
\begin{proof}[Proof of Theorem \ref{VEthm}]
        The VE-SDE form is given by $\rd x_t=\sqrt{\frac{\rd\sigma_t^2}{dt}}\rd W_t$, and its corresponding FPE form is
$$ \frac{\partial p_t(x)}{\partial t}=\frac{1}{2}\frac{\rd \sigma_t^2}{dt}\Delta_x(p_t(x)).$$
We can reparameterize  $t$ by letting $s=\frac12\sigma_t^2$, which gives $\frac{\rd s}{\rd t}=\frac12\frac{\rd\sigma^2}{\rd t}$. Therefore,
$$\frac{\partial p_{t(s)}(x)}{\partial s}=\frac{\partial p_t}{\partial t}\frac{\rd t}{\rd s}=\frac{\partial p_t(x)}{\partial t}\frac{1}{\frac12\frac{\rd\sigma^2}{\rd t}}=\Delta_x(p_t(x)).$$

For this new FPE $\frac{\partial p_{t(s)}(x)}{\partial s}=\Delta_x(p_t(x))$, its corresponding SDE form is:
$$\rd x_{t(s)}=\sqrt{2}\rd W_s.$$
Assume $p(x,t)$ is the fundamental solution of this FPE, satisfying Theorem \ref{thm heat inq}.

Thus, for any $\alpha>1$, $x_1,x_2\in M$, and $0<s_1<s_2<+\infty$, let $R\to\infty$, the following inequality holds:
    \begin{align}
        u(x_1,t(s_1))) \leq u(x_2,t(s_2)) \left(\frac{s_2}{s_1} \right)^{\frac{n\alpha}{2}}\exp{\left(\frac{\alpha^2\|x_1-x_2\|^2}{4(s_2-s_1)}\right)}.
    \end{align}
\end{proof}

\section{Relationship Between MSE Bound and Harnack-type Inequality}\label{relationship}
In this section we provide a deeper insight into the connection between Theorems \ref{thm mse vp} and \ref{VPthm}: they respectively lead to Theorems \ref{KL-bound}  and \ref{thm ent-cost}. In essence, these two results offer complementary perspectives on the evolution of the KL divergence.
\subsection{Harnack-type inequality to KL-divergence}
Starting from Harnack-type inequality, we can arrive at log-Harnack inequality. Consider SDE 
$$\rd X_t = - X_t \rd t + \sqrt 2 \rd W_t, $$
we obtain Theorem \ref{thm log-harnack}:
\begin{theorem}[log-Harnack inequality]\label{thm log-harnack}
    Let $u(t,x)=P_t f(x)=\int \varphi_t(x,y)f(y)\rd y$ with the OU Mehler kernel
    $\varphi_t(x,y)=(2\pi s_t)^{-n/2}\exp\!\big(-\frac{|y-e^{-t}x|^2}{2s_t}\big)$, $s_t=1-e^{-2t}$.
    Assume $\operatorname{supp}(f)\subset B(0,R)$. 
    Then for every $t>0$ and every $x,y\in\mathbb R^n$,
    \begin{align*}
        P_t\log f(y) \leq  \log P_tf(x)+ \mid x-y\mid \sup_{z\in[x,y]} \sqrt{\frac{m\alpha^2}{2t}+\alpha\left(\frac{e^{-t}}{s_t}S'(z,t)\right)},
    \end{align*}
    where $S'(x, t)=(( R^2 + e^{-2t} \mid x\mid^2 + 2e^{-t}R\mid x\mid )^2+ \mid x \mid R + e^{-t} \mid x\mid ^2 - ne^{-t}), [x,y]:= \{x+\theta(y-x):\theta\in[0,1]\}$.
    In particular, on any bounded domain $K$ with $\sup_{z\in K}|z|\le M$ one has
    \begin{align}\label{LH-K}
        P_t\log f(y) &\leq \log P_tf(x)+ \mid x-y\mid \sqrt{\frac{m\alpha^2}{2t}+\alpha\left(\frac{e^{-t}}{s_t}S'(\mid x\mid=M,t)\right)}\\
        &=\log P_tf(x) + S_K(t)\mid x-y\mid.
    \end{align}
\end{theorem}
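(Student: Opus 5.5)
The plan is to derive the inequality in three moves. First, a pointwise bound on $|\nabla \log P_t f|$ from the Li--Yau type estimate of Theorem~\ref{thm Gradient Estimate-W}. Second, integration of that gradient bound along the segment $[x,y]$. Third, Jensen's inequality to replace $\log P_t f(y)$ by $P_t \log f(y)$. Throughout we take $f\ge 0$ and $f\not\equiv 0$, so that $u = P_t f>0$ for $t>0$ and $\log f$ is meaningful (with $P_t\log f = -\infty$ allowed).

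\textbf{Step 1: placing $u=P_tf$ in the setting of Theorem~\ref{thm Gradient Estimate-W}.} The SDE $\rd X_t=-X_t\rd t+\sqrt2\rd W_t$ has generator $\textnormal{L}=\Delta - x\cdot\nabla$. Hence $u=P_tf$ solves $\partial_t u=\textnormal{L}u$ with $\phi=|x|^2/2$. This $\phi$ has $\nabla^2\phi=I\succeq 0$ and is of the admissible radial form with $k\equiv 1$, exactly as in the proof of Theorem~\ref{VPthm}. Applying Theorem~\ref{thm Gradient Estimate-W} on $(0,t]\times B_{R'}$ and letting $R'\to\infty$ kills the $C/R'^2$ term and gives
\[
|\nabla \log u|^2 \le \frac{m\alpha^2}{2t} + \alpha\,\partial_t \log u .
\]
Justifying $R'\to\infty$ needs only local smoothness and positivity of $u$. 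Both are clear from the explicit Mehler representation.

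\textbf{Step 2: an explicit upper bound for $\partial_t\log u$; this is the main obstacle.} Write
\[
\partial_t \log u(z,t)=\frac{\int \partial_t\varphi_t(z,w) f(w)\rd w}{\int \varphi_t(z,w)f(w)\rd w}.
\]
This is the average of $\partial_t\log\varphi_t(z,w)$ under the probability weight proportional to $\varphi_t(z,w)f(w)$, which is supported in $w\in B(0,R)$. So it suffices to bound $\sup_{|w|\le R}\partial_t\log\varphi_t(z,w)$. Differentiating $-\tfrac n2\log(2\pi s_t)-|w-e^{-t}z|^2/(2s_t)$, using $s_t'=2e^{-2t}$, gives three terms:
\[
-\frac{n e^{-2t}}{s_t},\qquad -\frac{e^{-t}\,(w-e^{-t}z)\cdot z}{s_t},\qquad \frac{e^{-2t}\,|w-e^{-t}z|^2}{s_t^2}.
\]
I would bound each one with $|w|\le R$, $|w-e^{-t}z|\le R+e^{-t}|z|$ and $|w-e^{-t}z|^2\le R^2+2e^{-t}R|z|+e^{-2t}|z|^2$. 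I would then factor out $e^{-t}/s_t$, absorbing the remaining factors $e^{-t}$ and $1/s_t$ into the polynomial in $R,|z|$ as far as possible. The target is
\[
\partial_t\log u(z,t)\le \frac{e^{-t}}{s_t}\,S'(z,t).
\]
The stated $S'$ contains the squared quantity $(R^2+e^{-2t}|z|^2+2e^{-t}R|z|)^2$, the cross terms $|z|R+e^{-t}|z|^2$, and $-ne^{-t}$. This suggests the authors bounded generously. I expect the bookkeeping here, matching exactly the stated $S'$, to be the delicate part. If the exact form does not come out, I would state the bound with whatever explicit polynomial the computation gives, since only its $e^{-t}/s_t$ prefactor matters downstream.

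\textbf{Step 3: integrate and apply Jensen.} Combining Steps 1 and 2 gives, for every $z$,
\[
|\nabla\log u(z,t)|\le \sqrt{\frac{m\alpha^2}{2t}+\alpha\,\frac{e^{-t}}{s_t}S'(z,t)}.
\]
Integrating $\nabla\log u$ along $z=x+\theta(y-x)$, $\theta\in[0,1]$, yields
\[
\log P_tf(y)\le \log P_tf(x)+|x-y|\sup_{z\in[x,y]}\sqrt{\frac{m\alpha^2}{2t}+\alpha\,\frac{e^{-t}}{s_t}S'(z,t)}.
\]
Since $\varphi_t(y,\cdot)$ is a probability density, Jensen's inequality gives $P_t\log f(y)\le\log P_tf(y)$, which proves the main inequality. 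For the bounded-domain version~\eqref{LH-K}, note that $S'$ is increasing in $|z|$ in every term that matters. Taking $|z|=M$ then bounds the supremum uniformly on $K$ and defines $S_K(t)$.
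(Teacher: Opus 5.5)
Your route is the paper's route: the global Li--Yau bound from Theorem~\ref{thm Gradient Estimate-W} with $R\to\infty$, then a bound on $\partial_t\log u$ obtained by averaging over the posterior $\pi_{t,z}\propto\varphi_t(z,\cdot)f$ supported in $B(0,R)$, then integration along $[x,y]$ and Jensen. Your direct $t$-derivative of $\log\varphi_t$ reproduces the paper's $\Delta_x\log\varphi_t+|\nabla_x\log\varphi_t|^2-x\cdot\nabla_x\log\varphi_t$. The gap is Step~2, and it is not bookkeeping: the target $\partial_t\log u\le\frac{e^{-t}}{s_t}S'(z,t)$ is false. The term $\frac{e^{-2t}}{s_t^2}|w-e^{-t}z|^2$ carries $(e^{-t}/s_t)^2\sim(2t)^{-2}$, and no polynomial in $R,|z|$ can absorb the extra factor $e^{-t}/s_t$. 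Concretely, take $z=0$ and $f$ a narrow bump at some $w_0$ with $|w_0|$ close to $R$. Then $\partial_t\log u(0,t)\approx\frac{e^{-2t}R^2}{s_t^2}-\frac{ne^{-2t}}{s_t}$, whereas $\frac{e^{-t}}{s_t}S'(0,t)=\frac{e^{-t}R^4}{s_t}-\frac{ne^{-2t}}{s_t}$. The former is larger once $e^{-t}/s_t$ exceeds roughly $R^2$, i.e.\ for all small $t$. The paper reaches $S'$ only by bounding $\frac{e^{-2t}}{s_t^2}A$ by $\frac{e^{-t}}{s_t}A^2$ with $A=(R+e^{-t}|x|)^2$, which needs $e^{-t}/s_t\le A$. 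The two-point bound on $\log P_tf$ in your Step~3 also fails with this $S'$: for the same $f$, $|\nabla\log u(z,t)|\approx\frac{e^{-t}}{s_t}|w_0|\approx\frac{R}{2t}$ near $z=0$, while the claimed Lipschitz constant is $O(t^{-1/2})$. Your fallback is off for the same reason. The computation actually gives $\partial_t\log u\le\frac{e^{-t}}{s_t}\big[\frac{e^{-t}}{s_t}(R+e^{-t}|z|)^2+R|z|+e^{-t}|z|^2-ne^{-t}\big]$. So the prefactor is not just $e^{-t}/s_t$, and $S_K(t)$ becomes $O(1/t)$ rather than $O(t^{-1/2})$ as $t\to0$.

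The clean repair skips Steps~1--2. Since $\nabla\log u(z,t)=\frac{e^{-t}}{s_t}\big(\mathbb E_{\pi_{t,z}}[W]-e^{-t}z\big)$, you get $|\nabla\log u(z,t)|\le\frac{e^{-t}}{s_t}(R+e^{-t}|z|)$ at once. Your Step~3 then proves the inequality with the valid constant $S_K(t)=\frac{e^{-t}}{s_t}(R+e^{-t}M)$ in place of the stated one. This also removes the reliance on the global Li--Yau estimate, which does not hold for the OU operator. For the positive solution $u=\varphi_t(\cdot,w_0)$ one computes $\sup_x\big(|\nabla\log u|^2-\alpha\,\partial_t\log u\big)=\frac{\alpha^2|w_0|^2}{4((\alpha-1)e^{-2t}+\alpha s_t)}+\frac{\alpha ne^{-2t}}{s_t}$. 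This tends to $\alpha|w_0|^2/4$ as $t\to\infty$ instead of staying below $\frac{m\alpha^2}{2t}$.

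Two further remarks. First, under the support hypothesis (with $f$ bounded), $P_t\log f\equiv-\infty$ because the Mehler kernel is strictly positive. So the displayed inequality holds trivially, and all its content lies in the two-point bound for $\log P_tf$. Second, Theorem~\ref{thm ent-cost} uses $f=e^{\phi}$, which is not compactly supported. The natural argument there is Donsker--Varadhan applied to the two Gaussian kernels. It gives $P_t\log f(y)\le\log P_tf(x)+\frac{e^{-2t}|x-y|^2}{2s_t}$ for every bounded $\phi$, with no support assumption.
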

\begin{proof}
     From Theorem \ref{thm Gradient Estimate-W} we conclude that a Gradient estimate holds on $\mathbb R^n$:
    \[
    \frac{|\nabla u|^2}{u^2}-\alpha\frac{\partial_t u}{u}\le \frac{m\alpha^2}{2t},
    \]
    where $\alpha>1, m>n$. For $\varphi_t$, we have
    \begin{align}
        \nabla_x \log \varphi_t(x,y) = \frac{e^{-t}}{s_t}(y-e^{-t}x),\\
        \Delta_x\log \varphi_t(x,y) = -\frac{ne^{-2t}}{s_t}.
    \end{align}
    Thus 
    \begin{align}
    \partial_t \log u &= \frac{\text{L}_x u}{u} \\
    &= \frac{\int (\Delta_x \varphi_t(x,y) - x \cdot \nabla_x \varphi_t(x,y)) f(y)  \mathrm{d} y}{\int \varphi_t(x,y) f(y)  \mathrm{d} y} \\
    &= \frac{\int (\Delta_x \log \varphi_t(x,y) + \|\nabla_x \log \varphi_t(x,y)\|^2 - x \cdot \nabla_x \log \varphi_t(x,y)) \varphi_t(x,y) f(y)  \mathrm{d} y}{\int \varphi_t(x,y) f(y)  \mathrm{d} y} \\
    &= \mathbb{E}_{Y \sim \pi_{t,x}} \left[ \Delta_x \log \varphi_t(x,Y) + \|\nabla_x \log \varphi_t(x,Y)\|^2 - x \cdot \nabla_x \log \varphi_t(x,Y) \right], \\
    &= \mathbb{E}_{Y \sim \pi_{t,x}} \left[ -\frac{n e^{-2t}}{s_t} + \frac{e^{-2t}}{s^2_t} \| Y - e^{-t} x \|^2 - \frac{e^{-t}}{s_t} (x \cdot Y - e^{-t} x^2) \right],
\end{align}
    where $\pi_{t,x}=\frac{\varphi_t(x,y)f(y)}{\int \varphi_t(x,y)f(y)\rd y} $. As $\operatorname{supp}(f)\subset B(0,R)$, 
    \begin{align*}
    \partial_t \log u 
    &= \mathbb{E}_{Y \sim \pi_{t,x}} \left[ -\frac{n e^{-2t}}{s_t} + \frac{e^{-2t}}{s^2_t} \| Y - e^{-t} x \|^2 - \frac{e^{-t}}{s_t} (x \cdot Y - e^{-t} \mid x\mid^2) \right]\\
    &\leq -\frac{n e^{-2t}}{s_t} + \frac{e^{-2t}}{s^2_t} ( R^2 + e^{-2t} \mid x\mid^2 + 2e^{-t}R\mid x\mid )^2 \\
    &~~~~~~+ \frac{e^{-t}}{s_t} (\mid x \mid R + e^{-t} \mid x\mid ^2)\\
    &\leq \frac{e^{-t}}{s_t}(( R^2 + e^{-2t} \mid x\mid^2 + 2e^{-t}R\mid x\mid )^2+ \mid x \mid R + e^{-t} \mid x\mid ^2 - ne^{-t})\\
    &=\frac{e^{-t}}{s_t} S'(x,t).
    \end{align*}

    Thus 
    \begin{align}
        \|\nabla \log u\|^2\le \frac{m\alpha^2}{2t}+\alpha\left(\frac{e^{-t}}{s_t}S'_K(x,t)\right),\\
        \|\nabla \log u\|\le \sqrt{\frac{m\alpha^2}{2t}+\alpha\left(\frac{e^{-t}}{s_t}S'(x,t)\right)},
    \end{align}
    we can easily get that 
    \begin{align}
        \log u(t,y)-\log u(t,x)\leq \mid x-y\mid \sup_{z\in[x,y]} \sqrt{\frac{m\alpha^2}{2t}+\alpha\left(\frac{e^{-t}}{s_t}S'(z,t)\right)},
    \end{align}

    by Jesen's inequality, we have 
    \begin{align}
        P_t\log f(y) &\leq  \log P_tf(x)+ \mid x-y\mid \sup_{z\in[x,y]} \sqrt{\frac{m\alpha^2}{2t}
        +\alpha\left(\frac{e^{-2t}}{s_t}S'(z,t)\right)},
    \end{align}
    as desired.
\end{proof}

Thus we obtain theorem below.
\begin{theorem}[entropy–cost inequality]\label{thm ent-cost}
   Let $K\subset\mathbb R^n$ be compact, and assume the transition kernels $P_t(x,\cdot)=\varphi_t(x,\cdot)\rd y$ satisfy the pointwise log-Harnack inequality \ref{LH-K} above for all $x,y\in K$. Then for any two probability measures $\mu,\nu$ supported in $K$ and any coupling $\pi\in\Pi(\mu,\nu)$,
\[
\mathrm{KL}( P_t\nu\;\|\; P_t\mu)
\le \iint |x-y|\,S_K(t)\,\pi(dx,dy)
= S_K(t)\,\mathbb E_{\pi}[|X-Y|].
\]
Taking the infimum over couplings,
\begin{align}
    \mathrm{KL}( P_t\nu\;\|\; P_t\mu)\le S_K(t)\,W_1(\mu,\nu)\le S_K(t)\,W_2(\mu,\nu),
\end{align}
so in particular the KL at time \(t\) is bounded by a compact-set constant \(S_K(t)\) times the initial Wasserstein distance.
\end{theorem}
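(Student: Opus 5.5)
The plan is the standard route from a log-Harnack inequality to an entropy--cost inequality, via the variational (Donsker--Varadhan) characterization of relative entropy. Fix $t>0$, probability measures $\mu,\nu$ supported in $K$, and a coupling $\pi\in\Pi(\mu,\nu)$. For a bounded measurable test function $g$, set $f=e^{g}$, so that $f$ is positive and bounded. The log-Harnack inequality \eqref{LH-K} from Theorem~\ref{thm log-harnack}, applied with $y$ the second coordinate and $x$ the first, gives pointwise
\[
P_t g(y)\le \log P_t e^{g}(x)+S_K(t)\,|x-y| \qquad \text{for all } x,y\in K.
\]
Integrating this against $\pi(\rd x,\rd y)$ and using the marginals of $\pi$ yields
\[
\int g\,\rd(P_t\nu)-\int \log P_t e^{g}\,\rd\mu \le S_K(t)\,\mathbb E_\pi|X-Y|,
\]
where we used the duality $\int P_t g\,\rd\nu=\int g\,\rd(P_t\nu)$.

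Next, apply Jensen's inequality (concavity of $\log$) to get $\int \log P_t e^{g}\,\rd\mu\le \log\int P_t e^{g}\,\rd\mu=\log\int e^{g}\,\rd(P_t\mu)$. Hence
\[
\int g\,\rd(P_t\nu)-\log\int e^{g}\,\rd(P_t\mu)\le S_K(t)\,\mathbb E_\pi|X-Y|
\]
for every bounded $g$. Taking the supremum over $g$ and invoking the Donsker--Varadhan formula $\mathrm{KL}(P_t\nu\,\|\,P_t\mu)=\sup_g\{\int g\,\rd P_t\nu-\log\int e^g\,\rd P_t\mu\}$ gives the first claimed inequality.

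Finally, take the infimum over $\pi\in\Pi(\mu,\nu)$ to replace $\mathbb E_\pi|X-Y|$ by $W_1(\mu,\nu)$. The comparison $W_1\le W_2$ follows from Cauchy--Schwarz, since $\mathbb E_\pi|X-Y|\le(\mathbb E_\pi|X-Y|^2)^{1/2}$ for each coupling.

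The main obstacle is verifying that \eqref{LH-K} really holds for an arbitrary bounded positive $f=e^g$, with a constant $S_K(t)$ uniform over $x,y\in K$. Theorem~\ref{thm log-harnack} assumed $\operatorname{supp} f\subset B(0,R)$, and $e^g$ is never compactly supported. I would handle this in two steps. First, approximate $e^g$ by truncations $f_R=e^{g}\mathbf 1_{B(0,R)}+\epsilon$, or argue that the relevant expectations concentrate on a bounded set since $P_t\mu$ and $P_t\nu$ are Gaussian mixtures. Second, pass to the limit by monotone or dominated convergence. Failing that, it is enough to restrict $g$ to functions whose exponentials satisfy the hypothesis, as long as that class is rich enough for the Donsker--Varadhan supremum, for instance bounded continuous $g$ with a suitable cutoff. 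It would also be cleaner to check that the convexity of $K$ is not needed for the segment $[x,y]$ used in the supremum; if it is, one replaces $K$ by its convex hull, which is still compact.
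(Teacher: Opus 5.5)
Your argument is correct, but it is organized differently from the paper's proof.

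\textbf{The paper's route.} The paper applies Donsker--Varadhan to a single pair of starting points first. With $f=e^{\phi}$, the log-Harnack inequality gives $\int\phi(z)\,\varphi_t(y,z)\,\rd z\le\log\int e^{\phi(z)}\varphi_t(x,z)\,\rd z+S_K(t)|x-y|$. Taking the supremum over $\phi$ yields the two-point kernel bound $\mathrm{KL}(\varphi_t(y,\cdot)\,\|\,\varphi_t(x,\cdot))\le S_K(t)|x-y|$. The paper then writes $P_t\nu$ and $P_t\mu$ as mixtures of $\varphi_t(y,\cdot)$ and $\varphi_t(x,\cdot)$ over the same coupling $\pi$ and invokes joint convexity of KL.

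\textbf{Your route.} You integrate the log-Harnack inequality against $\pi$ for a fixed $g$. You then use concavity of $\log$ to replace $\int\log P_te^{g}\,\rd\mu$ by $\log\int e^{g}\,\rd(P_t\mu)$, and take the Donsker--Varadhan supremum only once, at the level of $P_t\nu$ and $P_t\mu$.

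\textbf{What each buys.} Your route is Wang's classical derivation. It is slightly more self-contained, since Jensen for $\log$ replaces joint convexity of KL. The paper's route produces the pointwise kernel estimate as a by-product. Both need the log-Harnack inequality for every $f=e^{g}$ with $g$ bounded, and that is exactly what the statement assumes.

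Your closing concern is therefore not needed for this statement. It is, however, a fair criticism of how the hypothesis is obtained from Theorem~\ref{thm log-harnack}: that constant was derived under $\operatorname{supp}f\subset B(0,R)$, and the paper applies it to $f=e^{\phi}$ without comment.

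Your proposed repairs would not work as written:
\begin{itemize}
\item $e^{g}\mathbf 1_{B(0,R)}+\epsilon$ is not compactly supported.
\item No bounded $g$ has $e^{g}$ compactly supported, so the restricted class of test functions is empty.
\item The constant in Theorem~\ref{thm log-harnack} grows with $R$ through $S'$, so no uniform bound survives $R\to\infty$.
\end{itemize}

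For the Mehler kernel, a clean way to close the loop is a direct computation with equal-covariance Gaussians: $\mathrm{KL}(\varphi_t(y,\cdot)\,\|\,\varphi_t(x,\cdot))=\frac{|x-y|^2}{2(e^{2t}-1)}\le\frac{\operatorname{diam}(K)}{2(e^{2t}-1)}\,|x-y|$ for $x,y\in K$. Equivalently, you can use the dimension-free log-Harnack inequality for the OU semigroup. Its sharp constant is this same quantity, and it holds for all positive $f$.
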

\begin{proof}
Recall the variational (Donsker--Varadhan) formula for relative entropy of two probability densities \(\rho,\mu\) \citep{https://doi.org/10.1002/cpa.3160280102}:
\[
\mathrm{KL}( P_t\nu\;\|\; P_t\mu) 
= \sup_{\phi\in B_b}\Big\{\int \phi(z)\,P_t\nu (dz) - \log \int e^{\phi(z)}\,P_t\mu (dz)\Big\},
\]
where \(B_b\) denotes bounded measurable functions, $ P_t\nu(dz) = \int_y \varphi_t(y,z)\,\nu(dy)\,dz, 
~~
P_t\mu(dz) = \int_x \varphi_t(x,z)\,\mu(dx)\,dz.$

For an arbitrary bounded \(\phi\) set \(f=e^\phi\ge1\). Then
\[
\int \phi(z)\,\varphi_t(y,z)\,dz \;\le\; 
\log \int e^{\phi(z)}\,\varphi_t(x,z)\,dz + |x-y|\,S_K(t).
\]

Taking the supremum over all bounded \(\phi\) yields exactly 
\[
\mathrm{KL}\!\left(\varphi_t(y,\cdot)\,\Big\|\,\varphi_t(x,\cdot)\right)
\;\le\; |x-y|\,S_K(t).
\]
Now fix any coupling \(\pi\in\Pi(\mu,\nu)\). By convexity of KL under mixtures (or the standard coupling inequality),
\begin{align*}
    \mathrm{KL}( P_t\nu\;\|\; P_t\mu)
&= \mathrm{KL}\Big(\int \varphi_t(y,\cdot)\,\nu(dy)\;\Big\|\;\int \varphi_t(x,\cdot)\,\mu(dx)\Big)\\
& \le \iint \mathrm{KL}\!\left(\varphi_t(y,\cdot)\,\|\,\varphi_t(x,\cdot)\right)\,\pi(dx,dy).
\end{align*}

Using the kernel bound and factoring \(S_K(t)\) yields
\[
\mathrm{KL}( P_t\nu\;\|\; P_t\mu)\le \iint |x-y|S_K(t)\,\pi(dx,dy) = S_K(t)\mathbb E_\pi[|X-Y|].
\]
Taking infimum over \(\pi\) gives the \(W_1\) form. Finally the monotonicity \(W_1\le W_2\) yields the stated \(W_2\)-bound.
\end{proof}

\subsection{Score MSE bound to KL-divergence}
\begin{definition}[Relative Fisher Information]
Let \(\nu\) and \(\mu\) be two probability measures on \(\mathbb{R}^n\) such that \(\nu\) is absolutely continuous with respect to \(\mu\). The \emph{relative Fisher information} of \(\nu\) with respect to \(\mu\) is defined by
\[
I(\nu \,\|\, \mu) := \int_{\mathbb{R}^n} \Big\| \nabla \log \frac{d\nu}{d\mu}(x) \Big\|^2 \, d\nu(x),
\]
where \(\frac{d\nu}{d\mu}\) denotes the Radon--Nikodym derivative of \(\nu\) with respect to \(\mu\), and \(\nabla \log \frac{d\nu}{d\mu}\) is the \emph{score function} of \(\nu\) relative to \(\mu\).

Intuitively, \(I(\nu \,\|\, \mu)\) measures the squared \(L^2(\nu)\)-distance between the score functions of \(\nu\) and \(\mu\).
\end{definition}

\begin{theorem}\label{entropy eq}
    Let $X_t\in \mathbb R^n$ be the output of the SDE 
    \begin{align}\label{SDE_IT}
        \rd X_t = a(X_t,t)\rd t+g(t)\rd W_t.
    \end{align}
    
    Then for the above KL-divergence, we have 
    \begin{align}
        \frac{\rd}{\rd t}\mathrm{KL}( P_t\nu\;\|\; P_t\mu) &= -\frac{1}{2}g^2(t) I( P_t\nu\;\|\; P_t\mu).
    \end{align}

    For OU process $$\rd X_t = - X_t \rd t + \sqrt 2 W_t, $$ we have the form:
    \begin{align}
        \frac{\rd}{\rd t}\mathrm{KL}( P_t\nu\;\|\; P_t\mu) &= -I( P_t\nu\;\|\; P_t\mu).
    \end{align}
\end{theorem}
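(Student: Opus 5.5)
The approach is the de Bruijn-type identity: differentiate the KL divergence along the two Fokker--Planck flows and integrate by parts. Let $\rho_t$ and $m_t$ be the densities of $P_t\nu$ and $P_t\mu$. Both solve the same linear Fokker--Planck equation \eqref{FPE} with drift $a(x,t)$ and diffusion $g(t)$,
\[
\partial_t q = -\nabla\cdot(a\,q) + \tfrac12 g^2(t)\Delta q .
\]
We rewrite this in transport form, $\partial_t q = -\nabla\cdot(q\,v_q)$, with velocity field $v_q = a - \tfrac12 g^2\nabla\log q$. The key structural fact is that the drift $a$ is common to both flows. It therefore cancels in the difference of velocities:
\[
v_\rho - v_m = -\tfrac12 g^2\nabla\log(\rho/m).
\]

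Write $h=\rho/m$, so that $\mathrm{KL} = \int \rho\log h\,\rd x$. We differentiate under the integral:
\[
\frac{\rd}{\rd t}\mathrm{KL} = \int \partial_t\rho\,\log h\,\rd x + \int \partial_t\rho\,\rd x - \int h\,\partial_t m\,\rd x .
\]
The middle term vanishes by mass conservation. We substitute the transport forms and integrate by parts, assuming enough decay at infinity for boundary terms to drop. This gives
\[
\int \rho\, v_\rho\cdot\nabla\log h\,\rd x \;-\; \int m\, v_m\cdot\nabla h\,\rd x .
\]
Since $m\nabla h = \rho\nabla\log h$, this equals
\[
\int \rho\,(v_\rho - v_m)\cdot\nabla\log h\,\rd x = -\tfrac12 g^2(t)\int \rho\,|\nabla\log h|^2\,\rd x ,
\]
which is $-\tfrac12 g^2(t)\, I(P_t\nu\|P_t\mu)$ by the definition of relative Fisher information. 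For the OU case, $a=-x$ and $g=\sqrt2$, so $\tfrac12 g^2=1$ and we get $\frac{\rd}{\rd t}\mathrm{KL} = -I$.

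The main obstacle is justifying the differentiation under the integral and the vanishing of boundary terms. I would handle this via regularity of the transition densities for $t>0$. If $\nu$ and $\mu$ are compactly supported, as elsewhere in the paper, then for $t>0$ both $\rho_t$ and $m_t$ are smooth and strictly positive. For the OU/Gaussian kernel they also have Gaussian tails, and $\nabla\log\rho_t$, $\nabla\log m_t$ grow at most linearly, by the explicit score formula in the proof of Theorem~\ref{thm mse vp}. Hence all integrands are dominated by polynomial-times-Gaussian functions, which legitimizes the integration by parts. For a general drift $a$, I would simply assume it is Lipschitz with the same decay properties. Absolute continuity $P_t\nu\ll P_t\mu$ holds automatically for $t>0$ because both densities are positive.
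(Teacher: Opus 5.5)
Your argument is correct and is essentially the paper's proof: both differentiate along the Fokker--Planck equation, integrate by parts, and use that the drift is common to the two flows so it drops out. The difference is bookkeeping. The paper splits $\mathrm{KL}$ into cross-entropy minus entropy and differentiates each piece separately, so terms $\mathbb{E}_{p_t}[\nabla\cdot a]$ appear and then cancel. Your transport form $v_q = a - \tfrac12 g^2\nabla\log q$ makes the drift cancel before any $\nabla\cdot a$ is needed. Your regularity remarks for compactly supported $\nu,\mu$ also supply a justification for the integrations by parts that the paper omits.
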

\begin{proof}
    We note $P_t\nu =p_t, P_t\mu =q_t$ for convenience.
    FPE of \eqref{SDE_IT} is 
    \begin{align*}
        \partial_tp_t = -\nabla\cdot(ap_t)+\frac{1}{2}\sigma^2(t)\Delta p_t,
    \end{align*}
    as for differential entropy $H(X_t)=-\int p_t\log p_t\rd x$, we obtain
    \begin{align*}
        \frac{\rd}{\rd t}H(X_t) &= -\int \partial_t p_t \log p_t\rd x - \int \partial_t p_t \rd x\\
        &= -\int \partial_t p_t \log p_t\rd x\\
        &= \int \nabla\cdot(ap_t) \log p_t\rd x - \int \frac{1}{2}g^2(t)\Delta p_t \log p_t\rd x,
    \end{align*}

    then we calculate the terms in the above equation,
    \begin{align*}
        \int \nabla\cdot(ap_t) \log p_t\rd x &=
        (-1)\int\innerprod{a_tp_t}{\frac{\nabla p_t}{p_t}}\\
        &=-\int \innerprod{a_t}{\nabla p_t}\\
        &= \mathbb E_{p_t}[\nabla \cdot a_t],
    \end{align*}
    using $\Delta \log p = \Delta p/p-(\nabla \log p)^2$,
    \begin{align*}
        \int \frac{1}{2}g^2(t)\Delta p_t \log p_t\rd x &= \frac{1}{2}g^2(t)
        \int p_t\Delta\log p_t\\
        &=\frac{1}{2}g^2(t)
        \int p_t(\Delta p_t/p_t-(\nabla \log p_t)^2)\\
        &= -\frac{1}{2}g^2(t) \int p_t(\nabla \log p_t)^2,
    \end{align*}
    so we obtain 
    \begin{align*}
        \frac{\rd}{\rd t}H(X_t) &= \frac{1}{2}g^2(t) \int p_t(\nabla \log p_t)^2+\mathbb E_{p_t}[\nabla \cdot a_t].
    \end{align*}

    Then we consider the term $S(p_t,q_t)=-\int p_t\log q_t\rd x$,
    \begin{align*}
        \frac{\rd}{\rd t}S(p_t,q_t) &= -\int \partial_t p_t \log q_t\rd x - \int \frac{p_t}{q_t}\partial_t q_t \rd x\\
        &= \int \nabla\cdot(ap_t) \log q_t\rd x - \int \frac{1}{2}g^2(t)\Delta p_t \log q_t\rd x \\
        & ~~~ +\int \nabla\cdot(aq_t) \frac{p_t}{q_t} \rd x - \int \frac{1}{2}g^2(t)\Delta q_t \frac{p_t}{q_t}\rd x,
    \end{align*}
    then we calculate the terms in the above equation,
    \begin{align*}
        \int \nabla\cdot(ap_t) \log q_t\rd x &=
        (-1)\int\innerprod{a_tp_t}{\frac{\nabla q_t}{q_t}}\\
        &=-\int\innerprod{a_t}{\nabla \log q_t}p_t,
    \end{align*}
    \begin{align*}
        \int \nabla\cdot(aq_t) \frac{p_t}{q_t} \rd x &=
        (-1)\int\innerprod{a_tq_t}{\frac{q_t\nabla p_t-p_t\nabla q_t}{q^2_t}}\\
        &=-\int\innerprod{a_t}{\nabla p_t-\frac{p_t\nabla q_t}{q_t}}\\
        &=\mathbb E_{p_t}[\nabla \cdot a_t] +\int\innerprod{a_t}{\nabla \log q_t}p_t,
    \end{align*}
    \begin{align*}
        \int \frac{1}{2}g^2(t)\Delta q_t \frac{p_t}{q_t}\rd x &= -\frac{1}{2}g^2(t)
        \int \innerprod{\nabla q_t}{\frac{q_t\nabla p_t-p_t\nabla q_t}{q^2_t}}\\
        &=-\frac{1}{2}g^2(t)
        \int \innerprod{\nabla \log q_t}{\nabla \log p_t - \nabla \log q_t}p_t,
    \end{align*}
    \begin{align*}
        \int \frac{1}{2}g^2(t)\Delta p_t \log q_t\rd x &= -\frac{1}{2}g^2(t)
        \int \int \innerprod{\nabla \log p_t}{\nabla \log q_t}p_t,
    \end{align*}
    
    so we obtain 
    \begin{align*}
        \frac{\rd}{\rd t}S(p_t,q_t) &= -\frac{1}{2}g^2(t) \int p_t[(\nabla \log q_t)^2 - 2\innerprod{\nabla \log p_t}{\nabla \log q_t}]+\mathbb E_{p_t}[\nabla \cdot a_t].
    \end{align*}

    Then we have
    \begin{align*}
        \frac{\rd}{\rd t}\mathrm{KL}(p_t\|q_t) &= -\frac{1}{2}g^2(t) I(p_t\,\|\, q_t) .
    \end{align*} 
\end{proof}

Still, we consider SDE 
$$\rd X_t = - X_t \rd t + \sqrt 2 W_t, $$
then we obtain conclusion below via Theorem \ref{entropy eq} and \ref{thm mse vp}:
\begin{theorem}[KL Bound for Ornstein--Uhlenbeck SDE]\label{KL-bound}
Consider the Ornstein--Uhlenbeck SDE
\[
\mathrm{d}X_t = - X_t \, \mathrm{d}t + \sqrt{2} \, \mathrm{d}W_t,
\]
by Theorem \ref{thm mse vp} we obtain
\[
I(p_t \,\|\, q_t) \leq 4R^2\frac{e^{-2t}}{(1 - e^{-2t})^2},
\]
and let \(p_t\) and \(q_t\) be the distributions of two solutions with different initial conditions. Then, there exists a constant \(C>0\) such that for all \(t \ge 0\),
\[
\mathrm{KL}(p_t \,\|\, q_t) 
= \int_t^\infty I(p_s \,\|\, q_s) \, \mathrm{d}s
\le \int_t^\infty   4R^2\frac{e^{-2s}}{(1 - e^{-2s})^2} \, \mathrm{d}s
\le 2R^2 \frac{e^{-2t}}{1-e^{-2t}},
\]
where \(I(p_s \,\|\, q_s)\) denotes the relative Fisher information (or score MSE) of \(p_s\) with respect to \(q_s\).

In particular, this provides an explicit upper bound for the KL divergence between \(p_t\) and \(q_t\) in terms of \(t\).
\end{theorem}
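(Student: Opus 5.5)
The bound follows from three steps: a pointwise score bound from Theorem \ref{thm mse vp}, the de Bruijn-type identity of Theorem \ref{entropy eq}, and integration of that identity over $[t,\infty)$.

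\textbf{Step 1: Fisher information bound.} For the OU process $\rd X_t=-X_t\,\rd t+\sqrt2\,\rd W_t$ we have $\alpha(t)=e^{-t}$ and $\sigma^2(t)=1-e^{-2t}$. (This is the VP-SDE with $\beta\equiv 2$; the paper's expression $\sigma^2=(1-e^{-2t})/2$ appears to be a typo, since the Mehler kernel uses $s_t=1-e^{-2t}$.) The proof of Theorem \ref{thm mse vp} then gives the pointwise bound
$$\|\nabla\log p_t(x)-\nabla\log q_t(x)\|\le \frac{2R\,e^{-t}}{1-e^{-2t}}$$
for every $x$, since both initial laws are supported in a ball of radius $R$. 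Because the relative score is $\nabla\log(p_t/q_t)=\nabla\log p_t-\nabla\log q_t$, squaring and integrating against $p_t$ (a probability measure) gives
$$I(p_t\|q_t)\le \frac{4R^2e^{-2t}}{(1-e^{-2t})^2}.$$

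\textbf{Step 2: KL as an integral of Fisher information.} By Theorem \ref{entropy eq}, $\frac{\rd}{\rd s}\mathrm{KL}(p_s\|q_s)=-I(p_s\|q_s)$. Integrating from $t$ to $T$ and letting $T\to\infty$ gives $\mathrm{KL}(p_t\|q_t)=\mathrm{KL}(p_\infty\|q_\infty)+\int_t^\infty I(p_s\|q_s)\,\rd s$. The constant term vanishes because both $p_s$ and $q_s$ converge to the same Gaussian. To justify $\mathrm{KL}(p_T\|q_T)\to0$, I would use convexity of KL under mixtures together with the explicit Gaussian kernel: $\mathrm{KL}(\varphi_T(x,\cdot)\|\varphi_T(y,\cdot))=e^{-2T}|x-y|^2/(2s_T)\to0$ uniformly for $x,y$ in the compact support.

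\textbf{Step 3: evaluating the integral.} Substitute $u=e^{-2s}$, so $\rd u=-2u\,\rd s$. Then
$$\int_t^\infty \frac{4R^2e^{-2s}}{(1-e^{-2s})^2}\,\rd s = 2R^2\int_0^{e^{-2t}}\frac{\rd u}{(1-u)^2} = 2R^2\Big(\frac{1}{1-e^{-2t}}-1\Big)=2R^2\frac{e^{-2t}}{1-e^{-2t}},$$
which is exactly the claimed bound.

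\textbf{Main obstacle.} The delicate part is Step 2, specifically the identity in Theorem \ref{entropy eq}. It requires integrations by parts with vanishing boundary terms, which needs enough Gaussian decay of $p_s$, $q_s$ and their gradients; this holds for $s>0$ because the initial laws have compact support. It also requires convergence of $\int_t^\infty I\,\rd s$, which Step 1 guarantees for every $t>0$. At $t=0$ both sides of the claimed bound are infinite, so the statement is to be read for $t>0$.
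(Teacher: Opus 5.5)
Your proposal is correct and follows the same derivation the paper embeds in the statement. You square the pointwise bound $2R\,e^{-t}/(1-e^{-2t})$ from Theorem~\ref{thm mse vp} to control $I(p_t\|q_t)$, integrate the identity $\frac{\rd}{\rd s}\mathrm{KL}(p_s\|q_s)=-I(p_s\|q_s)$ of Theorem~\ref{entropy eq} over $[t,\infty)$, and evaluate the resulting integral; your extra steps (showing $\mathrm{KL}(p_T\|q_T)\to 0$ via joint convexity and the Gaussian kernel, the substitution $u=e^{-2s}$, and the correction $\sigma^2(t)=1-e^{-2t}$) fill in points the paper leaves implicit.
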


\subsection{Conclusion}
Via Theorem \ref{thm mse vp} and \ref{VPthm}, we can get Theorem \ref{KL-bound} and \ref{thm ent-cost}, which both bound the KL-divergence $\mathrm{KL}(p_t \,\|\, q_t)$.  

We can observe that these two approaches are closely related in spirit: 

\begin{itemize}
    \item The MSE-bound approach (Theorem \ref{KL-bound}) directly controls the relative Fisher information 
    \[
    I(p_t \,\|\, q_t) = \mathbb E_{p_t}[\,|s_{p_t}-s_{q_t}|^2\,],
    \] 
    and then integrates it over time to obtain an explicit upper bound for the KL-divergence. 

    \item The Harnack inequality approach (Theorem \ref{thm ent-cost}) instead provides a pointwise control on the semigroup, which, via coupling and Wasserstein distances, leads to a KL upper bound of the form
    \[
    \mathrm{KL}( P_t\nu\;\|\; P_t\mu) \le S_K(t)\, W_1(\mu,\nu) \le S_K(t)\, W_2(\mu,\nu).
    \]

    \item In essence, both methods link the KL divergence at time $t$ to some notion of discrepancy at the initial time: MSE-bound does it via the score difference (relative Fisher information), while Harnack-bound does it via transport distances ($W_1$ or $W_2$). The MSE bound can be seen as a “local-in-space” version of the Harnack control: if the pointwise kernel control from Harnack implies a bound on $\nabla \log p_t$, then integrating it yields a Fisher-information-type bound. Thus, the two approaches are complementary perspectives on how initial differences propagate under the dynamics of the SDE.
\end{itemize}

This observation highlights that controlling either the score differences or the pointwise semigroup can provide rigorous quantitative bounds on the evolution of KL divergence in diffusion processes.

\section{Algorithm Comparison}\label{sec alg}
Figure~\ref{algorithm} compares standard CFG and C$^2$FG. At each timestep $t$ during generation, the C$^2$FG update replaces the standard CFG as follows:
\[
\epscfgnc(\x_t) = \epsnullnc(\x_t) + \omega(t) \big[\epscond(\x_t) - \epsnullnc(\x_t)\big].
\]
\begin{figure*}[!ht]
\begin{minipage}{.48\textwidth}
    \begin{algorithm}[H]
    \small
           \caption{Reverse Diffusion with CFG}
           \label{alg:cfg}
            \begin{algorithmic}[1]
             \Require $\x_T \sim \Nc(0, \rmI_d), 0 \leq \textcolor{cfg}{\omega} \in \mathbb{R}$
             \For{$i=T$ {\bfseries to} $1$}
                
                 \State{$\epscfgnc(\x_t) = \epsnullnc(\x_t) + \textcolor{cfg}{\omega} [\epscond(\x_t) - \epsnullnc(\x_t)]$}
                 \State{$\tweediecfgnc(\x_t) \gets (\x_t - \sqrt{1-\bar\alpha_t}\epscfgnc(\x_t))/\sqrt{\bar\alpha_t}$}
                 \State{$\x_{t-1} = \sqrt{\bar\alpha_{t-1}} \tweediecfgnc(\x_t) + \sqrt{1-\bar\alpha_{t-1}}\epscfgnc(\x_t)$}
              \EndFor
              \State {\bfseries return} $\x_0$
            \end{algorithmic}
    \end{algorithm}
\end{minipage}
\begin{minipage}{.48\textwidth}
    \begin{algorithm}[H]
           \small
           \caption{Reverse Diffusion with Our Method}
           \label{alg:cfg++}
            \begin{algorithmic}[1]
             \Require $\x_T \sim \Nc(0, \rmI_d),\textcolor{cfgpp}{\omega(t)} \in \text{C}[0,+\infty)$
             \For{$i=T$ {\bfseries to} $1$}
                 
                 \State{$\epscfgnc(\x_t) = \epsnullnc(\x_t) + \textcolor{cfgpp}{\omega(t)} [\epscond(\x_t) - \epsnullnc(\x_t)]$}
                 \State{$\tweediecfgnc(\x_t) \gets (\x_t - \sqrt{1-\bar\alpha_t}\epscfgnc(\x_t))/\sqrt{\bar\alpha_t}$}
                 \State{$\x_{t-1} = \sqrt{\bar\alpha_{t-1}} \tweediecfgnc(\x_t) + \sqrt{1-\bar\alpha_{t-1}}\epscfgnc(\x_t)$}
            \EndFor
            \State {\bfseries return} $\x_0$
            \end{algorithmic}
    \end{algorithm}
\end{minipage}
\caption{Comparison between reverse diffusion process by CFG and C$^2$FG. Our C$^2$FG guidance weight $\omega(t)$ is a time-decay function. }
\label{algorithm}
\end{figure*}

\section{Additional Experiments}\label{more exp}
\noindent \textbf{More Visualized Analysis on Theorem \ref{thm mse vp}.} In Figure \ref{fig:score ratio}, each pixel in the heatmap corresponds to the logarithmic ratio of the conditional prediction to the unconditional prediction at a specific spatial location and channel. A value of zero (shown as white) indicates no difference (ratio=1). Positive values (\textcolor{red}{red}) indicate amplification of the conditional prediction relative to the unconditional one, while negative values (\textcolor{blue}{blue}) indicate suppression. Importantly, the further a pixel’s value deviates from zero—whether red or blue—the larger the discrepancy between the two predictions. Thus, both strong red and strong blue regions highlight locations where the conditional and unconditional outputs differ most significantly. 

Building on Theorem \ref{thm mse vp}, these heatmaps provide a visual representation of how the score discrepancy evolves over time and across spatial locations. In particular, the early timesteps (larger $t$ indices in the backward diffusion process) show relatively mild color variations, consistent with the theoretical bound $\|\nabla \log p - \nabla \log \tilde p\| \propto \alpha(t)/\sigma^2(t)$, which predicts smaller score differences at well-mixed later times. Conversely, at timesteps closer to the end of the reverse diffusion (smaller $t$ indices), the heatmaps exhibit more pronounced red and blue regions, indicating larger deviations between conditional and unconditional predictions. This aligns with the theoretical observation that the MSE between scores can be large near small diffusion times, where initial distribution differences are amplified. Therefore, the heatmaps not only highlight spatially localized discrepancies but also corroborate the temporal trend predicted by Theorem \ref{thm mse vp}, illustrating that both strong positive (red) and negative (blue) regions correspond to locations and timesteps with significant score mismatch.

\begin{figure}[H]
    \centering
    \includegraphics[width=0.9\linewidth]{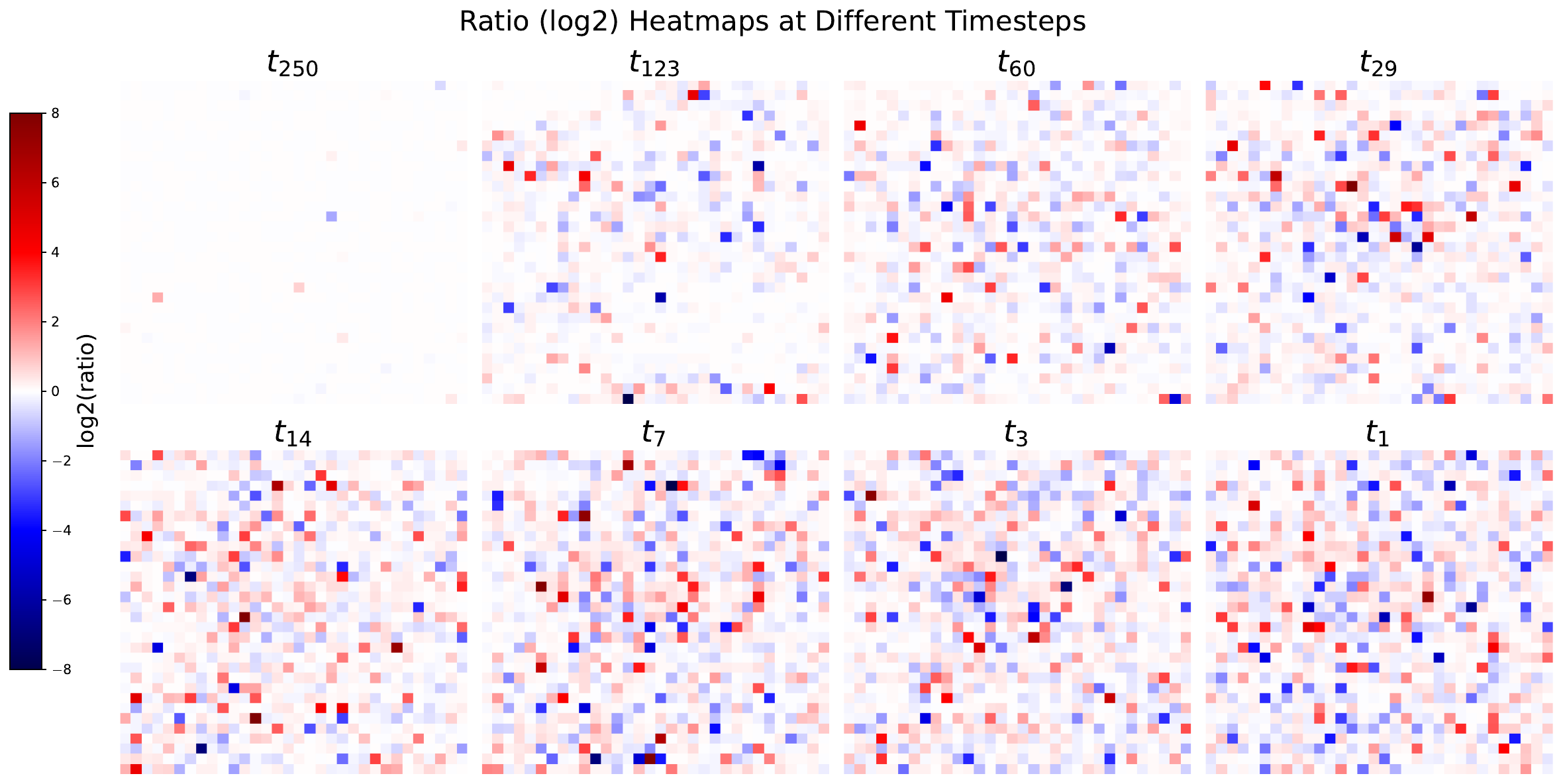}
    \caption{Heatmaps of the logarithmic ratio ($\log_2$) between conditional and unconditional predictions at selected timesteps. White indicates no difference (ratio=1), while \textcolor{red}{red} and \textcolor{blue}{blue} highlight amplification and suppression, respectively. Stronger colors denote larger deviations between the two predictions.}
        \label{fig:score ratio}
\end{figure}
\noindent \textbf{Comparisons of various forms of $\omega(t)$. } As shown in Figure~\ref{fig:IS-fid}, we compare the performance of our method with the DiT-XL/2 baseline~\citep{Peebles2022DiT} under a fixed parameter $\lambda = 1.0$. We observe that the curve corresponding to our method consistently lies below that of the baseline, indicating a strictly better IS–FID trade-off.
In Figure~\ref{fig:IS-fid10k}, we further evaluate several alternative choices of the scheduling function $\omega(t)$ in \cite{wang2024analysis}, including
$\sin\big((t/t_m)\pi \big)$,
$t/t_m$,
$1 - t/t_m$,
together with our proposed formulation($\exp(1 - t/t_m)$), whose trends are shown in Figure \ref{fig:function}.
We observe that certain choices such as sine-based $\omega(t)$ perform even worse than the DiT baseline. Besides, although some of these functions share a broadly similar decreasing trend with our design, they are not aligned with the approximate exponential upper bound derived from our framework. Consequently, their empirical IS–FID trade-off performance is consistently inferior to ours. 
\begin{figure}[!htb]
        \centering
        \includegraphics[width=0.6\linewidth]{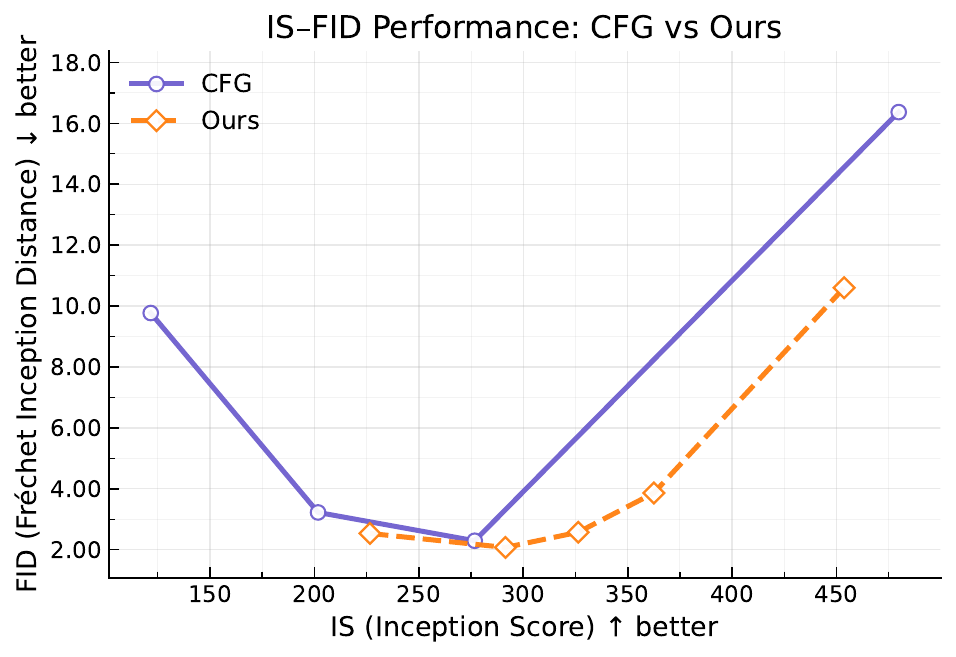}
        \caption{Impact of the initial schedule weight $\omega_0$ on IS–FID performance (with fixed $\lambda=1.0$, 250 inference steps).}
        \label{fig:IS-fid}
\end{figure}

\begin{figure}[!htb]
    \centering
    
    \begin{subfigure}[H]{0.47\textwidth}
        \centering
        \includegraphics[width=\linewidth]{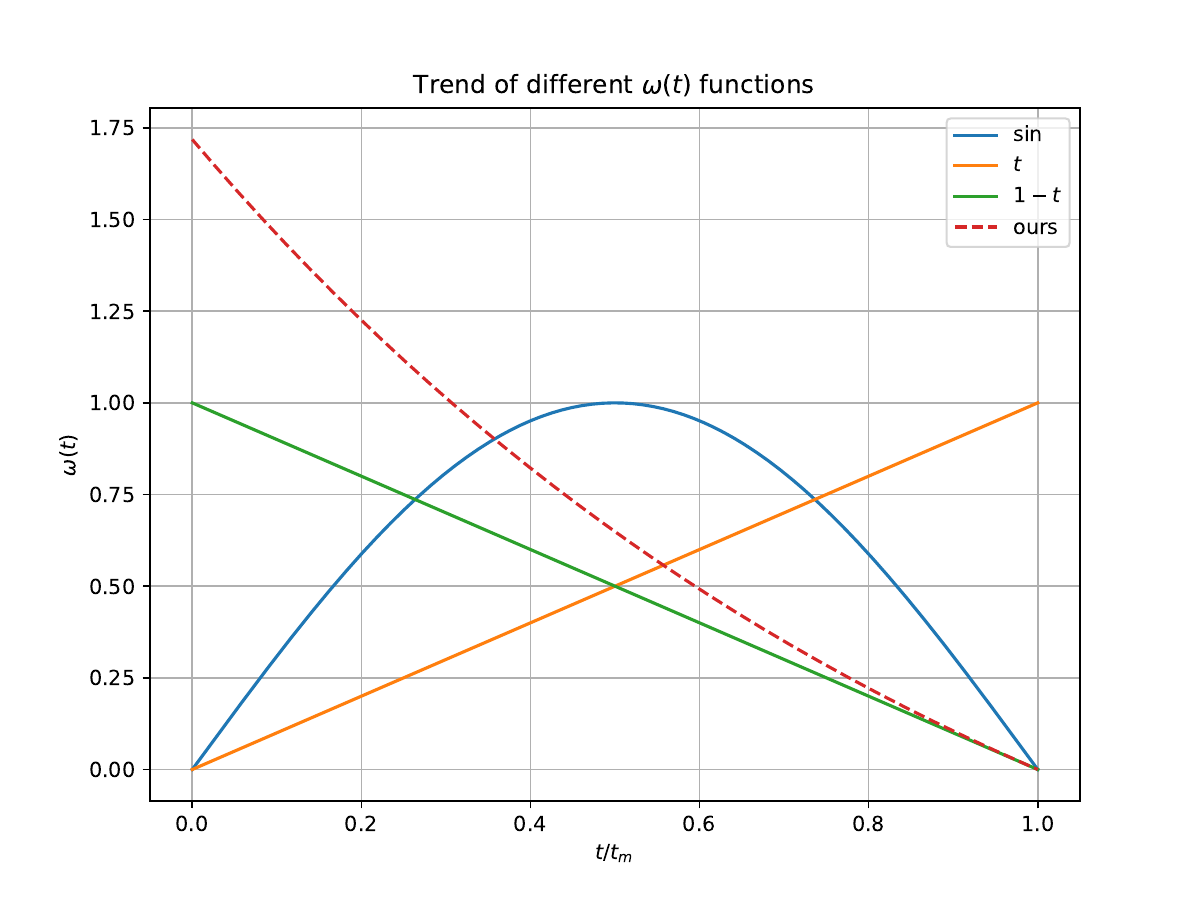}
        \caption{Trend of various $\omega(t)$}
        \label{fig:function}
    \end{subfigure}
    \hfill 
    \begin{subfigure}[H]{0.49\textwidth}
        \centering
        \includegraphics[width=\linewidth]{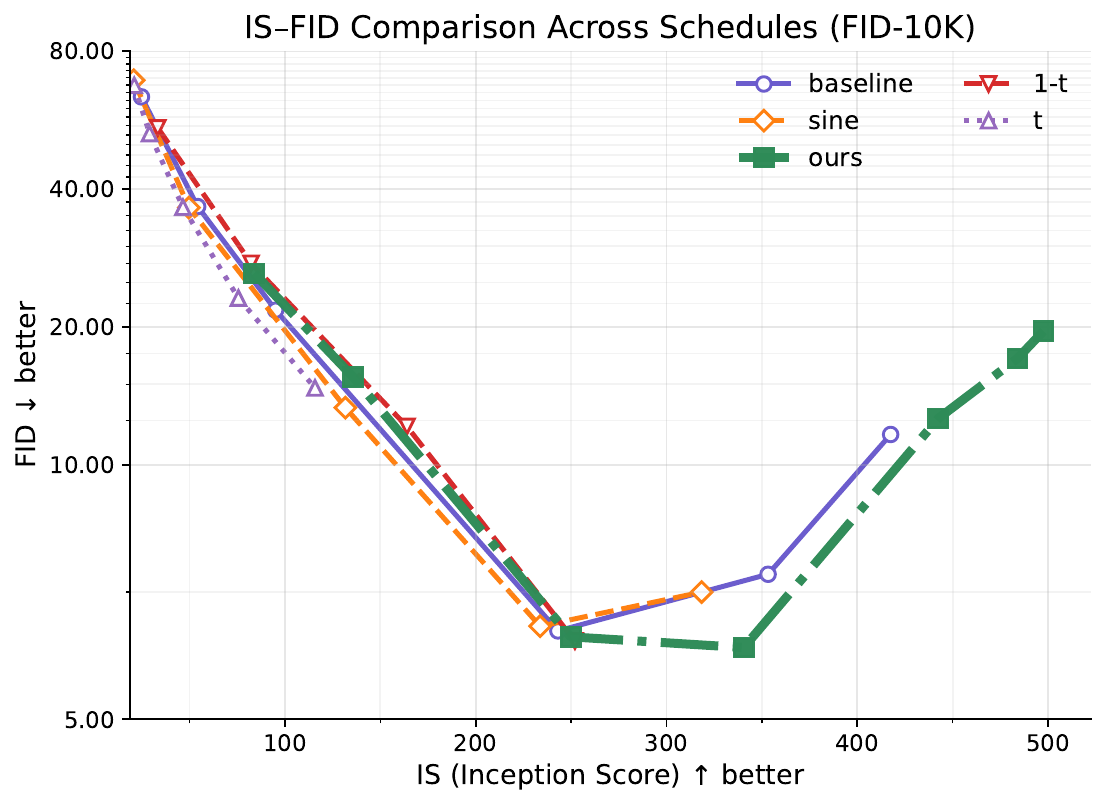}
        \caption{Effect of guidance scale $\omega_0$ on IS–FID curves (50 steps, 10K samples)}
        \label{fig:IS-fid10k}
    \end{subfigure}
    
    \caption{ 
        Comparison of IS–FID performance under different hyperparameter settings on DiT-XL/2 and ImageNet-256. 
    }
\end{figure}

These results highlight that the improvement does not merely come from tuning the scaling magnitude, but primarily from how the temporal modulation interacts with the diffusion dynamics. In particular, our schedule suppresses error amplification in early steps while preserving semantic consistency in later denoising stages, yielding more stable and efficient generation across the entire sampling trajectory.

\begin{table}[!htb]
\vspace{1mm}
\begin{center}
\scalebox{0.9}{
\begin{tabular}{lcccccc}
\toprule
\multicolumn{7}{l}{{\bf\normalsize ImageNet( 256$\times$256 )} } \\
\toprule
\multirow{1}{*}{\bf{Model~~\textcolor{red}{25} inference timesteps}} 
& FID$\downarrow$& IS $\uparrow$ &  Prec$\uparrow$ & Rec$\uparrow$ \\
\midrule
DiT-XL/2 (baseline, $\omega=1.5$)   & 11.88 & 192.13 & 0.7176 &\textbf{0.6651}  \\
DiT-XL/2+$\beta$-CFG ($\omega=1.5, a=b=2.0$)   & 8.42 & 254.42 & 0.8038 &0.6026  \\
DiT-XL/2 + RAAG ($\omega_{\max} =18.0, \alpha=12.0$)     & 21.57 & \textbf{444.57}  & \textbf{0.8360} & 0.2636  \\
\textbf{DiT-XL/2 + Ours} ($\omega_0=1.0, \lambda=1.0$)     & \textbf{7.70}&294.03 & 0.7994 & 0.6357  \\
\bottomrule
\end{tabular}
}
\caption{Performance comparison between our method and existing adaptive CFG approaches on ImageNet-256.}
\label{tab:relatework}
\end{center}
\vspace{-4mm}
\end{table}

To further validate our approach, we compare it against recent time-varying strategies, specifically RAAG~\cite{zhu2025raagratioawareadaptive} and $\beta$-CFG~\cite{malarz2025classifier}. 
We follow the official hyperparameters from the original papers (\(\beta\)-CFG: \(a=b=2.0\); RAAG: \(\omega_{\max}=18.0, \alpha=12.0\)). 
These comparisons are conducted on the DiT-XL/2 model using ImageNet-256, with the results summarized in Table~\ref{tab:relatework}.
The quantitative comparison reveals that, under identical settings, our approach achieves superior overall performance, particularly in term of FID (7.70). While $\beta$-CFG yields marginally higher Precision, it significantly lags in all other metrics. Furthermore, we observe that though RAAG obtains the highest IS score (444.57) and the highest Precision, it suffers from severely degraded FID (21.57) and  Recall (0.2636). In other word, its guidance mechanism emphasizes semantic alignment (IS) rather than accurate distribution fitting (FID), leading to degraded performance. We attribute this to its design focus on text-to-image generation, which appears to generalize poorly to class-conditional settings.  In contrast, our method demonstrates superior generalization capabilities, proving robust across diverse tasks and model architectures.


\noindent \textbf{Analysis of Parameters in C$^2$FG.}
 As shown in Figure \ref{fig:w0} and \ref{fig:lambda}, $\omega_0$ sets the initial or maximum guidance strength, and $\lambda$ controls the rate of exponential decay. 
Moreover, Table \ref{tab:ablation lambda} presents an ablation study on the hyperparameter $\lambda$. While the results demonstrate that various $\lambda$ values are effective for enhancing performance, the best outcome is achieved with $\lambda=\log e=1.0$. The results indicate that this C$^2$FG design is effective. 

\begin{figure}[!htb]
    \centering
    
    \begin{subfigure}[H]{0.48\textwidth}
        \centering
        \includegraphics[width=\linewidth]{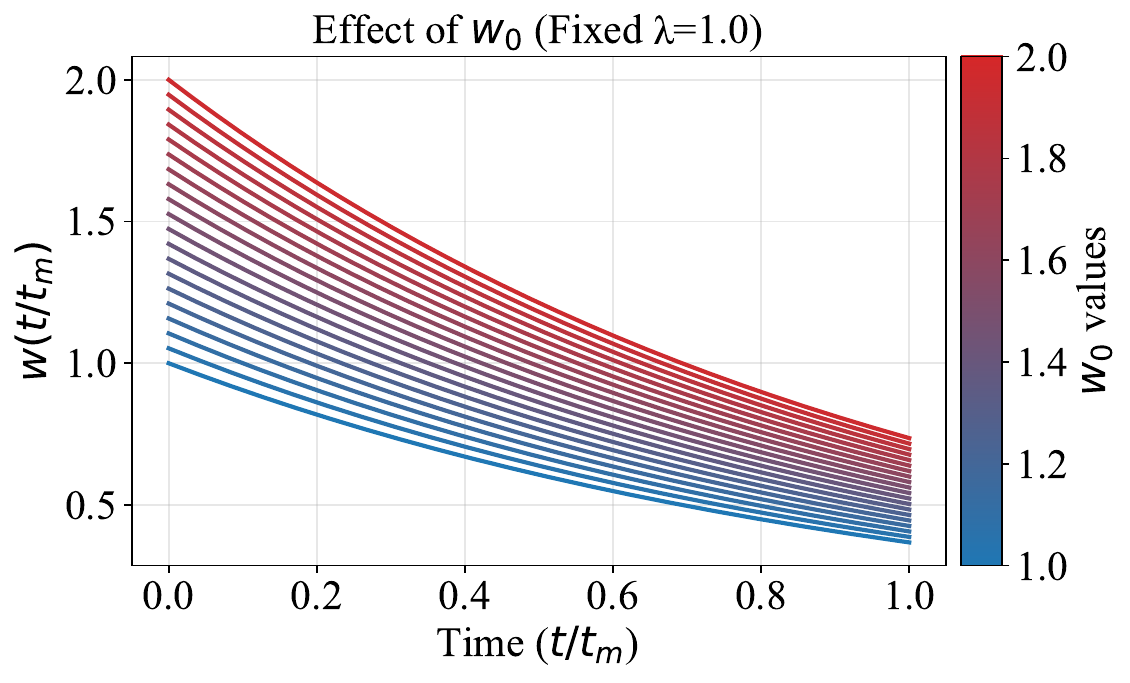}
        \caption{Effect of $\omega_0$ (Fixed $\lambda=1.0$)}
        \label{fig:w0}
    \end{subfigure}
    \hfill 
    \begin{subfigure}[H]{0.48\textwidth}
        \centering
        \includegraphics[width=\linewidth]{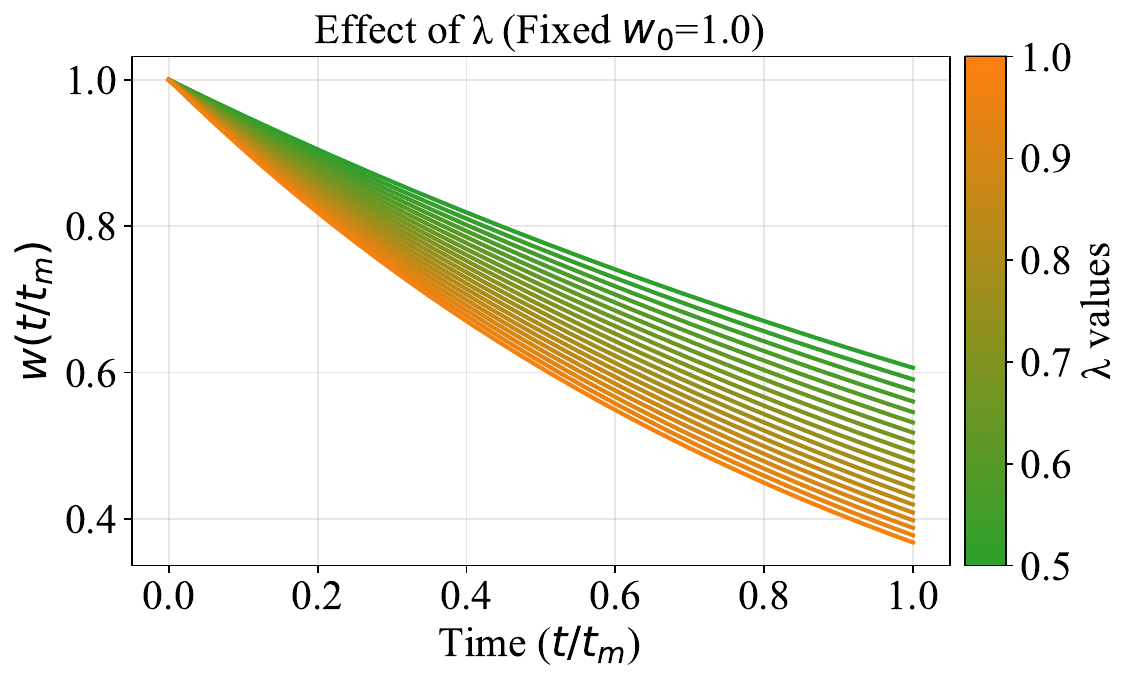}
        \caption{Effect of $\lambda$ (Fixed $\omega_0=1.0$)}
        \label{fig:lambda}
    \end{subfigure}
    
    \caption{ 
        (a) demonstrates the impact of initial weight $\omega_0$; 
        (b) illustrates how different $\lambda$ values affect the decay profile.
    }
\end{figure}

\begin{table}[ht]
\centering
\begin{tabular}{lc}
\toprule
\multicolumn{2}{l}{\bf ImageNet(256$\times$256), 50k samples, 250 SDE inference timesteps} \\
\midrule
Model & FID$\downarrow$ \\
\midrule
REPA (Fixed CFG = 1.35) & 1.80 \\
REPA ($\lambda=\log2$) & 1.68 \\
\textbf{REPA ($\lambda=1 (\log e)$)} & \textbf{1.51} \\
REPA ($\lambda=\log3$) & 1.58 \\
\bottomrule
\end{tabular}

\captionsetup{width=0.9\linewidth} 
\caption{Comparison between the different effect of $\lambda$, fixing $\omega_0=1.0$.}
\label{tab:ablation lambda}
\end{table}


\noindent \textbf{Results on More Framework.} In Table \ref{tab:ablation edm2}, we show the results of our C$^2$FG on autoguidance introduced by \cite{Karras2024autoguidance} with the model of EDM2 \citep{Karras2024edm2}. Autoguidance  involves two denoiser networks $D_0(x;\sigma,c)$ and $D_1(x;\sigma,c)$ and the  guiding effect is achieved by extrapolating between the two denoising results by a factor $\omega$:
$$D_{\omega}(x;\sigma_t,c) = \omega D_1(x;\sigma_t,c) + (1-\omega) D_0(x;\sigma_t,c),$$
based on their method, we make $\omega$ be a time-variance function $\omega(t)$ with the same formula of C$^2$FG: $\omega(t)=\omega_0\exp(1-t/t_{\max})$. As shown in Table \ref{tab:ablation edm2}, our dynamic guidance $\omega(t)$ consistently improves over the static guidance baseline. On ImageNet-64, where the model operates directly in the pixel domain, our method achieves lower FID and FD-DINOv2 \citep{Justin2024fddino}, indicating that dynamic weighting not only preserves fidelity but also enhances semantic alignment. On high-resolution ImageNet-512, which is considerably more challenging, we also observe clear gains under the same setting, confirming that the proposed C$^2$FG can robustly integrate with autoguidance across scales. These results highlight the generality of our approach: the time-dependent extrapolation scheme provides a more adaptive balance between fidelity and diversity than a fixed scalar weight.
\begin{table}[ht]
\centering
\begin{tabular}{lcc}
\toprule
\multicolumn{2}{l}{\bf ImageNet(64$\times$64)} \\
\midrule
Model & FID$\downarrow$ & $\textnormal{FD}_\textnormal{DINOv2}\downarrow$\\
\midrule
EDM2-S-autoguidance ($\omega=1.7$) & 1.044 & 56.3 \\
\textbf{EDM2-S-autoguidance+Ours}($\omega_0=0.9,\lambda=0.7$) & \textbf{1.028} & \textbf{52.7} \\
\midrule
\multirow{1}{*}{\bf ImageNet (512$\times$512), 10k samples} \\
\midrule
EDM2-S-autoguidance ($\omega=1.4$) & 5.27 & 121.2\\
\textbf{EDM2-S-autoguidance+Ours}($\omega_0=0.9,\lambda=0.5$) & \textbf{5.15} & \textbf{116.7} \\
\bottomrule
\end{tabular}
\captionsetup{width=0.9\linewidth} 
\caption{We evaluated conditional image generation on ImageNet with EDM2 and Autoguidance.}
\label{tab:ablation edm2}
\end{table}

\noindent \textbf{Denoising Process.}
As shown in Figure \ref{fig:compare}, we provide a qualitative comparison of intermediate decoding results between our C$^2$FG and the baseline across the denoising trajectory. From step 250 down to 50, both methods generate visually similar results. However, in the final refinement stage (from step 50 to 0), the difference becomes more pronounced: our C$^2$FG produces sharper structures and more coherent details, highlighting the benefit of dynamically adjusting the guidance strength in the later denoising steps.
\begin{figure}[!htb]
    \centering
    \includegraphics[width=0.85\linewidth]{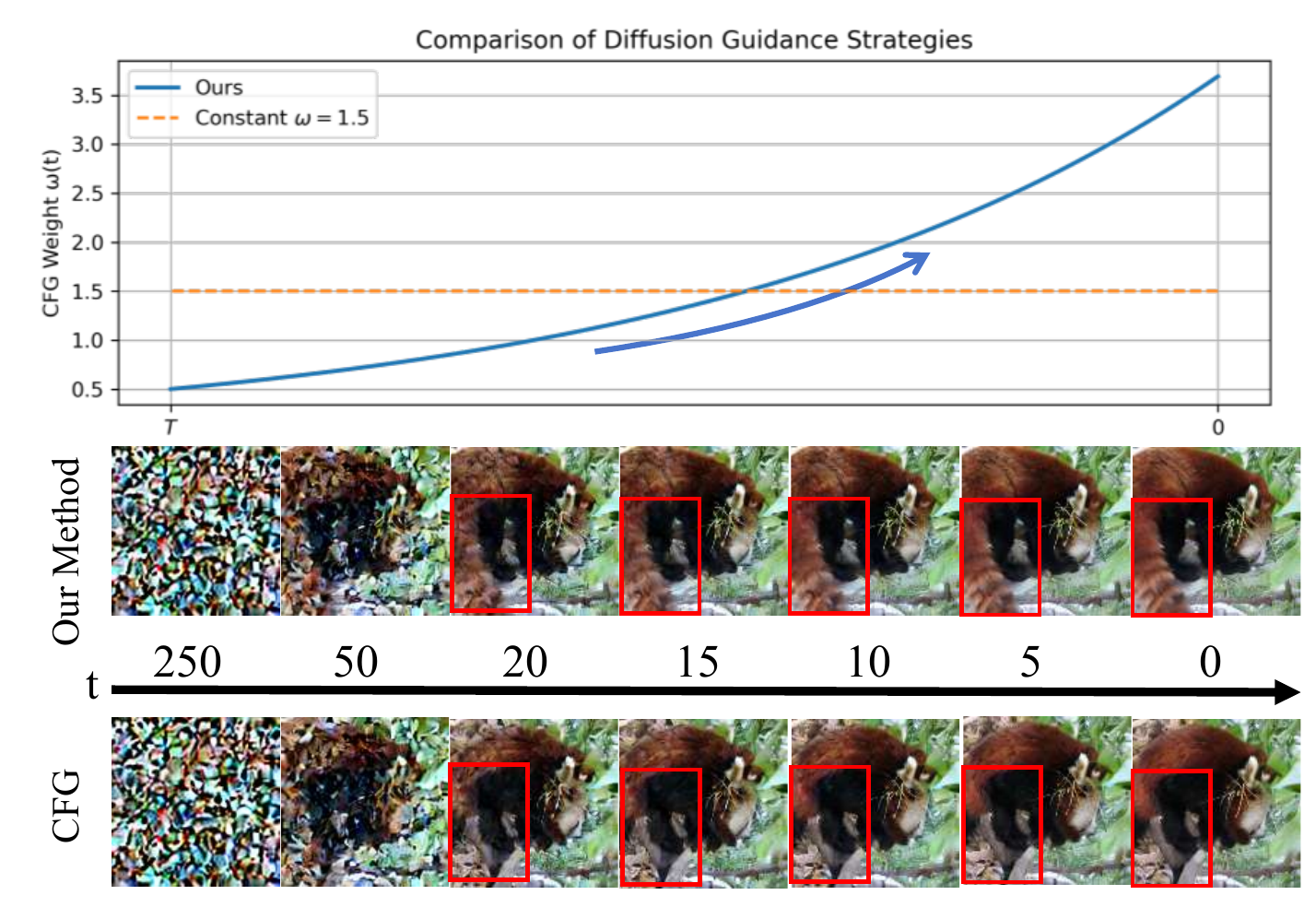}
    \caption{Comparison between results during the denoising process  of C$^2$FG and Baseline.}
    \label{fig:compare}
\end{figure}

\noindent \textbf{Additional Results.}
  In Table \ref{tab:table} we compare our methods with other methods on different models. On SD1.5 \cite{rombach2022high}, C$^2$FG achieves the \textbf{best FID\&CLIP}, surpassing CFG++ \cite{chung2025cfg} and $\beta$-CFG \cite{malarz2025classifier}. It also consistently improves Flux \cite{labs2025flux1kontextflowmatching} and SD3 \cite{esser2024scalingrectifiedflowtransformers}. On SiT \cite{yu2025repa}, C$^2$FG outperforms FDG \cite{Sadat2025GuidanceIT}. Thus C$^2$FG consistently outperforms other methods across diverse tasks, verifying its strong generality. And Figure \ref{fig:img} shows our visualized results on T2I tasks. Additionally, Figure \ref{fig:vis} shows additional results using our C$^2$FG method on DiT and SiT models.

\begin{table}[t]
\centering
\caption{Additional Comparisons. \textbf{Left:} Comparisons with dynamic guidance methods on SD1.5 (MS-COCO) and SiT  (ImageNet). \textbf{Right:} Results on modern T2I models (Flux, SD3).}
\label{tab:table}
\resizebox{0.9\textwidth}{!}{
\begin{tabular}{c c c c c |c c  | c c c}
\toprule
Compare&\multicolumn{4}{c}{Fixed SD1.5,MS-COCO} &\multicolumn{2}{c}{Fixed SiT,ImageNet} &\multicolumn{3}{c}{Compare T2I models (CLIP$\uparrow$)} \\
\cmidrule(lr){2-5} \cmidrule(lr){6-7} \cmidrule(lr){8-10}
Method & CFG & CFG++ & $\beta$-CFG & C$^2$FG $\textbf{(4,1)}$ &FDG&C$^2$FG\textbf{(1.7,0.15)}& Models & Flux $\textbf{(1.5,1)}$ & SD3 $\textbf{(5,1)}$ \\

\midrule
FID(10k) $\downarrow$
& 19.32 & 18.87 & 16.74 & \textbf{16.71} &6.15&\textbf{3.20}& CFG & 31.4 & 31.4 \\
CLIP(10k) $\uparrow$
& 32.0 & 32.0 & 31.7 & \textbf{32.0} &--&--& C$^2$FG
& \textbf{31.5} & \textbf{31.5} \\
\bottomrule
\end{tabular}

}

\end{table}

\begin{figure}[tb] \centering
    \caption{\textbf{Visual Comparison.} C$^2$FG produces images that better align with the text prompt  than standard CFG, yielding more faithful details, consistent with the quantitative gains in Table \ref{tab:table}.} \label{fig:img}
    \includegraphics[width=0.8\textwidth]{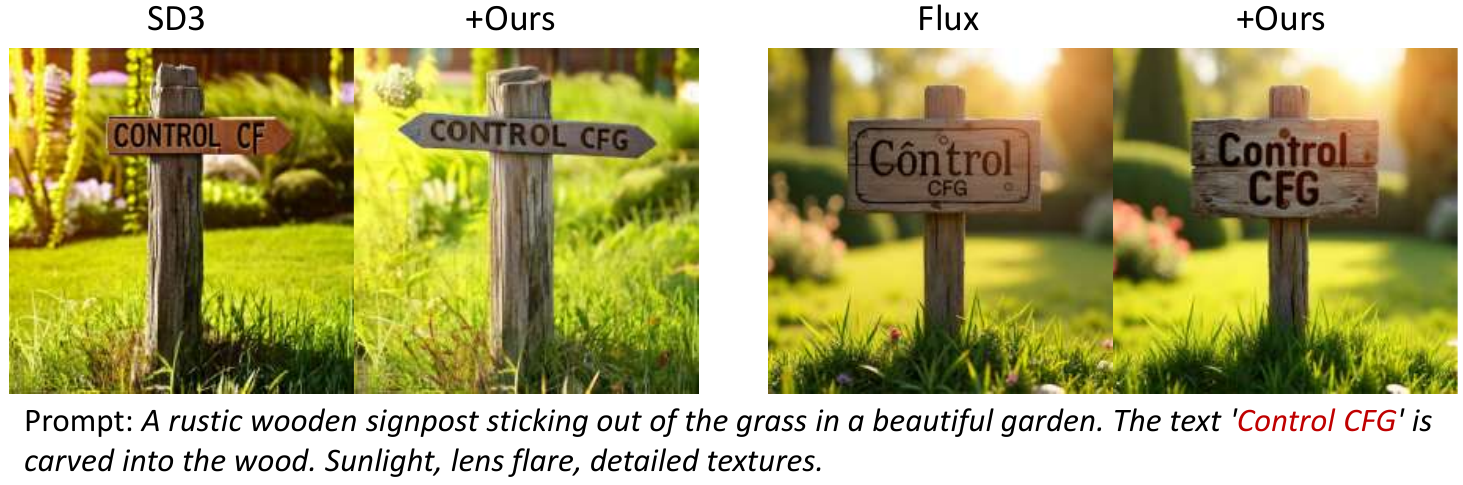}
    
\end{figure}

\begin{figure}[H]
    \centering
    \begin{subfigure}[H]{0.9\textwidth}
        \centering
        \includegraphics[width=\linewidth]{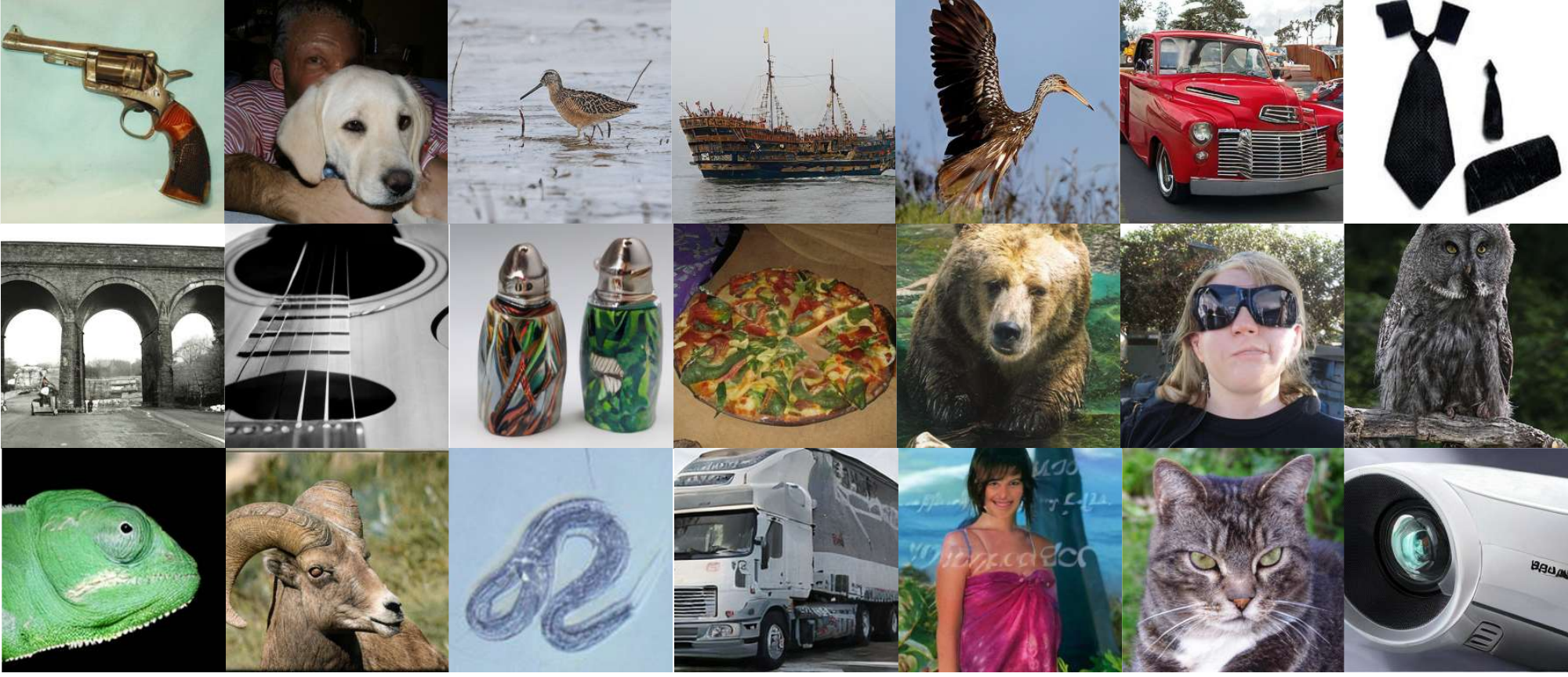}
        \caption{Images generated by the DiT-XL/2 model with C$^2$FG on ImageNet-256.}
        \label{fig:vis_dit}
    \end{subfigure}
    
    
    \centering
    \begin{subfigure}[H]{0.9\textwidth}
        \centering
        \includegraphics[width=\linewidth]{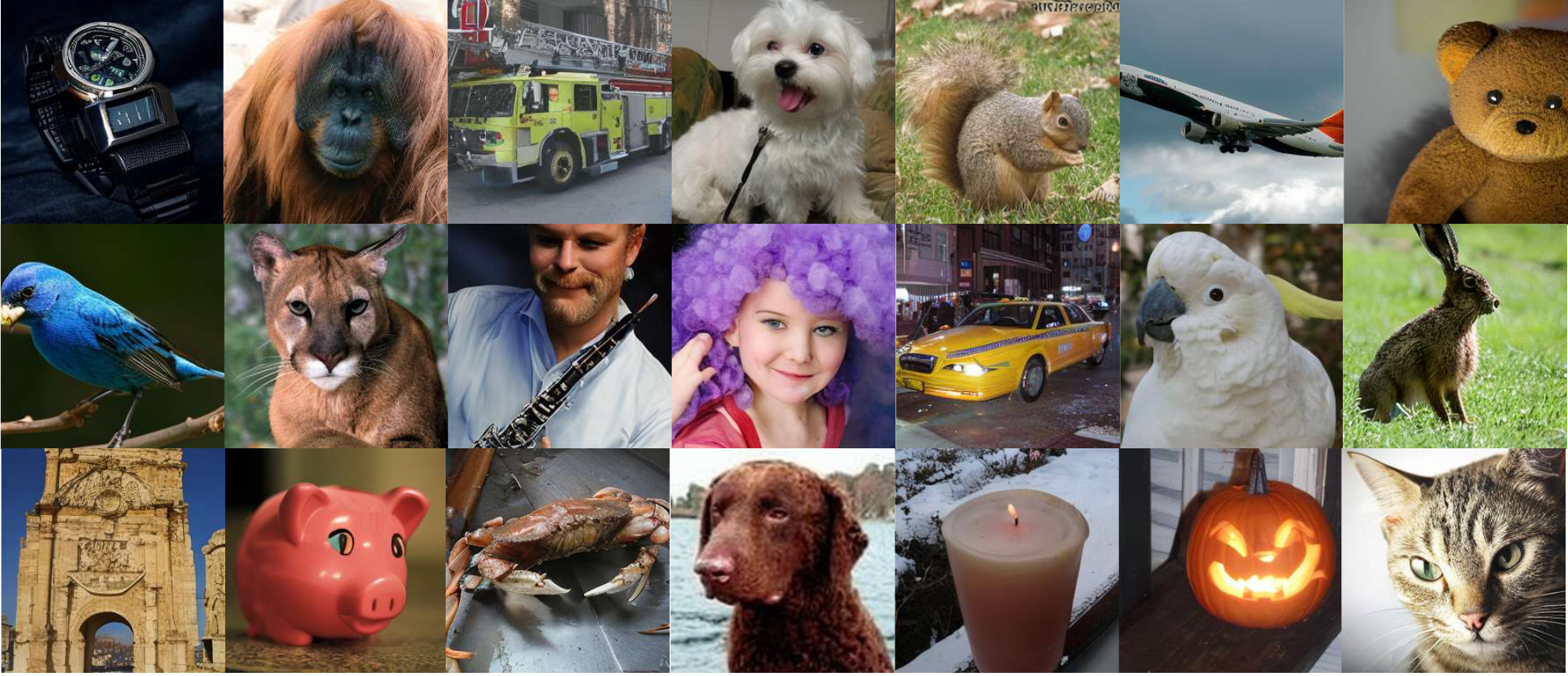}
        \caption{Images generated by the SiT-XL/2 (REPA) model with C$^2$FG on ImageNet-256.}
        \label{fig:vis_repa}
    \end{subfigure}
    
    \caption{ 
         Additional results for C$^2$FG.
    }
    \label{fig:vis}
\end{figure}


\end{document}